\newcommand\indep{\protect\mathpalette{\protect\independenT}{\perp}}
\def\independenT#1#2{\mathrel{\rlap{$#1#2$}\mkern2mu{#1#2}}}
\DeclareMathOperator*{\argmin}{arg\,min}
\newtheorem{thmlem}{Lemma}
\newtheorem{thmcol}{Corollary}
\newtheorem{thmthm}{Theorem}
\newtheorem{thmasmp}{Assumption}
\newtheorem*{thmidasmp*}{Identifying assumptions}
\theoremstyle{definition}
\newtheorem{thmrem}{Remark}
\newtheorem*{proofsketch*}{Proof sketch}
\newenvironment{manualtheorem}[1]{%
  \manualtheoreminner
}{\endmanualtheoreminner}
\newenvironment{manuallemma}[1]{%
  \manuallemmainner
}{\endmanuallemmainner}
\def\E{\mathbb{E}}
\def\V{\mathrm{Var}}
\def\cov{\mathrm{Cov}}
\def\trace{\mathrm{Tr}}
\def\mse{\mathrm{MSE}}
\def\OLS{\mathrm{OLS}}
\def\LuPTS{\mathrm{LuPTS}}
\def\bfX{\mathbf{X}}
\def\bfY{\mathbf{Y}}
\def\bfx{\mathbf{x}}
\def\bfy{\mathbf{y}}
\def\bfR{\mathbf{R}}
\def\bfeps{\boldsymbol{\epsilon}}
\def\cF{\mathcal{F}}
\def\cG{\mathcal{G}}
\def\cH{\mathcal{H}}
\def\cN{\mathcal{N}}
\def\bbR{\mathbb{R}}
\def\hbeta{\hat{\beta}}
\def\htheta{\hat{\theta}}
\def\hf{\hat{f}}
\def\hg{\hat{g}}
\def\hh{\hat{h}}
\def\hA{\hat{A}}
\def\hX{\hat{X}}
\def\hY{\hat{Y}}
\def\olR{\overline{R}}
\begin{document}

%

%

\twocolumn[

\aistatstitle{Using Time-Series Privileged Information for Provably Efficient Learning of Prediction Models}

\aistatsauthor{Rickard K.A. Karlsson$^\ast$ \And Martin Willbo$^\ast$ \And  Zeshan Hussain}
\aistatsaddress{ Delft University of Technology \And  Research Institute of Sweden \And Massachusetts Institute \\of Technology}

\aistatsauthor{Rahul G. Krishnan \And David Sontag \And Fredrik D. Johansson}
\aistatsaddress{University of Toronto \And Massachusetts Institute \\of Technology \And Chalmers University of Technology} 
 \begin{center} $\ast$ equal contribution \end{center}
]

\runningauthor{Karlsson, Willbo, Hussain, Krishnan, Sontag, Johansson}

\begin{abstract}
    We study prediction of future outcomes with supervised models that use privileged information during learning. The privileged information comprises samples of time series observed between the baseline time of prediction and the future outcome; this information is only available at training time which differs from the traditional supervised learning. Our question is when using this privileged data leads to more sample-efficient learning of models that use only baseline data for predictions at test time.
    We give an algorithm for this setting and prove that when the time series are drawn from a non-stationary Gaussian-linear dynamical system of fixed horizon, learning with privileged information is more efficient than learning without it.  On synthetic data, we test the limits of our algorithm and theory,  both when our assumptions hold and when they are violated. 
    On three diverse real-world datasets, we show that our approach is generally preferable to classical learning, particularly when data is scarce. Finally, we relate our estimator to a distillation approach both theoretically and empirically.
\end{abstract}

\section{INTRODUCTION}

\begin{figure}[t]
    \centering
    \vspace{.5em}
    \includegraphics[width=\columnwidth]{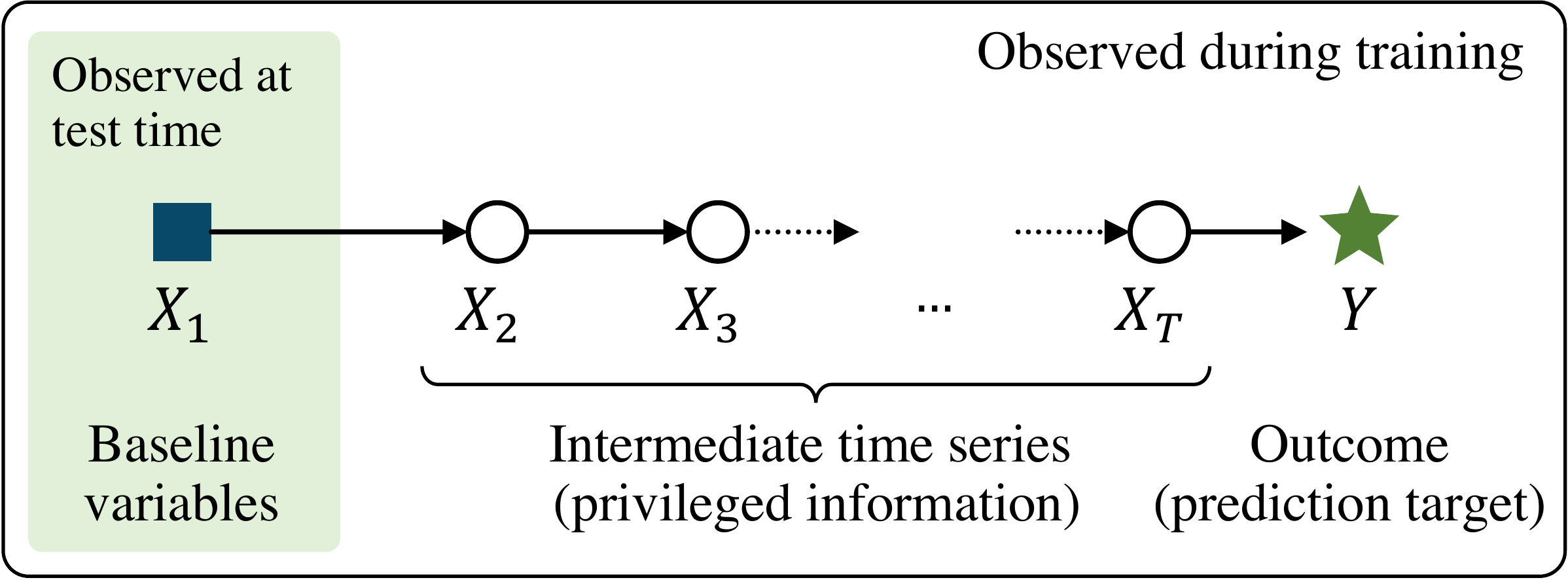}
    \caption{\label{fig:illustration}\small Prediction with intermediate time-series privileged information. The goal at test time is to predict $Y$ based only on $X_1$ but the learner has access to samples from the full series $X_1, X_2, ..., X_T, Y$ during training.}
\end{figure}

Prediction of future outcomes is a central learning problem in many domains. For example, accurate prediction of the progression of chronic disease allows for identification of patients at higher risk and may be used to trigger interventions. Standard supervised learning algorithms for this task minimize the empirical risk in predicting the outcome using features collected at a baseline time point. When data is scarce, variance can plague this approach and reduce its potential impact. However, in practice, data is often collected not only at the time for prediction and the time of the outcome, but at multiple time points between them; in healthcare, disease markers, lab values and treatments  of patients are recorded at regular intervals. These data that could be used for more efficient learning. 

Making use of variables for learning that are unavailable at test time has been called \emph{learning using privileged information} (LuPI)~\citep{vapnik2009new} or \emph{learning with side information}~\citep{jonschkowski2015patterns}. A general way to utilize privileged information is via distillation ~\citep{LopSchBotVap16, hayashi2019long}, where a student model is trained to minimize its error in predicting both true labels and soft targets generated by a teacher trained on the privileged information.
While these paradigms have shown promise both theoretically~\citep{vapnik2009new} and empirically~\citep{hayashi2019long,tang2019retaining}, performance guarantees for practical algorithms remain elusive~\citep{serra2014exploring}. 

In particular, it remains unclear \emph{when} learning using privileged information is preferable to learning without it---as discussed by~\citet{jonschkowski2015patterns}, incorporating privileged information in learning can harm more than it helps. 
This work studies a special case in which the privileged information constitutes the intermediate part of a time series starting with baseline features and ending with the target outcome, see Figure~\ref{fig:illustration}. 

\paragraph{Contributions.} We study a strategy that uses privileged information to learn the dynamics of the full time series
and makes test-time predictions from baseline by recursively simulating the dynamics and the outcome. Instantiating this idea in Gaussian-linear dynamical systems, we compare it to the best unbiased estimator that uses only baseline data---ordinary least squares regression.
In this (well-specified) setting, we prove using a Rao-Blackwell argument~\citep{radhakrishna1945information,blackwell1947conditional}  that our recursive strategy is \emph{always} more sample efficient, without bias and with lower variance, compared to learning without privileged information. Additionally, we show that combining this strategy  with distillation learning leads to a principled way of trading off bias and variance in the misspecified case.   

We study the limits of our theory in synthetic experiments, where our assumptions hold and where they do not. We find that the gap identified by our theory, between our method and the best baseline, grows when more time steps are available and assumptions hold and that it decreases when assumptions are violated (bias is non-zero). Finally, we apply the idea to three diverse real-world problems, where the underlying data-generating process is unknown, showing multiple cases where the approach improves over both non-LuPI and LuPI baselines, and cases where performance is worse.

%
%
\section{PROBLEM SETTING}

We learn models that use baseline variables $X_1 \in \mathbb{R}^d$ to predict outcomes $Y\in \mathbb{R}$, see Figure~\ref{fig:illustration}. For a given loss function, $\mathcal{L} : \bbR \times \bbR \rightarrow \bbR$, our goal is to find a function $h \in \cH \subseteq \{h : \bbR^d \rightarrow \bbR\}$ \emph{of only the baseline variables $X_1$}, which minimizes the expected risk over the variables with respect to a distribution~$p$, 
\begin{equation*}
R(h) \coloneqq \E_{X_1, Y}[\mathcal{L}(h(X_1), Y)]~.
\end{equation*}
This goal is shared with classical supervised learning. However, in our setting, the learner has access to \emph{privileged information} in the form of time series sampled from states $X_2, ..., X_T$, observed chronologically after $X_1$ and before $Y$. The information is \emph{privileged} as it is unavailable when the model is used. Time series $x_{i,1}, ..., x_{i,T}, y_i$, indexed by $i=1, ..., m$, are observed as independent random samples from an unknown distribution $p(X_1, ..., X_T, Y)$. For simplicity,\footnote{That $X_1, ..., X_T$ have the same dimension is not necessary for our main result but simplifies exposition.} we assume that $X_t \in \mathbb{R}^d$ and define the data matrices $\mathbf{X}_t = [x_{1, t}, ..., x_{m, t}]^\top$, for $t=1, ..., T$, and $\mathbf{Y} = [y_1, ..., y_m]^\top$, where rows represent  different series. We let $D = (\bfX_1, ..., \bfX_T, \bfY)$ denote the full dataset.

Without additional assumptions, learning using privileged information (LuPI) need not lead to smaller risk~\citep{vapnik2009new}.
Here, we set out to identify conditions on the distribution $p$ under which there is a LuPI algorithm which is provably better than any algorithm learning only from samples of $(X_1, Y)$. 
Throughout, unless stated otherwise, we assume that the full time series $X_1, X_2, ..., X_T, Y$ is Markov.

\begin{thmasmp}[Markov time series]\label{asmp:markov}%
For all time points $t \in \{3, ..., T\}$, 
$$
X_t \indep X_{1}, ..., X_{t-2} \mid X_{t-1} \;\mbox{  }\;
Y \indep X_{1}, ..., X_{T-1} \mid X_T~.
$$%
\end{thmasmp}%
Under Assumption~\ref{asmp:markov},  $X_1$ is predictive of $Y$ \emph{only through} the privileged information.

\section{LEARNING USING PRIVILEGED TIME SERIES IN LINEAR DYNAMICAL SYSTEMS}
\label{sec:learning}
\SetKwInput{KwAssmp}{Assumption}
\SetKwInput{KwFlag}{Flag}
\SetKwComment{Comment}{$\triangleright$\ }{}

\begin{algorithm}[t!]
    \KwFlag{Stationarity (True / False)}
    \KwData{$ \{(x_{i,1}, ..., x_{i,T}, y_i) \}^{m}_{i=1} \sim p^m(X_1, ..., X_T, Y)$}
    \vspace{.5em}
    \uIf{Stationarity}{
    $\displaystyle{
    \tilde{A} = \argmin_{A} \sum_{t=1}^{T-1} \sum_{i=1}^m \frac{\|  A^\top x_{i,t} - x_{i,t+1}\|_2^2}{m(T-1)}
    }$
    $\hA = \tilde{A}^{T-1}$
    }
    \Else {
        \For{$t = 1, ..., {T-1}$}{
            $\displaystyle{
            \hA_t = \argmin_{A} \sum_{i=1}^m \frac{\|A^\top x_{i,t} - x_{i,t+1}\|_2^2}{m}
            }
            $ 
        }
        $\hA = \hA_1 \hA_2 \cdots \hA_{T-1}$
    }
    $\hbeta = \argmin_{\beta} \frac{1}{m} \sum_{i=1}^m ( \beta^\top x_{i,T} -  y_i)^2$ 
    \vspace{.5em}
    
    \Return $\htheta = \hA \hbeta$
 \caption{Learning using privileged time series (LuPTS) in linear dynamical systems}
 \label{alg:lupts_lin}
\end{algorithm}

A natural strategy for predicting $Y$ in a Markov system is to successively predict $X_2$ from $X_1$, then $X_3$ from the prediction $\hX_2$, and so on. In time-series modeling, this is referred to as \emph{recursive} prediction, in contrast to \emph{direct} prediction~\citep{chevillon2007direct}. Unlike time-series modeling, we study prediction of outcomes $Y$, which are at a fixed time $T$ and distinct in nature from $X$. A survey of related work is found in Section~\ref{sec:related}.

We use the recursive strategy in a linear estimator which learns using privileged time series (LuPTS, Algorithm~\ref{alg:lupts_lin}). The prediction at each step $t$ is made using a learned linear model $\hX_{t+1} = \hA_t^\top X_t$, with $\hA_t \in \bbR^{d \times d}$. The final prediction is given by another linear model, $\hY = \hbeta^\top X_T$, with $\hbeta\in \bbR^d$, learned from samples of $(X_T, Y)$. At test time, only $X_1$ is observed, and the models are combined to form $\hY = (\hA_1 \dots \hA_{T-1}\hbeta)^\top X_1$. Algorithm~\ref{alg:lupts_lin} has a flag to indicate whether the transitions are assumed to be stationary. We begin by analyzing the non-stationary case.

We study Algorithm~\ref{alg:lupts_lin} in discrete-time Gaussian-linear dynamical systems, where the time series $X_1, ..., X_T$ and the outcome $Y$ evolve according to noisy linear Markov dynamics, as follows.

\begin{thmasmp}[Gaussian-linear system]
The privileged information and outcome evolve as
\begin{equation}
    \begin{aligned}
    X_t =& A_{t-1}^\top X_{t-1} + \epsilon_t \;\; \mbox{ for } t=2, ..., T \\
    Y =& \beta^\top X_T + \epsilon_Y
    \end{aligned}
    \label{eq:lin_dyn}
\end{equation}
where $A_t$ are a set of transition matrices that determine the behavior (and stability) of the system. Noise terms are assumed to be zero-mean Normal random variables, $\epsilon_t \sim \mathcal{N}(0, \Sigma)$, for t=2, ..., T, and $\epsilon_Y \sim \mathcal{N}(0,\sigma_Y^2)$. We make no assumption on the distribution of $X_1$.
\label{asmp:gauss}
\end{thmasmp}

Due to the linearity of the transitions and outcome, $Y$ is a linear function of the variable $X_t$ for any $t$. Most importantly, it is easy to show that $Y$ is also a Gaussian-linear function of $X_1$,
$$
Y = \theta^\top X_1 + \tilde{\epsilon} \;\; \mbox{ with } \;\; \theta = A_1 \cdots A_{T-1}\beta
$$
and
$$
\tilde{\epsilon} = \beta^\top \bigg(\sum_{t=2}^{T-1}\bigg[\prod_{t'=t}^{T-1}A_{t'}\bigg]^\top \epsilon_t + \epsilon_T\bigg) + \epsilon_Y~.
$$

Our goal is now to learn estimates of $\theta$ as efficiently as possible, i.e., with the smallest error and/or risk for a given number of samples. It is well-known that the OLS estimator $\htheta_\OLS := (\bfX_1^\top \bfX_1)^{-1} \bfX_1^\top \bfY$ is the minimum-variance mean-unbiased estimator that is based only on samples of $(X_1, Y)$, see e.g., \citet[Chapter 7]{johnson2002applied}; This makes $\htheta_\OLS$ the strongest possible baseline among estimators that do not make use of privileged information.

\subsection{Variance Reduction Through Rao-Blackwellization}
Next, we show that, in the Gaussian-linear setting of  Assumption~\ref{asmp:gauss}, with the additional assumption of isotropic noise in transitions, the output of the LuPTS algorithm is a Rao-Blackwell estimator~\citep{radhakrishna1945information,blackwell1947conditional} of $\theta$, which has improved statistical properties over $\htheta_\OLS$. However, to show this, we first prove the following lemma.

\begin{thmlem}
\label{lem:ols_general}
Let $\hat{K}=(\hA_1,\dots,\hA_{T-1}, \hbeta)$ be the parameters learned by  Algorithm~\ref{alg:lupts_lin} without stationarity, and let $(\bfX_1, \dots, \bfX_T, \bfY)$ be a random dataset from the Gaussian-linear system defined in Assumption~\ref{asmp:gauss}, with isotropic noise, $\forall t : \epsilon_t \sim \mathcal{N}(0, \sigma_t^2I)$. 
Then, for any $t=2,\dots,T$ we have that
\begin{equation*}
    \E[\bfX_t|\bfX_{t-1}, \hat{K}] = \bfX_{t-1}\hA_{t-1} \;\;\;\; \E[\bfY|\bfX_{T}, \hat{K}] = \bfX_{T}\hbeta~.
\end{equation*}
\end{thmlem}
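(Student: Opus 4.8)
\emph{Proof proposal.} The two identities have the same shape, so I would prove $\E[\bfX_t\mid\bfX_{t-1},\hat{K}]=\bfX_{t-1}\hA_{t-1}$ in full and note that $\E[\bfY\mid\bfX_T,\hat{K}]=\bfX_T\hbeta$ follows from the same argument applied to a terminal variable. Let $\bfeps_t\in\bbR^{m\times d}$ be the matrix of stacked transition noise and $P_{t-1}:=\bfX_{t-1}(\bfX_{t-1}^\top\bfX_{t-1})^{-1}\bfX_{t-1}^\top$ the hat matrix of $\bfX_{t-1}$. The first step is to recognize that $\hA_{t-1}=\OLSestimator{\bfX_{t-1}}{\bfX_t}$ is the multivariate OLS coefficient, so $\bfX_{t-1}\hA_{t-1}=P_{t-1}\bfX_t$; and since $\bfX_t=\bfX_{t-1}A_{t-1}+\bfeps_t$ with $(I-P_{t-1})\bfX_{t-1}=0$, the residual $\bfR_t:=\bfX_t-\bfX_{t-1}\hA_{t-1}=(I-P_{t-1})\bfeps_t$ lies in the orthogonal complement of the column space of $\bfX_{t-1}$. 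The claim is therefore equivalent to $\E[\bfR_t\mid\bfX_{t-1},\hat{K}]=0$.

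Next I would split $\hat{K}=(\hat{K}_{<t-1},\,\hA_{t-1},\,\hat{K}_{>t-1})$ with $\hat{K}_{<t-1}=(\hA_1,\dots,\hA_{t-2})$ and $\hat{K}_{>t-1}=(\hA_t,\dots,\hA_{T-1},\hbeta)$, and argue conditionally on $\bfX_{t-1}$. Using Assumption~\ref{asmp:markov} and independence of the noise across time: (i) $\hat{K}_{<t-1}$ is a function of $(\bfX_1,\dots,\bfX_{t-1})$, which is independent of $\bfeps_t$ and of the downstream noise $\bfeps_{t+1},\dots,\bfeps_T,\epsilon_Y$; (ii) given $\bfX_{t-1}$, $\hA_{t-1}=A_{t-1}+(\bfX_{t-1}^\top\bfX_{t-1})^{-1}\bfX_{t-1}^\top\bfeps_t$ is a function of $P_{t-1}\bfeps_t$, and by isotropy of the transition noise (so that $\cov(P_{t-1}\bfeps_t,(I-P_{t-1})\bfeps_t)=0$) this is jointly Gaussian with, hence independent of, $\bfR_t$, while $\bfR_t\mid\bfX_{t-1}$ is centered Gaussian; (iii) $\hat{K}_{>t-1}$ is a deterministic function of $(\bfX_t,\dots,\bfX_T,\bfY)$, hence of $\bfX_t=\bfX_{t-1}\hA_{t-1}+\bfR_t$ together with the downstream noise, which is independent of everything up to time~$t$.

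The crux is the claim that the conditional law of $\hat{K}_{>t-1}$ given $\bfX_t$ depends on $\bfX_t$ only through the Gram matrix $\bfX_t^\top\bfX_t$. To see this, note that left-multiplying $\bfX_t$ and the downstream noise by an $m\times m$ orthogonal matrix $O$ leaves every subsequent OLS fit unchanged: the map $\bfX_s,\bfX_{s+1}\mapsto\OLSestimator{\bfX_s}{\bfX_{s+1}}$ is invariant under $\bfX_s,\bfX_{s+1}\mapsto O\bfX_s,O\bfX_{s+1}$, and this propagates through the recursion $\bfX_{s+1}=\bfX_sA_s+\bfeps_{s+1}$ down to $\hbeta$; moreover the downstream noise is invariant in distribution under such $O$ precisely because its rows are i.i.d.\ isotropic Gaussian. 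Since any two $m\times d$ matrices with the same Gram matrix are related by some orthogonal $O$, the conditional law of $\hat{K}_{>t-1}$ given $\bfX_t=x$ coincides with that given $\bfX_t=Ox$ for every $O$, hence depends on $x$ only through $x^\top x$. Finally, since the cross terms vanish ($\bfX_{t-1}^\top\bfR_t=0$) we have $\bfX_t^\top\bfX_t=\hA_{t-1}^\top(\bfX_{t-1}^\top\bfX_{t-1})\hA_{t-1}+\bfR_t^\top\bfR_t$, so given $\bfX_{t-1}$ and $\hA_{t-1}$ the conditional law of $\hat{K}_{>t-1}$ depends on $\bfR_t$ only through $\bfR_t^\top\bfR_t$, which is invariant under $\bfR_t\mapsto-\bfR_t$.

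Assembling these facts, conditionally on $\bfX_{t-1}$ the transformation $\bfR_t\mapsto-\bfR_t$ leaves the joint law of $\hat{K}=(\hat{K}_{<t-1},\hA_{t-1},\hat{K}_{>t-1})$ unchanged --- $\hat{K}_{<t-1}$ does not involve $\bfR_t$, $\hA_{t-1}$ is a function of $P_{t-1}\bfeps_t$ which is independent of $\bfR_t$, and $\hat{K}_{>t-1}$ sees $\bfR_t$ only through the even quantity $\bfR_t^\top\bfR_t$ and the independent downstream noise --- whereas $\bfR_t\stackrel{d}{=}-\bfR_t$ given $\bfX_{t-1}$ because it is centered Gaussian. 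Hence $(\bfR_t,\hat{K})\stackrel{d}{=}(-\bfR_t,\hat{K})$ given $\bfX_{t-1}$, so the conditional distribution of $\bfR_t$ given $(\bfX_{t-1},\hat{K})$ is symmetric about $0$ and $\E[\bfR_t\mid\bfX_{t-1},\hat{K}]=0$. For $\E[\bfY\mid\bfX_T,\hat{K}]=\bfX_T\hbeta$ there is no downstream part at all: each of $\hA_1,\dots,\hA_{T-1}$ is a function of $(\bfX_1,\dots,\bfX_T)$, and given $\bfX_T$, $\hbeta$ is a function of $P_T\epsilon_Y$ only, all independent of the residual $(I-P_T)\epsilon_Y$, so the symmetry argument applies immediately. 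I expect the Gram-matrix reduction in the third paragraph to be the main obstacle --- it is the only place where isotropy of the transition noise is genuinely needed and where the recursive OLS structure of Algorithm~\ref{alg:lupts_lin} is exploited --- while the remaining steps are routine Gaussian and Markov bookkeeping.
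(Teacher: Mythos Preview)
Your proposal is correct and shares the paper's core idea: the conditional law of $\bfX_t$ given $(\bfX_{t-1},\hat K)$ is symmetric about $\bfX_{t-1}\hA_{t-1}$ because flipping the residual $\bfR_t\mapsto-\bfR_t$ leaves everything in $\hat K$ unchanged. The paper establishes this by Bayes' rule, splitting $p(\bfX_t\mid\bfX_{t-1},\hat K)$ into three factors and verifying (a) $p(\bfX_t\mid\bfX_{t-1})=p(\bfX_t'\mid\bfX_{t-1})$, (b) $\hA_{t-1}$ is the same for $\bfX_t$ and $\bfX_t'$, and (c) the downstream estimators $(\hA_t,\dots,\hbeta)$ have the same conditional law given $\bfX_t$ or $\bfX_t'$. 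For (c), the paper writes out the explicit Gaussian laws of each OLS estimator and derives a recursive formula for $\bfX_k^\top\bfX_k$ in terms of $\bfX_t^\top\bfX_t$, the intermediate $\hA_i$, and the residuals.

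Your route differs mainly in the handling of (c): instead of the recursion, you observe directly that all downstream OLS fits are invariant under left-multiplication of the data by an orthogonal $O\in\bbR^{m\times m}$, and that the downstream noise is distributionally $O$-invariant because rows are i.i.d.; hence the law of $\hat K_{>t-1}$ given $\bfX_t$ depends only on $\bfX_t^\top\bfX_t$. This is a cleaner and more structural argument than the paper's explicit recursion, and it makes transparent why the result extends to anisotropic feature-noise (the paper notes this extension in a remark). One small point: the independence of $P_{t-1}\bfeps_t$ and $(I-P_{t-1})\bfeps_t$ given $\bfX_{t-1}$ that you attribute to isotropy actually follows already from the rows of $\bfeps_t$ being i.i.d.\ Gaussian (the projection acts on the sample dimension, where the covariance is $I_m$ regardless of the feature-space covariance), so isotropy is not genuinely needed in your step~(ii) either.
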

A proof is given in the Appendix. The isotropic noise assumption simplifies the analysis, although it is feasible to prove this lemma in the anisotropic case as well. We give a brief remark in the Appendix highlighting how the analysis differs.

\begin{thmrem}
Lemma~\ref{lem:ols_general} says that the expected state at $t$, across datasets of the same size for which Algorithm~\ref{alg:lupts_lin} returns $\hA\hbeta$, is equal to the estimated state at $t$ given the previous state at $t-1$. This is a result of the fact that the same OLS estimator would be obtained if we had samples that were mirrored along the estimated plane of best fit, and that such a dataset is equally likely to occur. Lemma~\ref{lem:ols_general} can be used to prove a second lemma, which is found in the Appendix, stating that the output of the algorithm is indeed a Rao-Blackwell estimator, i.e. $\E[\htheta|\hA, \hbeta]=\hA\hbeta$.
With these two lemmas, our main results in Theorem~\ref{thm:rao} follow using mostly standard arguments~\citep{radhakrishna1945information,blackwell1947conditional}. 
\end{thmrem}

We evaluate estimates $\htheta$ using the mean squared error (MSE) w.r.t. $\theta$, and the prediction risk $\olR(\htheta)$, as defined below, where expectations are taken over the randomness in $\htheta$, determined by the dataset $D$,
\begin{align*}
\mse(\htheta) &\coloneqq \E_D [\|\htheta - \theta\|^2_2]~, \\
\olR(\htheta) &\coloneqq \E_D [\E_{X_1, Y}[(\htheta^\top X_1 - Y)^2]]~.
\end{align*}
We can now state the following result, relating $\htheta_{\mathrm{OLS}}$ and $\htheta_\LuPTS$, the output of Algorithm~\ref{alg:lupts_lin}.

\begin{thmthm}\label{thm:rao}%
Let $D = (\bfX_1, \bfX_2, ..., \bfX_T, \bfY)$ be a random dataset with $\htheta_\OLS \coloneqq (\bfX_1^\top \bfX_1)^{-1} \bfX_1^\top \bfY$, and let  $\htheta_\LuPTS = \hat{A}\hat{\beta}$ be the output of Algorithm~\ref{alg:lupts_lin} without stationarity. 
Under the Gaussian-linear system of Assumption~\ref{asmp:gauss} with isotropic noise as in Lemma~\ref{lem:ols_general}, $\htheta_\LuPTS$ is unbiased, and 
\begin{equation*}
    \begin{aligned}
    \mse{}(\htheta_\LuPTS) = & \mse{}(\htheta_{\mathrm{OLS}})  - \underset{D}{\E}[\V(\htheta_\OLS \mid \htheta_\LuPTS)]
    \end{aligned}
\end{equation*}
where $\V$ is the sum of element-wise conditional variances and the expectation is taken over datasets $D$, since both estimators are functions of them. 
Further, 
\begin{equation*}
\begin{aligned}
    \olR(\htheta_\LuPTS) = & \olR(\htheta_\OLS)  - \underset{D, X_1}{\E}[\underset{\;\htheta_\OLS}{\V}({\htheta_\OLS} X_1 \mid \htheta_\LuPTS)]~.
\end{aligned}
\end{equation*}%
Since the variances are non-negative, $\htheta_\LuPTS$ is at least as good as $\htheta_\OLS$ in both metrics.
\end{thmthm}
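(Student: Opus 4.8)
The plan is to recognize $\htheta_\LuPTS=\hA\hbeta$ as a Rao--Blackwellization of $\htheta_\OLS$ and then run the standard law-of-total-variance computation --- coordinatewise for the MSE, and pointwise in the test input for the risk. All of the non-routine content is contained in Lemma~\ref{lem:ols_general} and the second lemma mentioned in the remark, $\E[\htheta_\OLS\mid\hA,\hbeta]=\hA\hbeta$, which I take as given. Since $\htheta_\LuPTS=\hA\hbeta$ is a deterministic function of $(\hA,\hbeta)$, the tower rule upgrades this to $\E[\htheta_\OLS\mid\htheta_\LuPTS]=\htheta_\LuPTS$; that is, $\htheta_\LuPTS$ is a version of the conditional expectation of $\htheta_\OLS$ given $\htheta_\LuPTS$, which is exactly the hypothesis the Rao--Blackwell argument needs.

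\emph{Unbiasedness and MSE.} Writing $Y=\theta^\top X_1+\tilde\epsilon$ as in Assumption~\ref{asmp:gauss}, the noise $\tilde\epsilon$ is a fixed linear combination of the zero-mean terms $\epsilon_2,\dots,\epsilon_T,\epsilon_Y$, hence independent of $X_1$ and of the training data, so $\E[\tilde\epsilon\mid X_1]=0$, $\E[\htheta_\OLS\mid\bfX_1]=\theta$, and $\E[\htheta_\OLS]=\theta$. By the tower rule $\E[\htheta_\LuPTS]=\E[\E[\htheta_\OLS\mid\htheta_\LuPTS]]=\theta$, so $\htheta_\LuPTS$ is unbiased. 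As both estimators are unbiased, $\mse(\htheta)=\sum_j\V(\htheta_j)$ for each. Applying the law of total variance to each coordinate with conditioning variable $\htheta_\LuPTS$ and using $\E[\htheta_{\OLS,j}\mid\htheta_\LuPTS]=\htheta_{\LuPTS,j}$,
\begin{equation*}
\V(\htheta_{\OLS,j})=\E\big[\V(\htheta_{\OLS,j}\mid\htheta_\LuPTS)\big]+\V(\htheta_{\LuPTS,j}),
\end{equation*}
and summing over $j$ with $\V(\htheta_\OLS\mid\htheta_\LuPTS):=\sum_j\V(\htheta_{\OLS,j}\mid\htheta_\LuPTS)$ gives $\mse(\htheta_\OLS)=\E_D[\V(\htheta_\OLS\mid\htheta_\LuPTS)]+\mse(\htheta_\LuPTS)$, where the outer expectation may be written over $D$ since $\htheta_\LuPTS$ is a function of $D$. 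This rearranges to the claimed MSE identity.

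\emph{Risk.} For a fresh test pair $(X_1,Y)$ independent of $D$, expanding $(\htheta^\top X_1-Y)^2$ and using $\E[\tilde\epsilon\mid X_1]=0$ gives, for any $\htheta$ independent of the test point,
\begin{equation*}
\E_{X_1,Y}\big[(\htheta^\top X_1-Y)^2\big]=(\htheta-\theta)^\top\E[X_1X_1^\top](\htheta-\theta)+\sigma_{\tilde\epsilon}^2,
\end{equation*}
with $\sigma_{\tilde\epsilon}^2:=\E[\tilde\epsilon^2]$ not depending on $\htheta$. Taking $\E_D$ and using coordinatewise unbiasedness, $\olR(\htheta_\OLS)-\sigma_{\tilde\epsilon}^2=\E_{X_1}[\V_D(\htheta_\OLS^\top X_1\mid X_1)]$, and likewise for $\htheta_\LuPTS$. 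Since the test point is independent of $D$, $\E[\htheta_\OLS^\top X_1\mid\htheta_\LuPTS,X_1]=\htheta_\LuPTS^\top X_1$, so a further application of the law of total variance (conditioning on $\htheta_\LuPTS$ for fixed $X_1$) yields
\begin{equation*}
\V_D(\htheta_\OLS^\top X_1\mid X_1)=\E_{\htheta_\LuPTS}\big[\V_{\htheta_\OLS}(\htheta_\OLS^\top X_1\mid\htheta_\LuPTS,X_1)\big]+\V_{\htheta_\LuPTS}(\htheta_\LuPTS^\top X_1\mid X_1).
\end{equation*}
Taking $\E_{X_1}$, identifying the final term with $\olR(\htheta_\LuPTS)-\sigma_{\tilde\epsilon}^2$, and rearranging gives the stated risk identity (here $\htheta_\OLS^\top X_1$ is the scalar written $\htheta_\OLS X_1$ in the statement).

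\emph{Conclusion and main obstacle.} Both correction terms are expectations of non-negative quantities --- a sum of conditional variances, respectively a conditional variance --- so $\mse(\htheta_\LuPTS)\le\mse(\htheta_\OLS)$ and $\olR(\htheta_\LuPTS)\le\olR(\htheta_\OLS)$. I do not expect a genuine obstacle in this last step: it is the textbook Rao--Blackwell computation once the two lemmas are available. The only points that need a line of care are (i) the tower-rule passage from $\E[\htheta_\OLS\mid\hA,\hbeta]=\hA\hbeta$ to the conditional-expectation identities used above, and (ii) finiteness of the second moments involved, which holds whenever $\bfX_1^\top\bfX_1$ is almost surely invertible (e.g.\ $m\ge d$ with $X_1$ non-degenerate). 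The real work of the theorem lives in Lemma~\ref{lem:ols_general} and the second (Rao--Blackwell) lemma, not in this final assembly.
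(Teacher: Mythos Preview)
Your proposal is correct and matches the paper's own argument: after invoking Lemma~\ref{lem:ols_general} and the second lemma $\E[\htheta_\OLS\mid\hA,\hbeta]=\hA\hbeta$, the paper carries out exactly the coordinatewise Rao--Blackwell/law-of-total-variance computation you describe for both the MSE and the risk. Your explicit tower-rule step passing from conditioning on $(\hA,\hbeta)$ to conditioning on $\htheta_\LuPTS=\hA\hbeta$ is a point the paper uses implicitly, so if anything your write-up is slightly cleaner there.
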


Theorem~\ref{thm:rao} states that, in the Gaussian-linear case, the LuPTS estimator is never worse on average across same-size datasets than the best unbiased estimator learning only from $(X_1, Y)$, irrespective of the distribution of $X_1$. In other words, \emph{privileged information is provably useful in this case}. If there is significant uncertainty about $\htheta_\OLS$ after $\htheta_\LuPTS$ is determined, $\V(\htheta_\OLS \mid \htheta_\LuPTS)$ is high and LuPTS is favored more strongly. 

As a byproduct of the proof of Theorem~\ref{thm:rao},  we further have for the gap in MSE, $G$, 
$$
G \coloneqq \E_D[\V(\htheta_\OLS \mid \htheta_\LuPTS)] = \E_D[\|\htheta_\OLS - \htheta_\LuPTS \|^2].
$$ 

We can express this gap more explicitly when $T=2$.

\begin{thmcol}
Under Assumption~\ref{asmp:gauss}, for $T=2$, with $H_1 = (\bfX_1^\top \bfX_1)^{-1}\bfX_1^\top $, $H_2 = (\bfX_2^\top \bfX_2)^{-1}\bfX_2^\top$, both functions of the dataset $D$, it holds for the MSE gap $G$,
\begin{align*}
G = \E_D[\|(A H_2 - H_1  + H_1\epsilon_2H_2 ) \epsilon_Y\|^2]~.
\end{align*}
\end{thmcol}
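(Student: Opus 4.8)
The plan is to reduce everything to the byproduct identity $G = \E_D[\|\htheta_\OLS - \htheta_\LuPTS\|^2]$ recorded immediately after Theorem~\ref{thm:rao}, and then to evaluate the difference $\htheta_\OLS - \htheta_\LuPTS$ in closed form when $T=2$. Writing $A \coloneqq A_1$, the structural equations of Assumption~\ref{asmp:gauss} in stacked (one row per sample) form read $\bfX_2 = \bfX_1 A + \bfeps_2$ and $\bfY = \bfX_2 \beta + \bfeps_Y$, where $\bfeps_2 \in \bbR^{m\times d}$ collects the transition noise and $\bfeps_Y \in \bbR^m$ the outcome noise (these are the objects denoted $\epsilon_2, \epsilon_Y$ in the statement). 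With $H_1, H_2$ as defined in the corollary, the two estimators are $\htheta_\OLS = H_1 \bfY$ and, since Algorithm~\ref{alg:lupts_lin} without stationarity returns $\hA_1 = H_1 \bfX_2$ and $\hbeta = H_2 \bfY$, also $\htheta_\LuPTS = \hA_1 \hbeta = H_1 \bfX_2 H_2 \bfY$.

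The computation then proceeds by repeatedly applying the projection identities $H_1 \bfX_1 = I_d$ and $H_2 \bfX_2 = I_d$. First, $H_1 \bfX_2 = A + H_1 \bfeps_2$ and $H_2 \bfY = \beta + H_2 \bfeps_Y$, so $\htheta_\LuPTS = (A + H_1 \bfeps_2)(\beta + H_2 \bfeps_Y) = A\beta + A H_2 \bfeps_Y + H_1 \bfeps_2 \beta + H_1 \bfeps_2 H_2 \bfeps_Y$. Second, $\htheta_\OLS = H_1 (\bfX_2 \beta + \bfeps_Y) = (A + H_1 \bfeps_2)\beta + H_1 \bfeps_Y = A\beta + H_1 \bfeps_2 \beta + H_1 \bfeps_Y$. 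Subtracting, the $A\beta$ and $H_1 \bfeps_2 \beta$ terms cancel, which leaves $\htheta_\OLS - \htheta_\LuPTS = H_1 \bfeps_Y - A H_2 \bfeps_Y - H_1 \bfeps_2 H_2 \bfeps_Y = -(A H_2 - H_1 + H_1 \bfeps_2 H_2)\bfeps_Y$. Taking the squared Euclidean norm (the overall sign is irrelevant) and then $\E_D$ gives exactly the claimed expression for $G$.

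I expect essentially no conceptual difficulty here; the only thing to be careful about is bookkeeping. One must keep the transpose conventions straight when passing from the column-vector form of Assumption~\ref{asmp:gauss} to the row-stacked data matrices, and — since $H_1$ and $H_2$ are themselves random and statistically dependent on $\bfeps_2$ (and, through $\bfX_2$, entangled with each other) — one must resist the temptation to pull any factor outside the expectation $\E_D$. The corollary is an algebraic identity that holds dataset-by-dataset, so neither Gaussianity nor independence of the noise terms is actually used in its proof; those assumptions would only be invoked if one wished to expand $G$ further, for instance to isolate its leading-order behaviour in $m$.
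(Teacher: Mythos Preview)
Your proposal is correct and is exactly the computation the paper has in mind: the corollary is placed immediately after the byproduct identity $G=\E_D[\|\htheta_\OLS-\htheta_\LuPTS\|^2]$ and is meant to follow from it by the straightforward substitution you carry out. The paper does not spell out a proof, but your derivation---expanding $\htheta_\OLS=H_1\bfY$ and $\htheta_\LuPTS=(H_1\bfX_2)(H_2\bfY)$ via $H_1\bfX_1=I_d$, $H_2\bfX_2=I_d$ and subtracting---is the intended one, and your caveats about transpose conventions and not pulling random factors through $\E_D$ are apt.
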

Whenever $\epsilon_Y = 0$, $G=0$ irrespective of other factors. If $\epsilon_2 = 0$ and $A$ is invertible, $G=0$ as well ($\epsilon_2=0 \implies$ $X_2 = X_1A$ and $AH_2 = H_1$). In other words, in the case where either the dynamics or the outcome are noiseless, the LuPTS estimator reduces to the OLS estimator. More importantly, if neither noise term is 0, the difference will not be 0 in general. As a consequence, $\V(\htheta_\OLS \mid \htheta_\LuPTS) > 0$, and LuPTS is strictly preferable over OLS on average. 
In Section~\ref{sec:experiments}, we confirm empirically that LuPTS is more efficient and examine the gap as a function of problem parameters.

\paragraph{Bias \& Variance.}
When  the models of both system dynamics and the outcome are well-specified, the gains from the LuPTS estimator come from variance reduction, since both the OLS and LuPTS are unbiased and the irreducible risk due to noise is shared between them. However, in misspecified settings, $\theta_\LuPTS$ may be biased even when $\theta_\OLS$ is not. 
For example, let $Y = \sqrt{X_2} + \epsilon$ and $X_2 = (X_1)^2$ for $X_1$ with support on the positive real line.  The Markov condition holds, OLS is unbiased, but LuPTS is not. In the misspecified case, the benefits of LuPTS come down to a tradeoff between bias and variance. This is explored in Section~\ref{sec:cities}.
    
\paragraph{Learning From Stationary Systems.}
When the stationarity flag is false in Algorithm~\ref{alg:lupts_lin}, the estimator treats transitions at different time points $t, t'$ as independent mechanisms. Then, while the privileged information provides additional samples with increasing $T$, the number of functions to estimate increases with $T$ as well.
When we apply Algorithm~\ref{alg:lupts_lin} with the stationarity flag set to true, we exploit the assumption that we observe $m \times (T-1)$ (dependent) samples of the same linear transformation. This dependency is the primary reason for why Theorem~\ref{thm:rao} does not readily extend to the stationary case. However, we observe improvements over baseline and the non-stationary LuPTS model for real-world experiments in Section~\ref{sec:experiments}.

\subsection{Relation To Distillation Approaches}
\label{sec:distill}

\textit{Generalized distillation}~\citep{LopSchBotVap16} is a  technique for learning using privileged information, utilized by \citet{hayashi2019long} in the context of privileged time series. Distillation methods train a student model to minimize its prediction error on both true labels and soft targets provided by a teacher model trained on the privileged data, in the hope to increase sample efficiency by transferring knowledge from teacher to student. However, to the best of our knowledge there are no results proving gains from distillation of privileged information which apply in our setting. 

In the linear setting with squared loss, the distillation loss function is defined as
\begin{equation}
\label{eq:distill_loss}
 \min_\theta \;\; \lambda ||\bfY-\bfX_1\theta||_2^2 + (1-\lambda)||\hat{\bfY}_{soft}-\bfX_1\theta||_2^2
\end{equation}
where $\lambda\in [0,1]$ and $\hat{\bfY}_{soft}$ comprises predictions made by a teacher model. We will now consider the special case of distillation where the LuPTS estimator is used as teacher model, i.e., $\hat{\bfY}_{soft}=\bfX_1\htheta_\LuPTS$. In this case, we can present the following theorem.

\begin{thmthm}\label{thm:convex_distillation}
Let $\htheta_\LuPTS$ be the output of Algorithm~\ref{alg:lupts_lin} and $\htheta_\OLS=(\bfX_1^\top \bfX_1)^{-1}\bfX_1^\top \bfY$. For $\htheta_{Dist}$, the solution to \eqref{eq:distill_loss} with $\hat{\bfY}_{soft}=\bfX_1\htheta_\LuPTS$ and $\lambda\in[0,1]$, it holds that
\begin{equation}
     \htheta_{Dist} = \lambda \htheta_\OLS + (1-\lambda)\htheta_\LuPTS~.
     \label{eq:convex_combination}
\end{equation}
Additionally, under Assumption~\ref{asmp:gauss}, it holds that
\begin{equation*}
 \mse (\htheta_\LuPTS) \leq \mse(\htheta_{Dist}) \leq \mse (\htheta_\OLS)~.
\end{equation*}
\end{thmthm}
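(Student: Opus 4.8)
The proof has two independent parts. For the identity \eqref{eq:convex_combination}, I would simply locate the unique minimizer of the strictly convex quadratic in \eqref{eq:distill_loss}. Plugging in $\hat{\bfY}_{soft} = \bfX_1\htheta_\LuPTS$, differentiating with respect to $\theta$, and setting the gradient to zero gives $\lambda\,\bfX_1^\top(\bfY - \bfX_1\theta) + (1-\lambda)\,\bfX_1^\top\bfX_1(\htheta_\LuPTS - \theta) = 0$; solving for $\theta$ (the matrix $\bfX_1^\top\bfX_1$ is invertible, as already required for $\htheta_\OLS$ to be well defined) yields $\theta = \lambda(\bfX_1^\top\bfX_1)^{-1}\bfX_1^\top\bfY + (1-\lambda)\htheta_\LuPTS = \lambda\htheta_\OLS + (1-\lambda)\htheta_\LuPTS$, which is exactly \eqref{eq:convex_combination}; strict convexity makes this the global minimizer.

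For the MSE sandwich, the plan is to reuse the Rao--Blackwell identity behind Theorem~\ref{thm:rao}, namely $\E[\htheta_\OLS \mid \hA, \hbeta] = \hA\hbeta = \htheta_\LuPTS$ (the ``second lemma'' referenced in the Remark, proved in the Appendix under isotropic noise). Set $U \coloneqq \htheta_\OLS - \htheta_\LuPTS$ and $V \coloneqq \htheta_\LuPTS - \theta$, so that $\htheta_\OLS - \theta = U + V$ and, by \eqref{eq:convex_combination}, $\htheta_{Dist} - \theta = \lambda U + V$. Because $V = \hA\hbeta - \theta$ is a deterministic function of $(\hA, \hbeta)$, conditioning on $(\hA, \hbeta)$ and applying the Rao--Blackwell identity annihilates the cross term:
\[
\E[U^\top V] = \E\!\big[\,\E[U \mid \hA, \hbeta]^\top V\,\big] = 0, \qquad \text{since } \E[U \mid \hA, \hbeta] = \E[\htheta_\OLS \mid \hA, \hbeta] - \hA\hbeta = 0 .
\]
Hence, with $G \coloneqq \E\|\htheta_\OLS - \htheta_\LuPTS\|^2 = \E\|U\|^2$ the MSE gap from Theorem~\ref{thm:rao}, expanding the squared norms gives $\mse(\htheta_\OLS) = \E\|U+V\|^2 = G + \mse(\htheta_\LuPTS)$ and $\mse(\htheta_{Dist}) = \E\|\lambda U + V\|^2 = \lambda^2 G + \mse(\htheta_\LuPTS)$. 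Since $\lambda \in [0,1]$ forces $\lambda^2 \in [0,1]$, we conclude $\mse(\htheta_\LuPTS) \le \mse(\htheta_{Dist}) \le \mse(\htheta_\LuPTS) + G = \mse(\htheta_\OLS)$, which is the claim.

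The main obstacle---really the only nontrivial ingredient---is the orthogonality $\E[U^\top V] = 0$, which is not elementary but is supplied wholesale by the Rao--Blackwell relation $\E[\htheta_\OLS \mid \hA, \hbeta] = \hA\hbeta$ established in the analysis of Theorem~\ref{thm:rao}; this is where Assumption~\ref{asmp:gauss} together with the isotropic-noise assumption of Lemma~\ref{lem:ols_general} actually enters. Beyond that, one only needs: (i) integrability so the tower property applies, which follows because all three estimators are the same type of Gaussian-based quantities whose second moments are finite whenever $\htheta_\OLS$ has finite second moment; and (ii) the observation that $\htheta_{Dist}$ is itself unbiased, immediate from \eqref{eq:convex_combination} as a convex combination of the unbiased $\htheta_\OLS$ and $\htheta_\LuPTS$---though this is needed only to read the identities above as variance statements, not for the inequalities themselves.
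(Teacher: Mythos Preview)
Your proof is correct and follows essentially the same approach as the paper: the first part is identical (differentiate the convex quadratic and solve), and the second part hinges on the same Rao--Blackwell identity $\E[\htheta_\OLS \mid \hA,\hbeta] = \hA\hbeta$ from Lemma~2. The only cosmetic difference is that the paper works element-wise and uses the law of total covariance to derive $\cov(\htheta_\OLS^{(j)},\htheta_\LuPTS^{(j)}) = \V(\htheta_\LuPTS^{(j)})$, whereas your orthogonality $\E[U^\top V]=0$ packages the same computation in vector form and yields the clean identity $\mse(\htheta_{Dist}) = \lambda^2 G + \mse(\htheta_\LuPTS)$ directly.
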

A proof can be found in the Appendix. 
Theorem~\ref{thm:convex_distillation} states that using distillation with a linear student model and LuPTS as teacher leads to an estimate $\htheta_{Dist}$ that is a convex combination of $\htheta_\OLS$ and $\htheta_\LuPTS$. 
In the well-specified case (under Assumption~\ref{asmp:gauss}),  since both $\htheta_{\OLS}$ and $\htheta_{\LuPTS}$ are unbiased, $\htheta_{Dist}$ is unbiased as well, and the MSE of $\htheta_{Dist}$ is bounded between the MSE of $\htheta_{\OLS}$ and $\htheta_{\LuPTS}$. Interestingly, in the misspecified case, eq.~\eqref{eq:convex_combination} shows that using distillation leads to a principled way of trading off bias and variance. For an optimal choice of $\lambda$,  $\htheta_{Dist}$ is never worse than using either $\htheta_\OLS$ or $\htheta_\LuPTS$, and may improve on both.

In Section~\ref{sec:cities}, we implement two variants of the distillation approach: Distill-Seq, where $\hat{\bfY}_{soft}$ comprises predictions made by LuPTS, and  Distill-Concat, where $\hat{\bfY}_{soft}$ are the predictions made by a traditional linear model trained on a concatenation of the privileged time points, akin to the teacher in \citet{hayashi2019long} (Theorem~\ref{thm:convex_distillation} does not hold for the latter).

\section{EXPERIMENTS}
\label{sec:experiments}
We evaluate properties of the LuPTS estimator in a series of experiments\footnote{Code available at \\ \href{https://github.com/RickardKarl/LearningUsingPrivilegedTimeSeries}{github.com/RickardKarl/LearningUsingPrivilegedTimeSeries}.}. First, in a synthetic setting, we verify our theoretical findings under the assumptions stated in Section~\ref{sec:learning}. An example of what happens when the Markov assumption is violated is also shown. Second, on three real-world datasets, the PM$_{2.5}$ pollution dataset (Section~\ref{sec:cities}) and two clinical datasets (Section~\ref{sec:clinical}), we study the gain in predictive performance with the LuPTS estimator compared to the baseline OLS estimator.
Our results on real-world data demonstrate the bias-variance tradeoff implied by our approach as well as its utility in improving predictive performance for both regression and binary classification tasks.
Third, we compare LuPTS to the distillation approaches described in Section~\ref{sec:distill}.
Finally, we compare the stationary and non-stationary versions of LuPTS, demonstrating that the preferred version depends on the domain and prediction task.

\subsection{Experimental Setup}

The LuPTS algorithm computes several OLS estimates (see Algorithm~\ref{alg:lupts_lin}).
All OLS estimates, including the baseline model used for comparison, use the (unregularized) implementation LinearRegression of the Python module scikit-learn \citep{scikit-learn}. 
Although it would be of interest to also study regularized variants of these models, we leave it as future work for a more thorough investigation, both theoretically and experimentally.  
When extending the algorithm to binary classification tasks, the baseline model and the outcome model in the LuPTS algorithm are implemented using the LogisticRegressionCV class from scikit-learn. We perform 5-fold cross-validation on the training portion to tune the $L_2$ regularization parameter, which we vary from \num{1e-4} to \num{1e4}. Models are evaluated using the coefficient of determination ($R^{2}$) for regression tasks and the Area Under the ROC Curve (AUC) for classification tasks. In all plots, Baseline refers to OLS or Logistic Regression (depending on the task), LuPTS refers to the output of Algorithm~\ref{alg:lupts_lin} \emph{without} stationarity, and Stat-LuPTS to the output \emph{with} stationarity.

\subsection{Synthetic Experiments}
\label{sec:synthetic}

\begin{figure*}[t!]
    \centering
    \begin{subfigure}[t]{0.225\textwidth}
        \centering
        \includegraphics[height=.125\textheight]{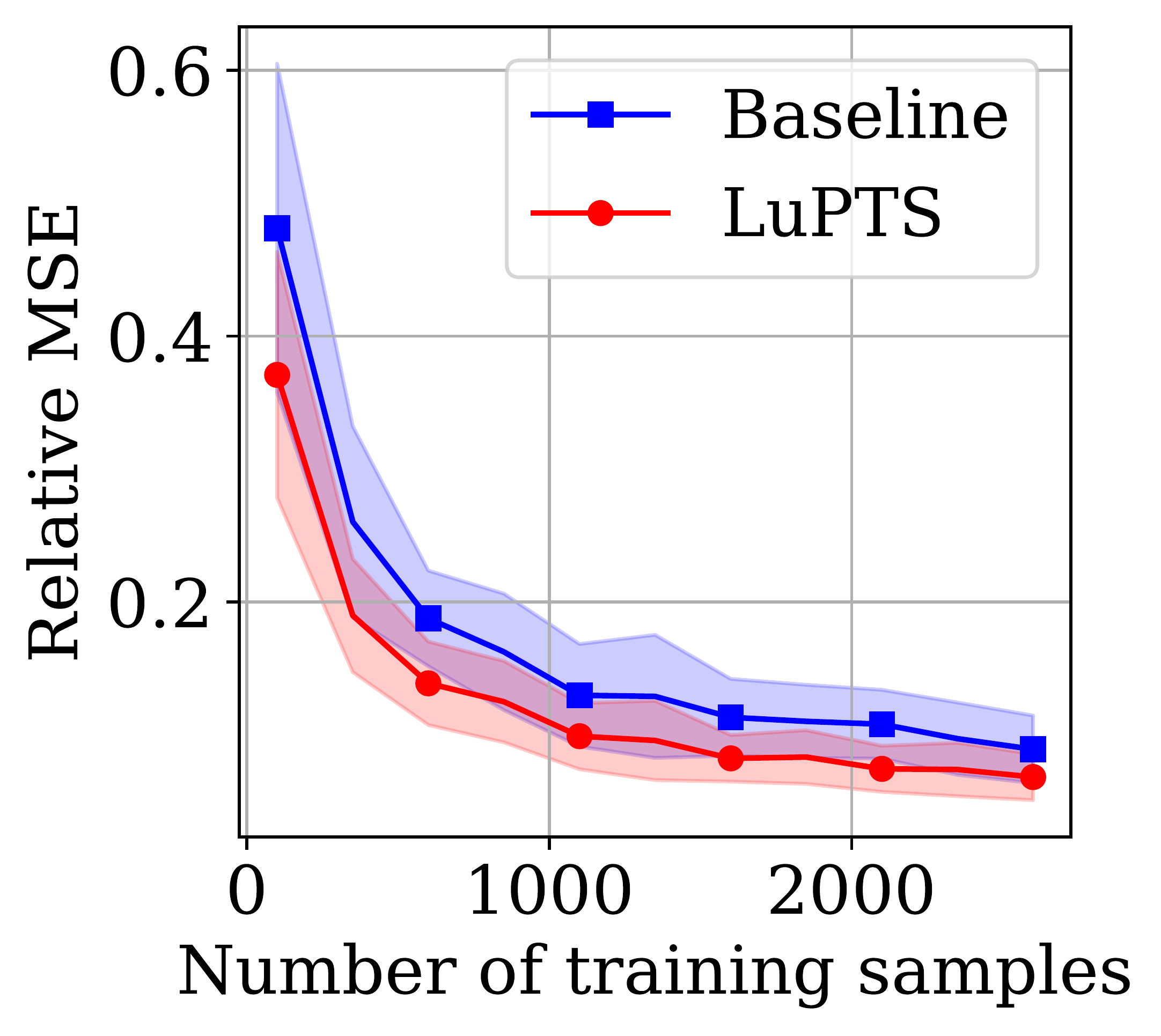}
        \caption{Parameter recovery}
        \label{fig:vary_samples}
    \end{subfigure}%
    ~ 
    \begin{subfigure}[t]{0.239\textwidth}
        \centering
        \includegraphics[height=.125\textheight]{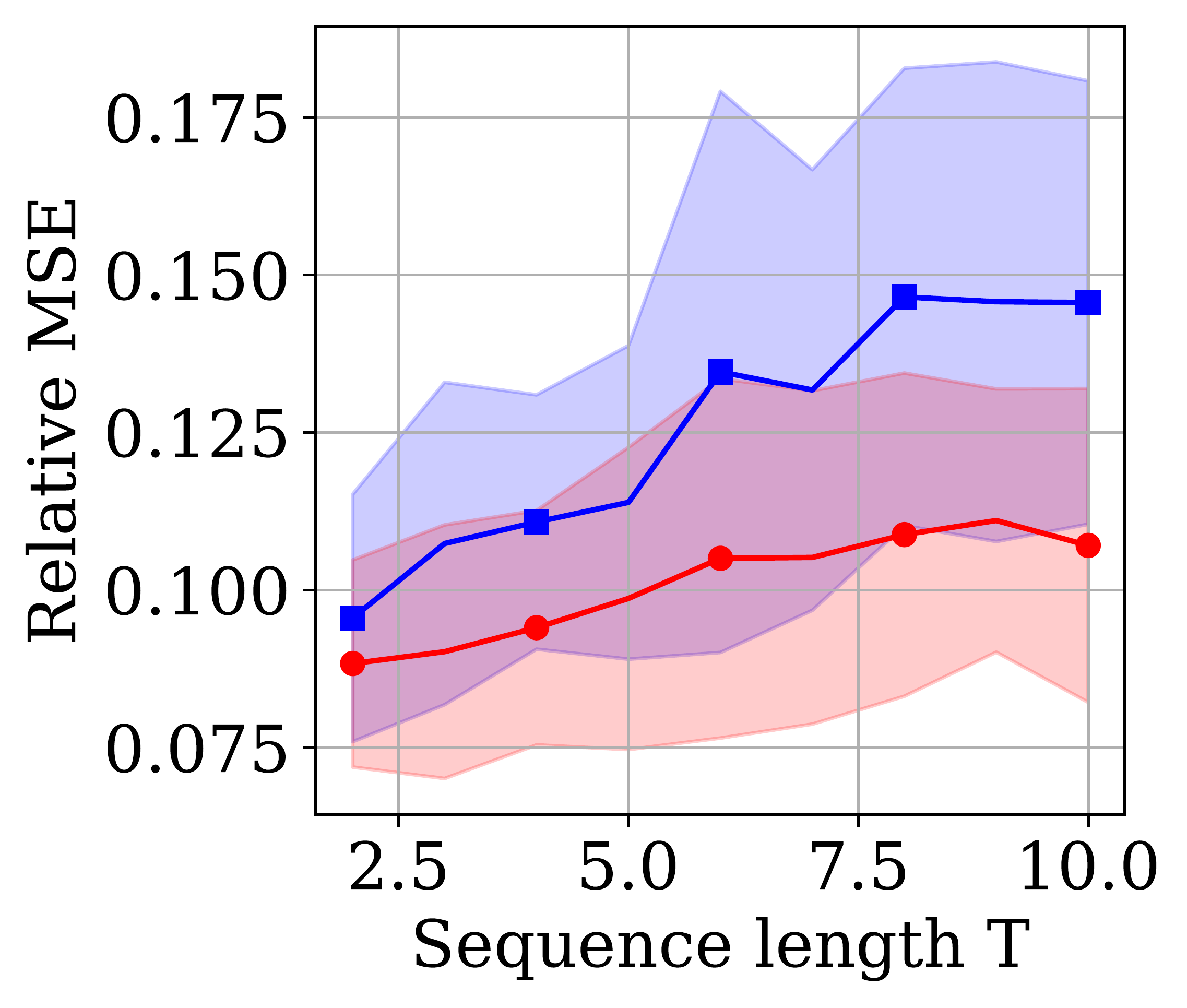}
        \caption{Parameter recovery}
        \label{fig:vary_dim}
    \end{subfigure}
    ~ 
    \begin{subfigure}[t]{0.225\textwidth}
        \centering
        \includegraphics[height=.125\textheight]{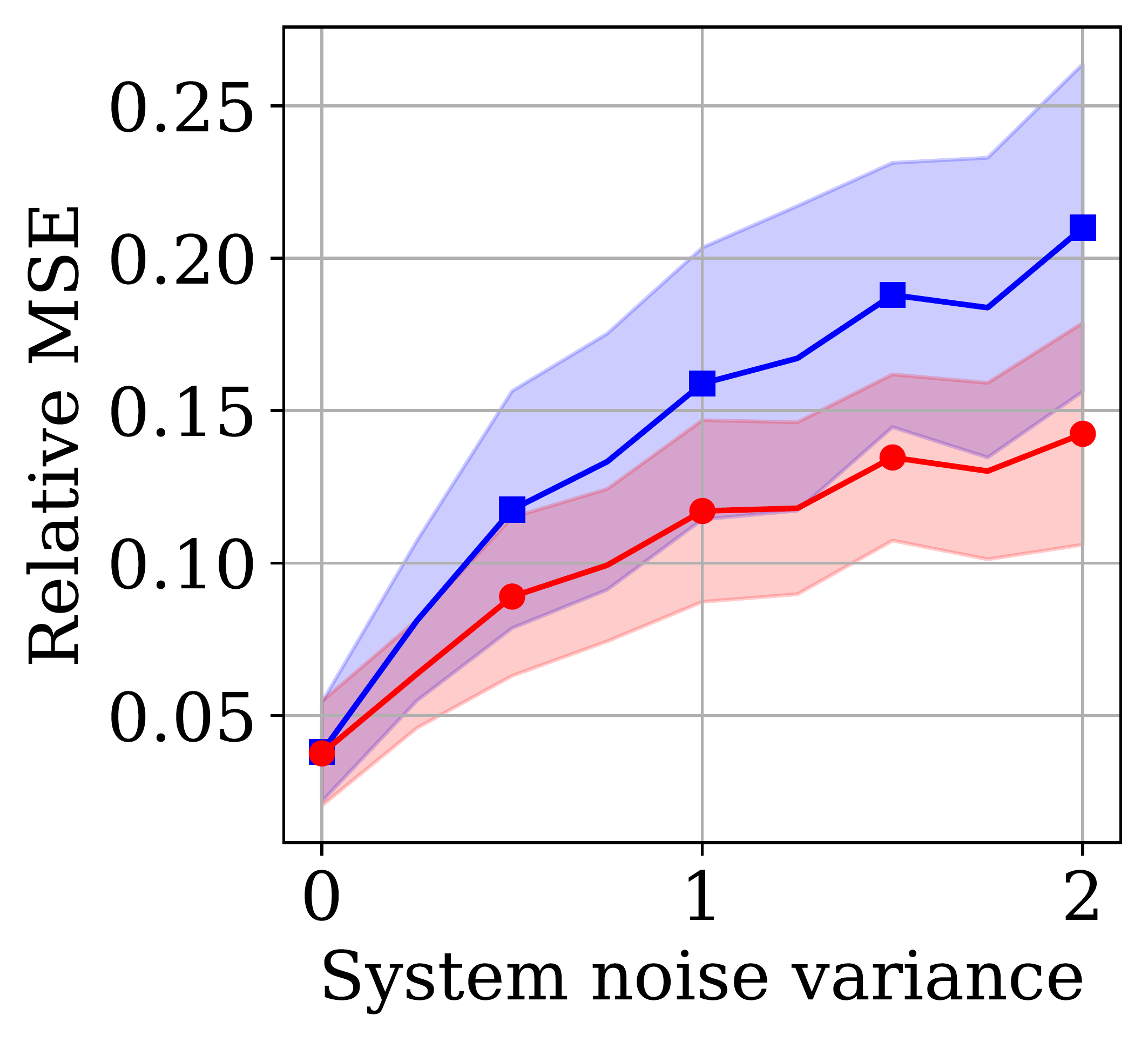}
        \caption{Parameter recovery}
        \label{fig:vary_noise}
    \end{subfigure}
    ~
    \begin{subfigure}[t]{0.23\textwidth}
    \centering
    \includegraphics[height=.125\textheight]{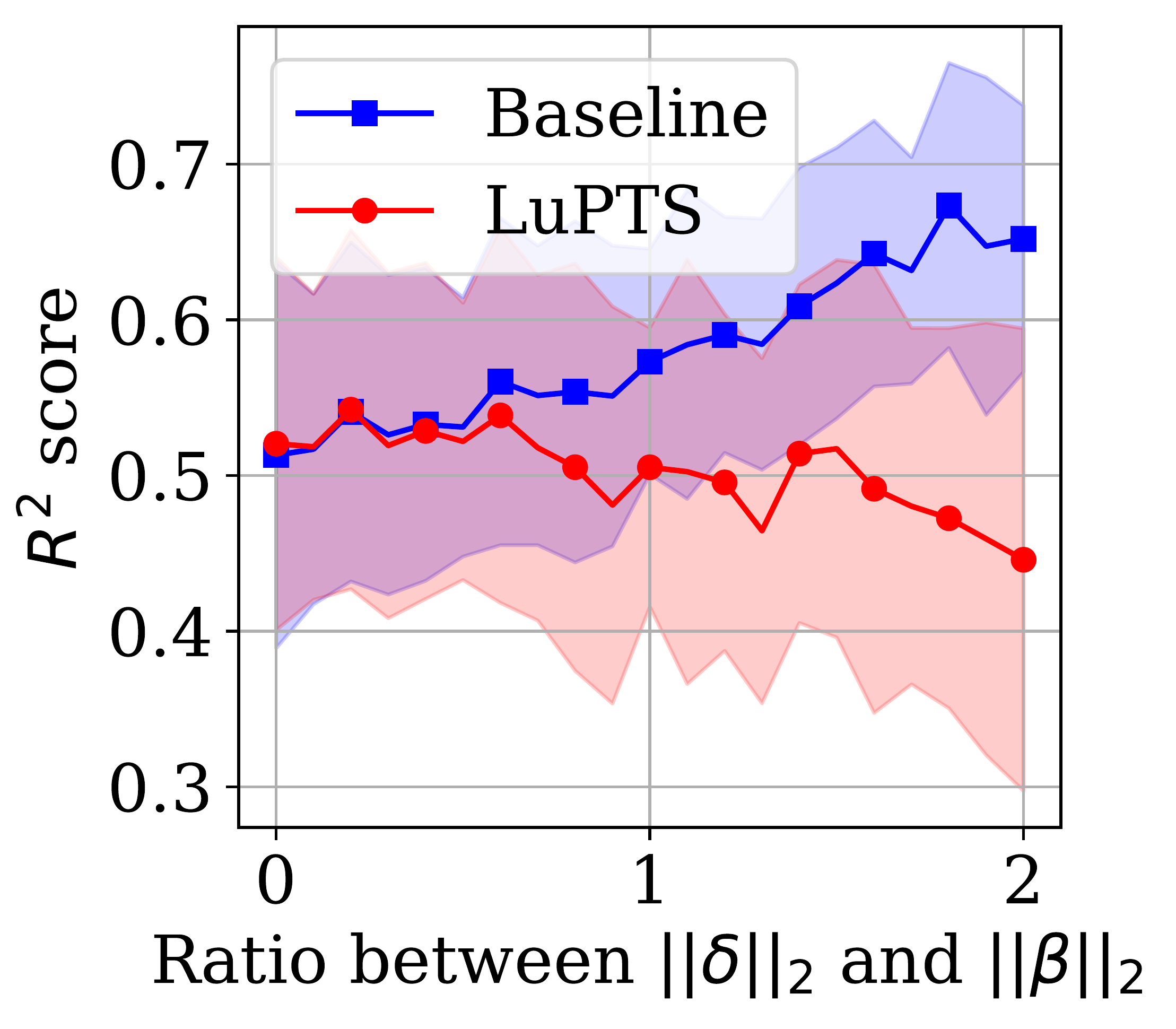}
    \caption{Prediction error}
    \label{fig:break_markov}
    \end{subfigure}
    \caption{ \small Synthetic system showing relative MSE of the parameter estimate or the prediction error and one standard deviation. \textbf{Parameter recovery} (Left three; lower is better): Varying either only number of samples $n$, sequence length $T$, or noise standard deviation $\V(\epsilon_t)$. \textbf{Breaking the Markov Assumption} (Right-most figure; higher is better): Adding a direct linear relationship between $X_1$ and $Y$ with coefficient $\delta$. Larger value on x-axis leads to more deviation from Markov assumption.}
    \label{fig:results_synthetic}
\end{figure*}

To verify and further investigate our theoretical results, we sample from a synthetic dynamical system where Markovianity and linearity with additive isotropic Gaussian noise hold. The elements of $A_t\in \mathbb{R}^{d\times d}$ for $t=1,\dots,T-1$ and $\beta\in \mathbb{R}^{d\times 1}$ are drawn from a Normal distribution with the exception of the diagonal elements in the transition matrices, which are set to 1. The eigenvalues $(\lambda_1,\dots, \lambda_d)$ of $A_t$ influence the system's behavior and stability. Unstable linear systems, i.e., those with large eigenvalues, are easier to estimate ~\citep{simchowitz2018withoutmixing}, and therefore we enforce the spectral radius $\rho(A_t)$ to equal $\kappa$ for all $t$, with $\kappa > 1$. We refer the reader to the Appendix for a more in-depth description of the system generation. For all experiments, we use the following default values unless otherwise stated: $\kappa=1.5$, $n=1000$, $T=10$, $d=25$, and $\V(\epsilon_t)=\V(\epsilon_Y)=1$ for $t=1,\dots,T-1$. Finally, the input distribution is $p(X_1)=\cN(\mu=0, \sigma^2=5)$.

\paragraph{Parameter Recovery} Figures~\ref{fig:vary_samples}, \ref{fig:vary_dim} and \ref{fig:vary_noise} present the relative MSE, $||\theta - \htheta||_2^2 / ||\theta||_2^2$, of the Baseline (OLS) and LuPTS estimates of the synthetic system described above. We investigate the impact of the number of training samples $n$, sequence length $T$, and variance of system noise on the MSE by varying one variable and keeping the other two fixed.
Compared to the baseline estimates, the LuPTS estimates are closer in general to the true parameter, as predicted by Theorem~\ref{thm:rao}. Both methods improve as we increase the number of training samples, but LuPTS is consistently better or equally good. The difference between them increases for larger $T$, which can be explained by the fact that there is more uncertainty between $X_1$ and $Y$ as $T$ gets larger. 
Notably, when the system noise is removed completely, 
LuPTS and OLS coincide, as expected.

\paragraph{Breaking The Markov Assumption} In Figure~\ref{fig:break_markov}, we introduce a coefficient $\delta$, generated in the same way as $\beta$, which controls a direct dependence of $Y$ on $X_1$, i.e., $Y=X_T\beta + X_1\delta$. We scale $\delta$ to vary the ratio of the norms $\frac{||\delta||_2}{||\beta||_2}$ (x-axis), and observe that predictions from LuPTS get worse in terms of $R^2$ score (y-axis) as the ratio increases. In spite of the bias, we see that LuPTS still performs equally well as the baseline for small non-zero ratios. This result can be explained by the fact that the privileged information contains useful knowledge, which offsets the bias when it is small.

\subsection{Forecasting Air Quality}
\label{sec:cities}

\begin{figure*}[ht]
    \centering
    \begin{subfigure}[t]{0.22\textwidth}
        \centering
        \includegraphics[width=\textwidth]{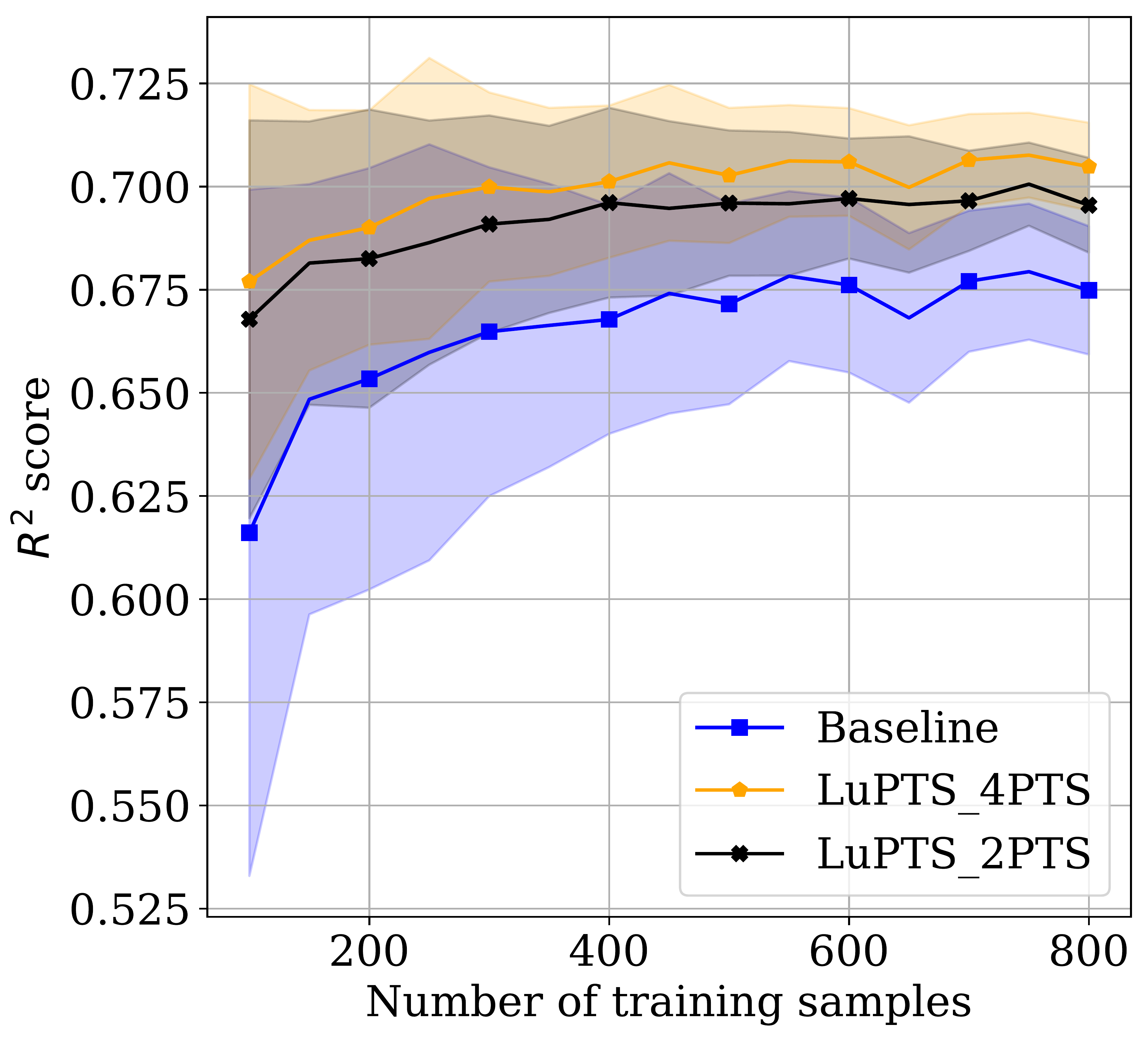}
        \caption{6 hour forecast \\ Shenyang}
        \label{fig:shenyangT6}
    \end{subfigure}%
    ~
    \begin{subfigure}[t]{0.22\textwidth}
        \centering
        \includegraphics[width=\textwidth]{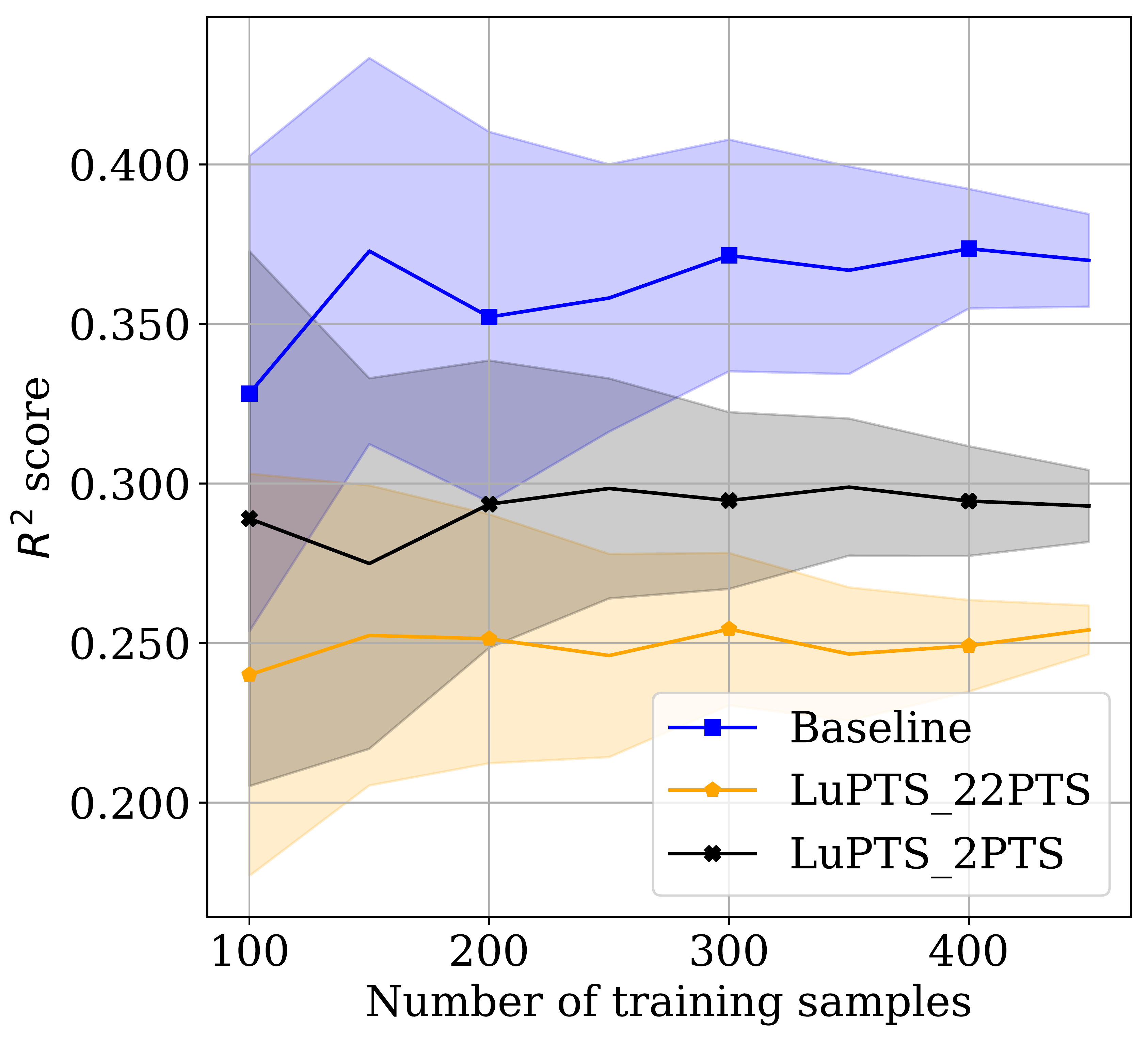}
        \caption{24 hour forecast \\ Shenyang}
        \label{fig:shenyangT24}
    \end{subfigure}%
    ~
    \begin{subfigure}[t]{0.22\textwidth}
        \centering
        \includegraphics[width=\textwidth]{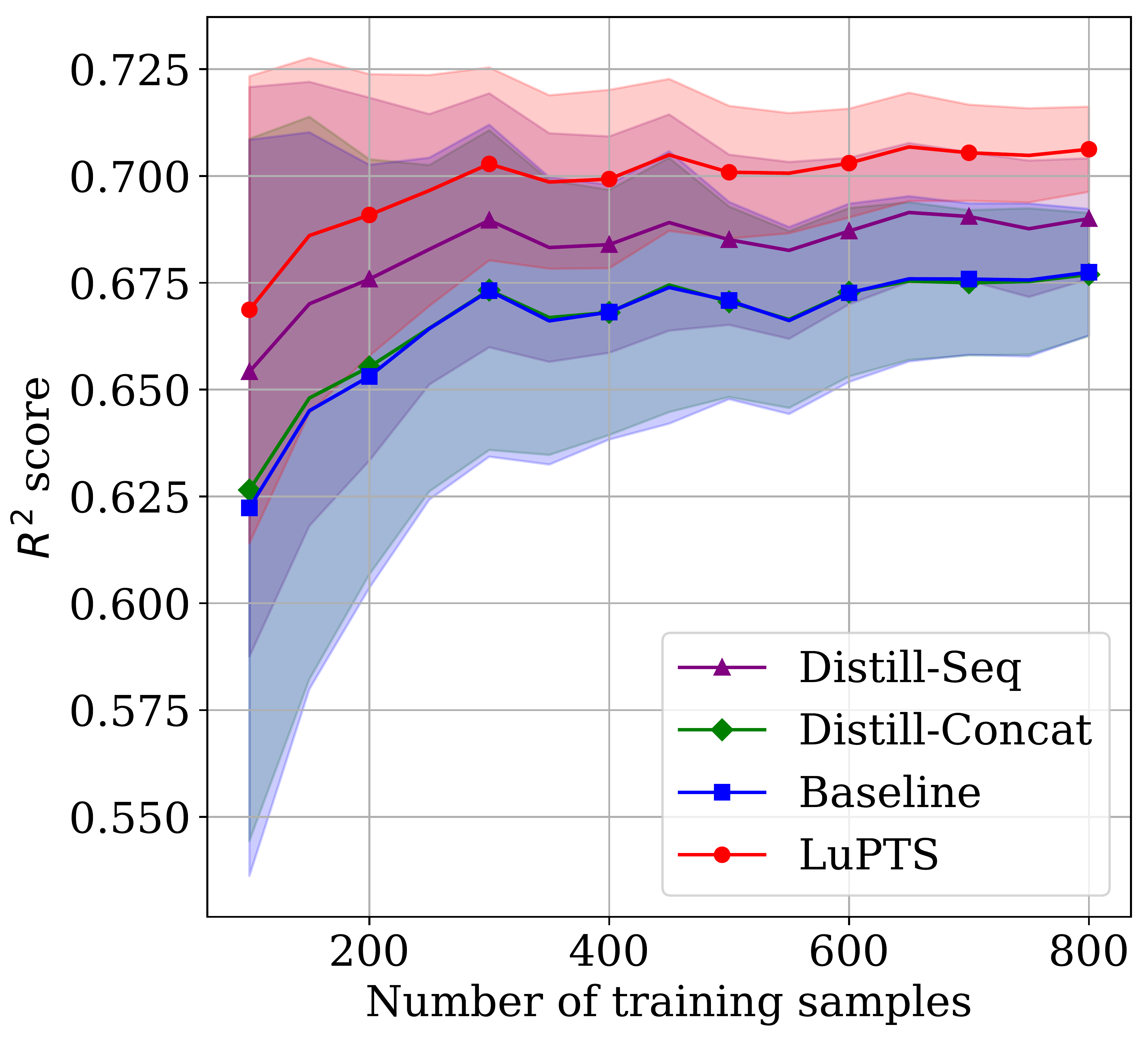}
        \caption{6 hour forecast \\ Shenyang (Distillation)}
        \label{fig:shenyangdistT6}
    \end{subfigure}%
    ~
    \begin{subfigure}[t]{0.22\textwidth}
        \centering
        \includegraphics[width=\textwidth]{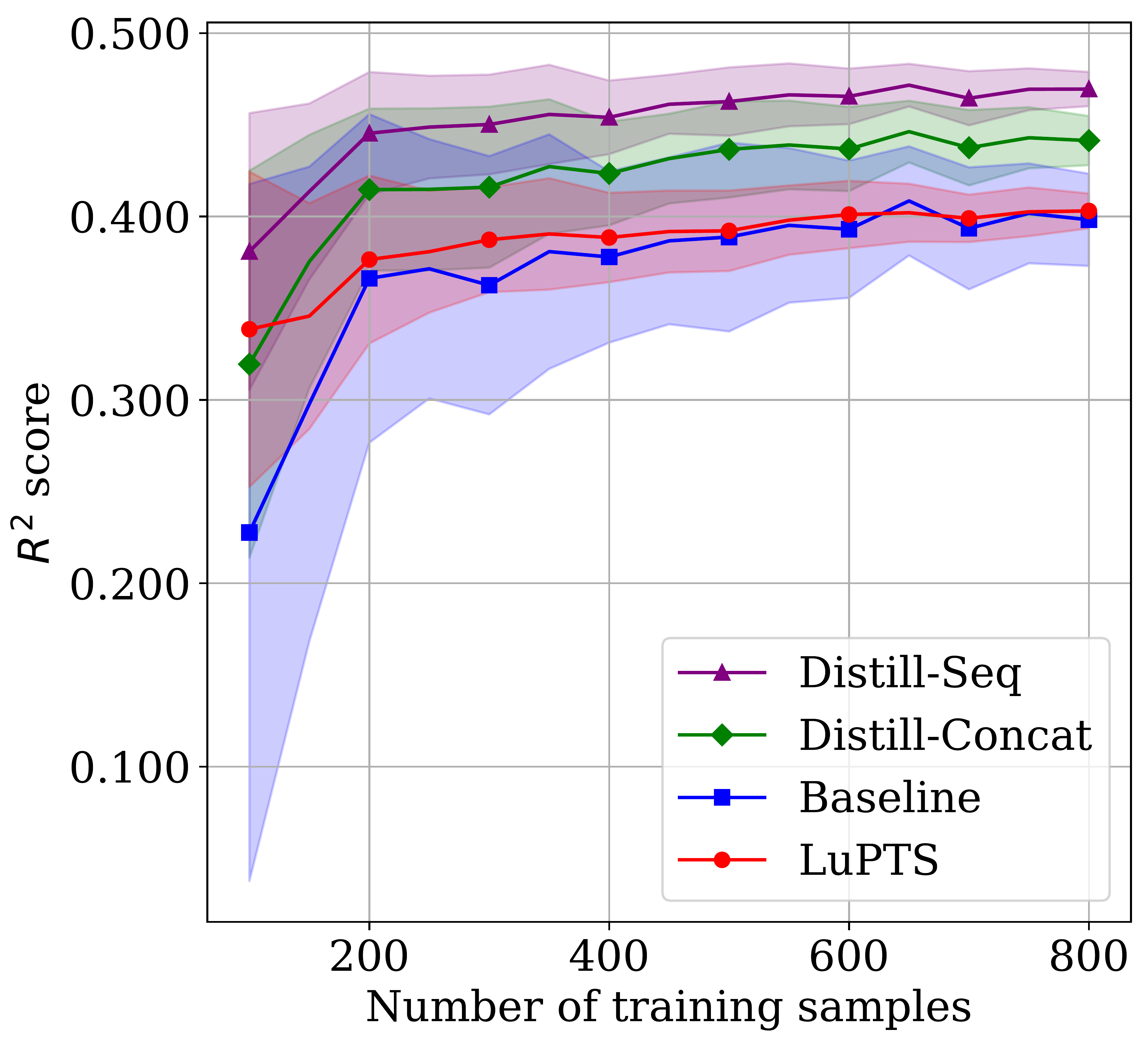}
        \caption{12 hour forecast \\ Chengdu (Distillation)}
        \label{fig:chengdudistT12}
    \end{subfigure}
    \caption{\ref{fig:shenyangT6}, \ref{fig:shenyangT24}) Changing the amount of privileged information for the LUPTS for different time horizons, where the \textit{X} in LuPTS\_\textit{X}PTS indicates the number of privileged time points. \ref{fig:shenyangdistT6}, \ref{fig:chengdudistT12}) Comparing LuPTS to the distillation-based approaches, which use the same privileged information.
    Metric used is $R^2$ (Higher is better); shaded region indicates one standard deviation across 75 iterations.\label{fig:results_pm25_shenyang}}
    \label{fig:pm25_shenyang}
\end{figure*}

Due to the serious health risks caused by chronic air pollution in China, predicting how air quality changes over time is vital ~\citep{frank2012pollution}.
The PM$_{2.5}$ dataset contains hourly meteorological recordings between the years 2012 and 2015 of the fine particle (PM$_{2.5}$) concentration in Beijing, Chengdu, Guangzhou, Shanghai and Shenyang ~\citep{fivecities2016liang}, as well as seven weather features, including temperature, humidity and combined wind direction. We refer the reader to the Appendix for a complete description of the data and pre-processing steps. We also report additional results for all five cities.

We forecast the PM$_{2.5}$ concentration for several horizons, 6, 12, or 24 hours into the future. Comparing results across different horizons is informative since 1) predictions further into the future are more challenging, and 2) for longer horizons, more time points can be used as privileged information. We further compare LuPTS to Distill-Concat and Distill-Seq, introduced in Section \ref{sec:distill}, as well as non-linear baselines models. For the distillation methods, we tune $\lambda$ on the validation set, varying it across $0.25$, $0.5$, and $0.75$. At time $t=1$, we observe the features $X_1$, which also contains the current PM$_{2.5}$ concentration. Based on this information, we wish to predict the PM$_{2.5}$ concentration $T+1$ hours into the future. 
The spacing between adjacent intermediate measurements $X_2, \dots, X_T$---the privileged information---is one hour.

\paragraph{Bias-variance Trade-off When Varying Sequence Length And Privileged Information}
Results for two forecast horizons with a different number of evenly spaced time points used as privileged information are shown in Figures~\ref{fig:shenyangT6} and \ref{fig:shenyangT24}. The results depict an interesting example of the bias-variance trade-off of LuPTS. 
For the 6 hour forecast (Figure~\ref{fig:shenyangT6}), we see improved performance using LuPTS for all sample sizes. In addition, adding more privileged time points is beneficial.
For the 24 hour forecast (Figure~\ref{fig:shenyangT24}), LuPTS is consistently worse than baseline. Interestingly, in the case where LuPTS performs worse already, adding more privileged time points is not beneficial. 
This result may be due to the learned dynamical system being biased, and consequently, the bias compounds when the predicted ``roll-out'' is longer. This argument also explains why using more privileged time points is subpar if the bias is large already. On the other hand, adding more privileged information reduces the variance, as seen in both Figure~\ref{fig:shenyangT6} and \ref{fig:shenyangT24}.

\paragraph{Combining LuPTS And Distillation Can Lead To Even Greater Performance}
Figure~\ref{fig:shenyangdistT6} and \ref{fig:chengdudistT12} show the results comparing the distillation-based methods, Distill-Concat and Distill-Seq, which use the same privileged information as LuPTS during training. 
When forecasting 6 hours ahead (Figure~\ref{fig:shenyangdistT6}), we see that 
LuPTS performs better than both distillation-based methods. Distill-Seq, which uses LuPTS as teacher model, also has a higher $R^2$ score than Distill-Concat, where the latter method lies close to the baseline. As previously posited, the bias is likely small in this case, and the empirical result conforms closely to Theorem \ref{thm:convex_distillation}, which states that the MSE, or equivalently $R^2$, of Distill-Seq is bounded by the MSEs of LuPTS and OLS in the well-specified case. 

For the 12 hour forecast (Figure~\ref{fig:chengdudistT12}), the distillation-based methods perform best, with Distill-Seq still outperforming Distill-Concat. As before, LuPTS is preferable to the baseline although the difference is only observed for a lower number of samples. This result is a good example of how, in the misspecified setting, Distill-Seq can do no worse than LuPTS or OLS, given that $\lambda$ is well chosen. Finally, we find that Distill-Seq does better than Distill-Concat in both cases, which can be attributed to the benefit of using LuPTS as a teacher model to derive better soft targets.

\paragraph{Comparison to non-linear baselines}

In addition to the distillation-based baselines, we compare LuPTS to non-linear baselines in the form of random forest (RF) and k-nearest neighbors regression (KNN). Results with a fixed sample size of 200 and a prediction horizon of 6 hours for all cities are shown in Table~\ref{tab:fc_nonlinear_n200_6hour_main}.
For all cities, the LuPTS or it's distillation equivalent empirically performs better than these non-linear models without access to privileged information in a setting where linearity, Markovianity or Gaussianity are likely not to hold. One possible reason for this is that non-linear models tend to overfit in low-data settings. See Appendix ~\ref{app:experiment_details} for implementation details of the non-linear models and results for a prediction horizon of 12 hours (Table~\ref{tab:fc_nonlinear_n200_12hour}).


\begin{table*}[t!]
    \centering
    \caption{Comparison of regression methods on the air quality forecasting task with a fixed sample size $n=200$ and a prediction horizon of 6 hours. Metric used is $R^2$ (Higher is better); mean value for each method with standard deviation across 200 iterations. The methods with highest $R^2$ and lowest variance are marked in bold for each city.}
    \label{tab:fc_nonlinear_n200_6hour_main}
    \begin{tabular}{llllll}
    \toprule
        Method &     Beijing &    Shanghai &    Shenyang &     Chengdu &   Guangzhou \\
    \midrule
      Baseline  & 0.64 $\pm$ 0.02 & 0.58 $\pm$ 0.06 & 0.66 $\pm$ 0.04 & 0.63 $\pm$ 0.04 & 0.45 $\pm$ 0.06 \\
         LuPTS  & 0.64 $\pm$ 0.03 & \textbf{0.62 $\pm$ 0.03} & \textbf{0.70 $\pm$ 0.03} & 0.65 $\pm$ 0.03 & 0.49 $\pm$ 0.04 \\
    Stat-LuPTS  & 0.64 $\pm$ 0.02 & 0.62 $\pm$ 0.04 & 0.69 $\pm$ 0.03 & 0.65 $\pm$ 0.03 & \textbf{0.50 $\pm$ 0.04} \\
    Distill-Seq & \textbf{0.65 $\pm$ 0.02} & \textbf{0.62 $\pm$ 0.03} & 0.68 $\pm$ 0.03 & \textbf{0.67 $\pm$ 0.02} & 0.49 $\pm$ 0.05 \\
Distill-Concat  & \textbf{0.65 $\pm$ 0.02} & 0.60 $\pm$ 0.04 & 0.66 $\pm$ 0.04 & 0.66 $\pm$ 0.03 & 0.46 $\pm$ 0.07 \\
             RF & 0.62 $\pm$ 0.03 & 0.58 $\pm$ 0.07 & 0.53 $\pm$ 0.06 & 0.61 $\pm$ 0.04 & 0.48 $\pm$ 0.05 \\
            KNN & 0.57 $\pm$ 0.04 & 0.51 $\pm$ 0.23 & 0.49 $\pm$ 0.05 & 0.51 $\pm$ 0.04 & 0.26 $\pm$ 0.09 \\
    \bottomrule
    \end{tabular}
\end{table*}

\begin{figure}[t!]
    \centering
    \captionsetup[subfigure]{justification=centering}
    \begin{subfigure}[t]{0.24\textwidth}
        \centering
        \includegraphics[width=\textwidth]{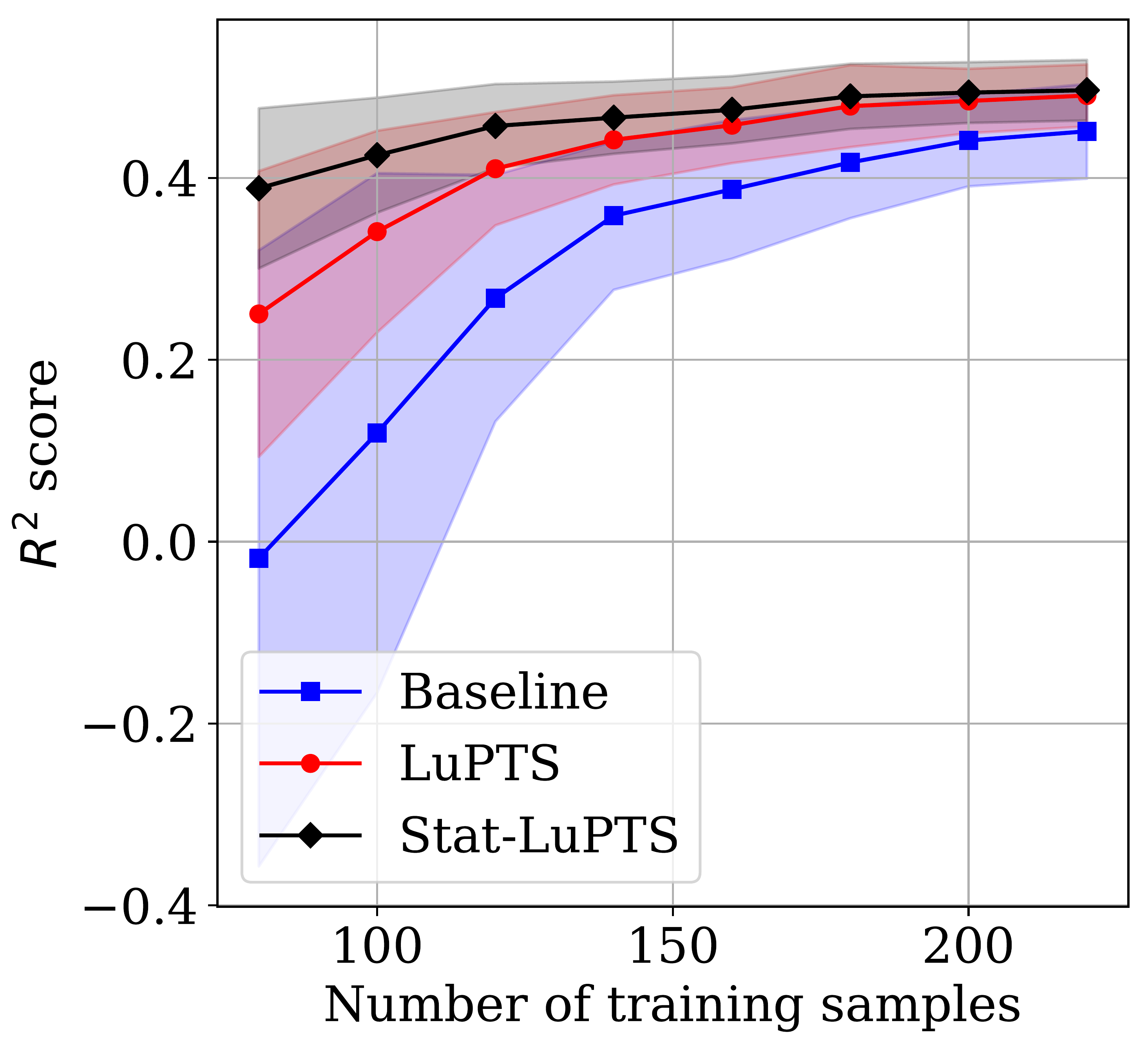}
        \caption{Predicting MMSE}
        \label{fig:adnimmse2}
    \end{subfigure}%
    \begin{subfigure}[t]{0.24\textwidth}
        \centering
        \includegraphics[width=\textwidth]{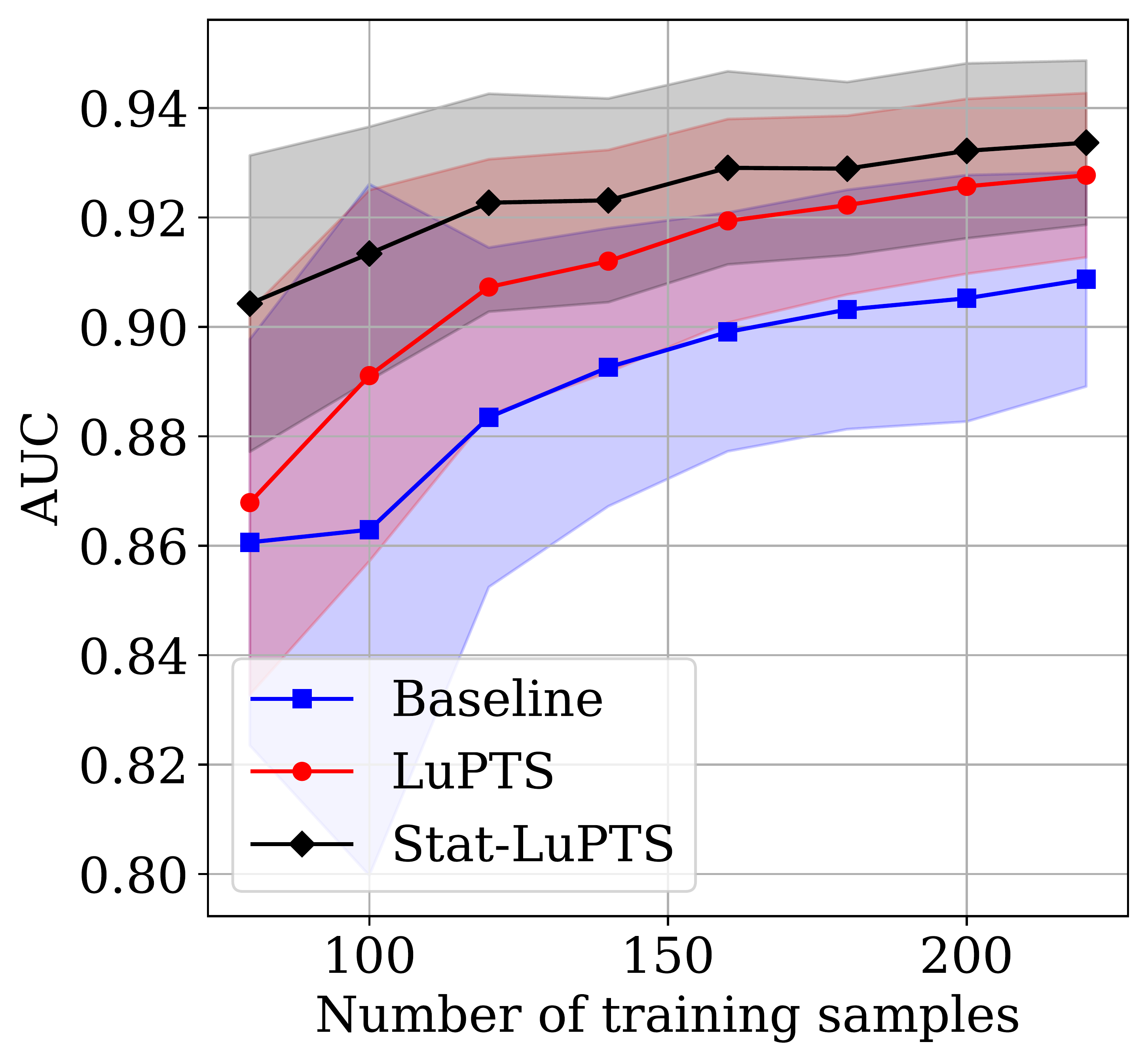}
        \caption{Predicting AD}
        \label{fig:adniad2}
    \end{subfigure}%
    \caption{\small \textbf{Alzheimer's disease progression tasks}. Follow-up at 12, 24 and 36 months after baseline as privileged information. Metrics used are $R^2$/AUC; shaded region corresponds to one standard deviation across 100 iterations.}
    \label{fig:adniexp}
\end{figure}

\subsection{Modeling Progression Of Chronic Disease}
\label{sec:clinical}

\begin{figure}[t!]
    \centering
    \captionsetup[subfigure]{justification=centering}
    \begin{subfigure}[t]{0.24\textwidth}
        \centering
        \includegraphics[width=\textwidth]{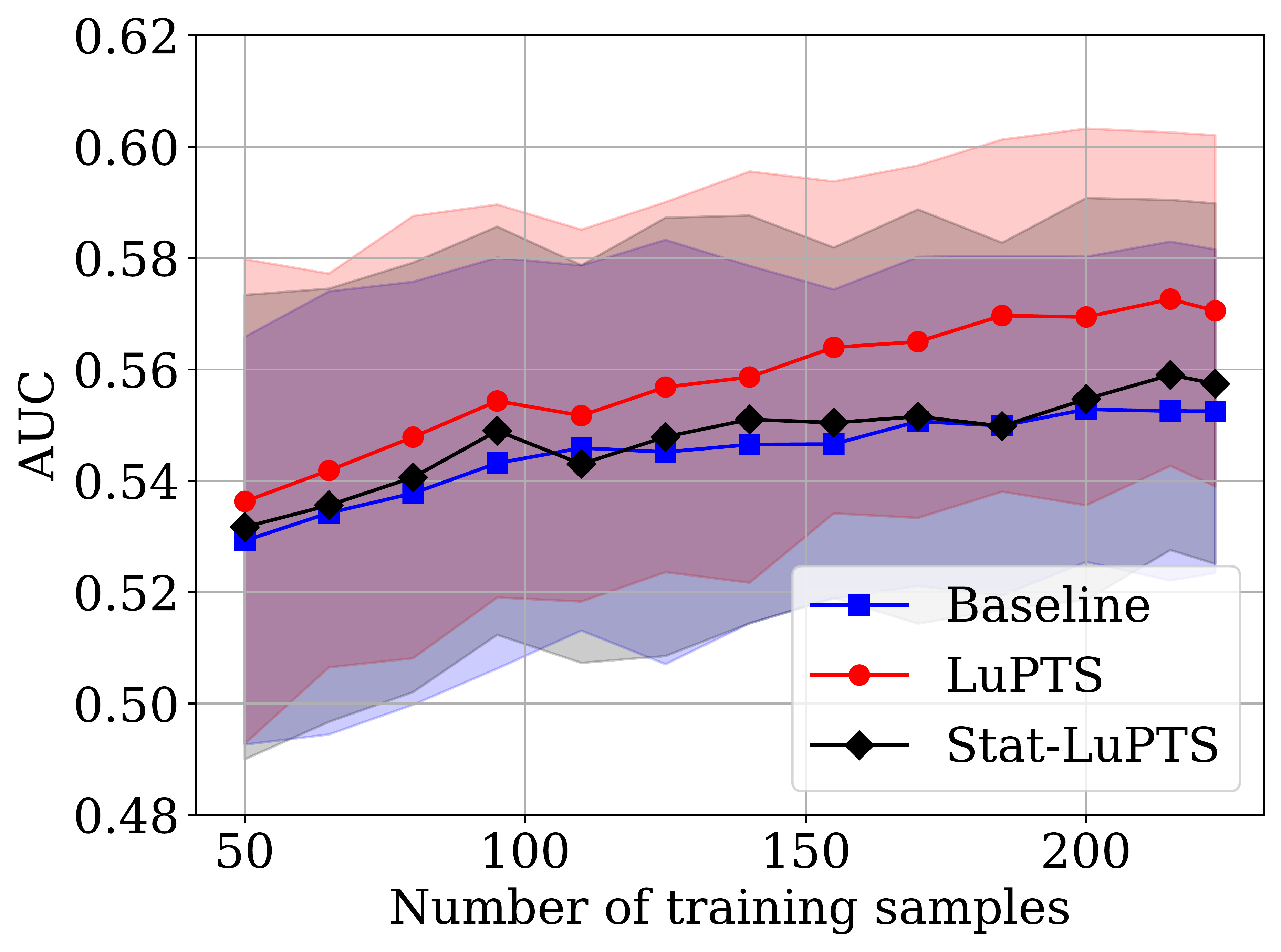}
        \caption{Early/Late Progressor}
        \label{fig:mmfigB}
    \end{subfigure}%
    \begin{subfigure}[t]{0.24\textwidth}
        \centering
        \includegraphics[width=\textwidth]{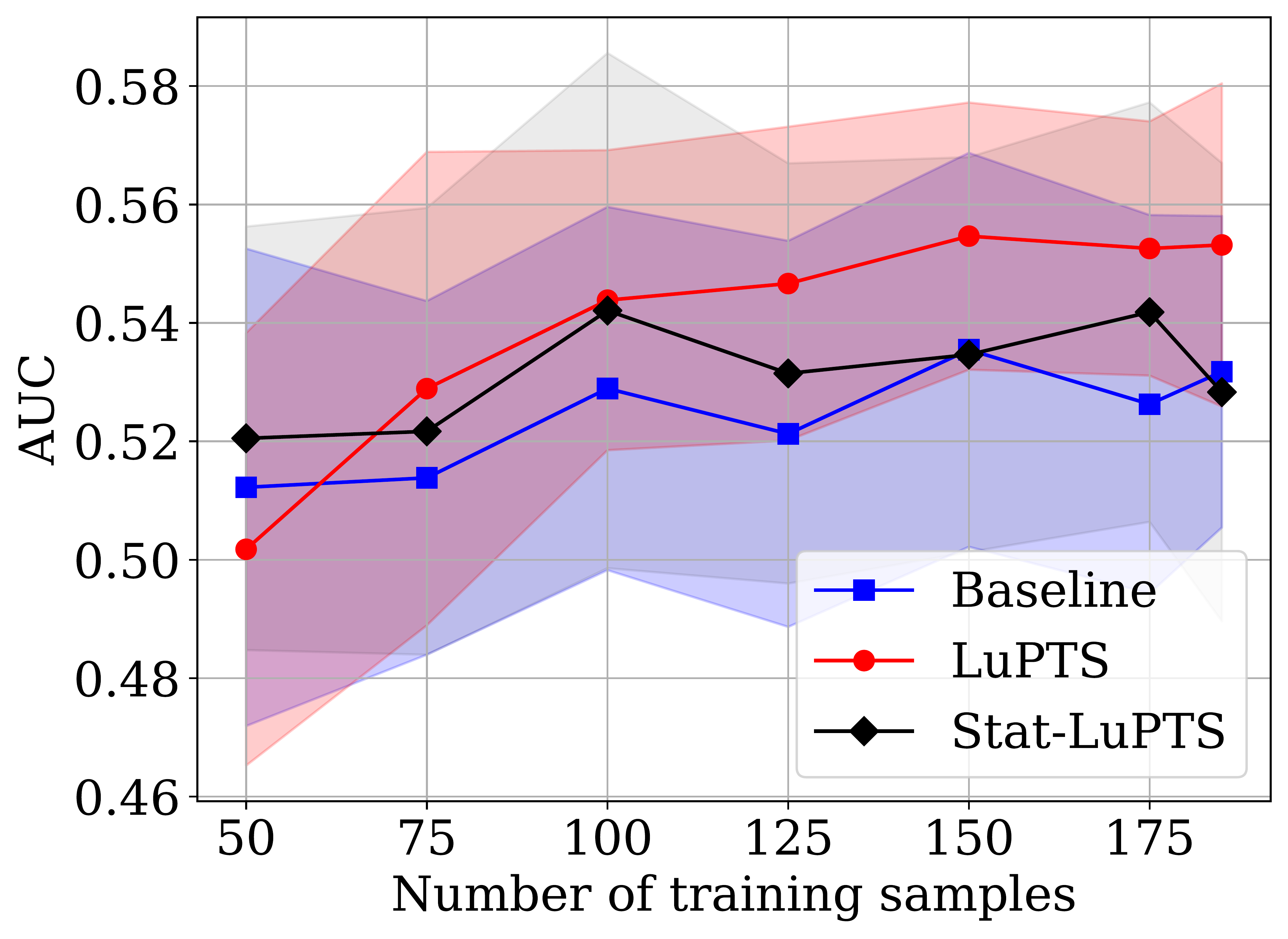}
        \caption{Treatment Response}
        \label{fig:mmfigC}    \end{subfigure}%
    \caption{\small \textbf{MM Progression Task} (Left): AUC as a function of the training set size is shown for 4 privileged time points.  \textbf{Treatment Response Task} (Right) for 2 privileged time points are used. Shaded region for both tasks indicates one standard deviation over all trials.}
    \label{fig:mmexp}
\end{figure}

\paragraph{Alzheimer's Progression Modeling}
In this experiment, we  predict progression of Alzheimer's disease (AD) as measured by two different subject outcomes (i) the diagnostic status (AD or not), and (ii) cognitive function as assessed by the Mini Mental State Examination (MMSE)   score~\citep{galea2005mini}. 
The Alzheimer’s Disease Neuroimaging Initiative (ADNI) is a large multi-site research study tracking the brains of over 2000 participants through genetic, imaging and biospecimen biomarkers.\footnote{ADNI: \url{http://adni.loni.usc.edu}} Subjects are followed over several years with measurements taken every three months, with some missingness in the observations. 

We observe outcomes ($Y$) at a fixed follow-up time, 48 months after baseline measurements ($X_1$) are taken. Privileged information is collected at intermediate time points, here restricted to samples from follow-ups at 12, 24, and 36 months after baseline. 
See the Appendix for all selected features and details on data~pre-processing.

\paragraph{Multiple Myeloma Progression Modeling} Given the limited samples available due to the relative rarity of the disease, Multiple Myeloma (MM) provides a suitable setting to demonstrate the utility of using temporal, post-baseline data in improving predictive performance. We use a data registry released by the Multiple Myeloma Research Foundation (MMRF) through the CoMMpass clinical trial \citep{usrelating}, which contains de-identified clinical data collected at 2-month intervals for 1143 patients. Preprocessing of the data was done through ML-MMRF, an open-source library provided by \citet{hussain2021neural}. 
We refer the reader to the Appendix for a detailed description of the features. We focus on two clinically important predictive tasks for multiple myeloma:

\textit{Early/Late Progression Task}: The first task is predicting whether or not a patient will progress "early" ($Y=1$) (before 18 months post-treatment induction) or "late" ($Y=0$) (after 18 months post-treatment induction). 
We experiment with using privileged time points within the first "line" (or sequence) of treatment.

\textit{Treatment Response Task}: The second task is predicting a patient's treatment response (given a fixed treatment policy) as either "Progressive Disease" (PD, $Y=1$) or "non-Progressive Disease" (non-PD, $Y=0$). These
are used by oncologists to make treatment decisions and assess disease burden. The outcome is recorded after two lines of treatment have been completed. We have privileged information at the end of first line treatment ($t=2$) and at the end of second line treatment~($t=3$).

For all tasks outlined in this section, we do repeated (50 repeats) 2-fold cross validation with different training and test splits across multiple training set sizes. Note that when $T=2$ (one privileged time point), LuPTS and Stat-LuPTS return the same estimator. Hence, only LuPTS is shown in these figures.

\paragraph{LuPTS For Disease Progression Modeling}
Using LuPTS improves predictive performance for all of the clinical tasks
and leads to a reduced variance in estimation, as shown in Figures~\ref{fig:adniexp} and~\ref{fig:mmexp}. This result intuitively implies that it is easier to predict clinical outcomes of chronic diseases from an intermediate set of longitudinal features than from baseline features alone. Indeed, in the context of the MM tasks, the PD category is often determined by temporal changes in a patient's lab values, and recurrence of disease is often measured by temporal changes in a patient's serum immunoglobulins \citep{kyle2009criteria}. We perform additional experiments comparing Stat-LuPTS with LuPTS. For the AD progression tasks, using a stationary transition matrix results in further performance gains (see Figure~\ref{fig:adniexp}). However, for the MM tasks, Stat-LuPTS does not outperform the baseline model (see Figure \ref{fig:mmexp}). This makes sense since the longitudinal dynamics of a myeloma patient may differ across different lines of treatment, justifying a separate transition matrix for each line.

\paragraph{Assessing Feature Importance For MM Early/Late Progression Task}
In Figure \ref{fig:mm_qual} in the Appendix, we show the feature weights of the LuPTS and baseline outcome models in the top and bottom rows of the heatmap, respectively. We find that for the LuPTS estimator, the highest weighted features are the ISS stage and the projected serum M-protein of the patient. This result is consistent with current clinical understanding of myeloma, which measures disease burden based on a patient's M-protein and ISS risk score. On the other hand, we find that the relevant features for the baseline estimator are the myeloma subtypes and not the biomarkers. This is most likely due to the fact that the baseline model takes the biomarkers at the first time step as input, which may be less associated with the overall progression of the patient. This result indicates that using the LuPTS estimator results in a more clinically intuitive explanation for its prediction.

\section{RELATED WORK}
\label{sec:related}
Making use of information only available at training time was first systematically studied in the context of Learning using Privileged Information (LuPI)~\citep{vapnik2009new}. 

We study prediction of future outcomes~\citep{makridakis1994time,ing2003multistep,sorjamaa2007methodology}, where the interval between baseline and prediction target is assumed sufficiently long to collect intermediate privileged information.
This is related to multi-step prediction in time-series forecasting, with common strategies including direct~\citep{chevillon2007direct} and recursive prediction~\citep{kunitomo1985properties,ing2003multistep}, corresponding to the baseline and LupTS strategies used in this work. However, unlike time-series forecasting, which predicts a future state of a continually evolving variable, our goal is to predict a distinct outcome variable at a fixed finite horizon. \citet{ing2003multistep} showed for time-series forecasting that in stationary Gaussian-linear systems, recursive prediction is asymptotically preferable to direct prediction. This is consistent with our findings, but these are for the non-stationary case and non-asymptotic.

Our main analysis tool is the Rao-Blackwell theorem~\citep{radhakrishna1945information,blackwell1947conditional}, used widely for variance reduction of statistical estimators, such as in the Rao-Blackwellization of MCMC sampling schemes~\citep{casella1996rao}. This use is distinct from ours. It has also been used to improve policy evaluation in RL~\citep{li2018policy}, general variational inference~\citep{ranganath2014black}, and estimation of field goal percentage in basketball~\citep{daly2019rao}. However, to the best of our knowledge, the result has not previously been used to prove gains from learning using privileged information.

The question of leveraging explicit models of dynamics commonly arises in reinforcement learning (RL)~\citep{sutton2018reinforcement}. In model-based RL, a learned model of system dynamics is used to simulate state transitions in order to predict (long-term) future rewards. This problem maps onto ours when there is a single available action with the reward being given at a fixed future time step. The question of \emph{when} the bias due to the use of a model in model-based RL is preferable to the higher variance model-free RL remains open~\citep{feinberg2018modelbased,thomas2016data}.

\section{DISCUSSION}
\label{sec:discussion}

In this work, we studied prediction of future outcomes in a setting where privileged information is available in the form of a time series observed between prediction and outcome time points.
We proved that a recursive estimator that makes use of this privileged information yields improved parameter recovery and improved expected risk compared to the best estimator that does not use privileged information.
Through experiments on synthetic and real-world data sets, we showed that our estimator, dubbed LuPTS, often results in better predictive performance and variance reduction in both regression and classification tasks. We also proposed a method for using LuPTS in combination with  distillation-based learning to reduce prediction risk in the misspecified case by trading off bias and variance.

Interestingly, compared to prior work on learning using privileged information, our results are qualitatively different. Instead of providing asymptotic bounds on generalization error as done before~\citep{vapnik2009new,LopSchBotVap16} we prove an explicit gap on the improvement in the finite-sample case. A possibly fruitful direction for research could be to further explore the connections between our results and previous results within the topic.

There are some notable limitations to our work. First, the theory is limited to time series from discrete-time linear dynamical systems with isotropic Gaussian noise where the transition matrices for each time step are estimated separately. Furthermore, we assume that the particular structure of the time series is Markov. 
Extending the theory and algorithm to include non-linear transitions  and estimators or exploit stationary series is interesting future work as it will broaden the understanding and applicability of learning using privileged time series. As a start, we provide a general algorithm for arbitrary estimators in the Appendix.

\subsection*{Acknowledgments}
The authors thank Alexander D'Amour and Chandler Squires for valuable feedback on initial versions of the manuscript. We also thank the Alzheimer’s Neuroimaging Initiative (ADNI) for collecting and providing the data used in this project. In addition, the MMRF data were generated as part of the Multiple Myeloma Research Foundation Personalized Medicine Initiatives (\url{https://research.themmrf.org} and \url{www.themmrf.org}). Fredrik Johansson was funded in part by the Wallenberg AI, Autonomous Systems and Software Program (WASP) funded by the Knut and Alice Wallenberg Foundation. Zeshan Hussain was supported by an ASPIRE award from The Mark Foundation for Cancer Research.

\bibliographystyle{plainnat}
\bibliography{main}


\clearpage
\appendix

\thispagestyle{empty}

\onecolumn \makesupplementtitle

\section*{Appendix}
\appendix

The appendix contains the following sections. We also highlight the key takeaways or descriptions associated with each section. 

\begin{enumerate}[label=\Alph*.]
\item \textbf{Proof of Theorem~\ref{thm:rao}}: This section provides the full proof of Theorem~\ref{thm:rao}, including two lemmas that are central to the argument. We also show how one can relax the isotropic noise assumption on the Gaussian-linear system and generalize the first lemma to the anisotropic case.
\item \textbf{Proof of Theorem~\ref{thm:convex_distillation}}: This section provides a full proof of Theorem~\ref{thm:convex_distillation}, showing how using a distillation-approach with LuPTS as a teacher model returns the convex combination of the OLS estimator and output from Algorithm~\ref{alg:lupts_lin}. We further show that the MSE of the estimator using LuPTS with distillation is bounded between the MSEs of the OLS and LuPTS estimators. 
\item \textbf{LuPTS with Non-linear Estimators}: This section includes a bound on the expected risk of the LuPTS estimator in the case where the transition functions and outcome model are non-linear.
\item \textbf{Experimental Details}
\begin{enumerate}[label=D\arabic*.]
    \item \textbf{Computational Resources} - We give a brief description of the computational resources that were used to generate the experimental results as well as the running time required to reproduce them. 
    \item \textbf{Synthetic Experiments} - We present a more detailed description of how the synthetic data is generated. Additionally, we test empirically in the synthetic setting how the Stat-LuPTS, LuPTS, and baseline OLS estimators compare when stationarity holds and when it does not.
    \item \textbf{Forecasting Air Quality} - We provide a detailed description of the features used for this task as well as the training and evaluation procedure. Finally, we present additional experimental results comparing the LuPTS and baseline OLS estimators for the other Chinese cities.  
    \item \textbf{Alzheimer's Progression Modeling} - Along with giving the full list of features and our pre-processing procedures used, we present the results from the main paper in tabular form. This gives a more granular look at which sample sizes LuPTS yields the most gain in AUC and reduction in variance. 
    \item \textbf{Multiple Myeloma Progression Modeling} - We give a description of the features used for the Multiple Myeloma experiments as well as the pre-processing procedures used to handle missingness and censorship. We then present a brief description of our evaluation procedure on this dataset. Finally, we end with a qualitative experiment looking at the most highly-weighted features in the LuPTS and baseline outcome models for the early/late progression task.
\end{enumerate}
\end{enumerate}
\newpage 

\section{PROOF OF THEOREM~\ref{thm:rao}}
\label{sec:proof-thm1}

To prove Theorem~\ref{thm:rao}, we begin by proving the following lemma of OLS estimates.

\begin{manuallemma}{1}
Let $K=(\hA_1,\dots,\hA_{T-1}, \hbeta)$ be the output of Algorithm~\ref{alg:lupts}, and let $(\bfX_1, \dots, \bfX_T, \bfY)$ be a random dataset from the Gaussian-linear Markov dynamical system as defined in Assumption~\ref{asmp:gauss}, with isotropic noise, $\epsilon_t \sim \cN(0, \sigma_t^2I)$. 
Then, for any $t=2,\dots,T$ we have that
$$
    \E[\bfX_t\mid \bfX_{t-1}, K] = \bfX_{t-1}\hA_{t-1} 
$$
and 
$$
    \E[\bfY\mid \bfX_{T}, K] = \bfX_{T}\hbeta~.
$$
The main difference between the above equations is the dimensionality of $\bfX_t$ and $\bfY$, respectively, where we have previously stated, without loss of generality, that $\bfX_t\in \mathbb{R}^{n\times d}$ and $\bfY\in\mathbb{R}^{n\times 1}$.
\end{manuallemma}

\begin{proof}
We will first show $\E[\bfX_t\mid \bfX_{t-1}, K] = \bfX_{t-1}\hA_{t-1} $ and then explain how the same arguments are applied to prove $\E[\bfY\mid \bfX_{T}, K] = \bfX_{T}\hbeta$.

Let $\bfR_t=\bfX_t-\bfX_{t-1}\hA_{t-1}$ be the residual of the OLS estimate, $\hA_{t-1}$. We will show that for all $\bfR_t$, we have that $p(\bfX_t=\bfX_{t-1}\hA_{t-1} + \bfR_t \mid \bfX_{t-1}, K) = p(\bfX_t'=\bfX_{t-1}\hA_{t-1} - \bfR_t \mid \bfX_{t-1}, K)$, which implies the statement in the lemma if we assume isotropic Gaussian noise.

To show this, we first use Bayes formula:
\begin{align*}
    p&(\bfX_t\mid \bfX_{t-1},K) = \frac{p(K\mid \bfX_t, \bfX_{t-1})p(\bfX_t\mid \bfX_{t-1})}{p(K\mid \bfX_{t-1})} \\
    &= \frac{p(\hbeta, \hA_{T-1},\dots,\hA_{t}\mid \bfX_t)p(\hA_{t-1}\mid  \bfX_{t}, \bfX_{t-1})p(\hA_1, \dots, \hA_{t-2} \mid  \bfX_{t-1})p(\bfX_t\mid \bfX_{t-1})}{p(K\mid \bfX_{t-1})}
\end{align*}
In the second equality, we have used the Markov property, which implies the following statements:
\begin{align*}
    \hA_1, \dots, \hA_{t-2}\indep \bfX_t &\mid  \bfX_{t-1} \\ 
    \hA_{t},\dots, \hA_{T-1},\hbeta \indep \bfX_{t-1} &\mid  \bfX_t \\
    \hA_{t-1} \indep \hA_1, \dots, \hA_{t-2}, \hA_{t},\dots,\hA_{T-1} \hbeta &\mid  \bfX_t, \bfX_{t-1}.
\end{align*}

For $p(\bfX_t\mid \bfX_{t-1},K)=p(\bfX'_t\mid \bfX_{t-1}, K)$ to hold, we look at the factors that depend on $\bfX_t$. This tells us that we need to prove the following three statements:
\begin{description}
    \item[(a)] $p(\bfX_t \mid  \bfX_{t-1}) = p(\bfX'_t \mid  \bfX_{t-1})$ 
    \item[(b)] $p(\hA_{t-1}\mid  \bfX_{t}, \bfX_{t-1}) = p(\hA_{t-1}\mid  \bfX'_{t}, \bfX_{t-1})$
    \item[(c)] $p(\hbeta, \hA_{T-1},\dots,\hA_{t}\mid \bfX_t) = p(\hbeta, \hA_{T-1},\dots,\hA_{t}\mid \bfX'_t)$
\end{description}

We will now prove each of these statements:

\textbf{Statement (a)}: We first define $\bfeps'_t=\bfeps - 2\bfR_t$ where we have that $\bfX_t=\bfX_{t-1}A_{t-1}+\bfeps$.  Then, note that
\begin{equation}\label{eq:sta}
\bfX_{t-1}A_{t-1}+\bfeps'_t = \bfX_{t-1}A_{t-1}+\bfeps - 2\bfR_t =\bfX_t - 2\bfR_t = \bfX_{t-1}\hA_{t-1} - \bfR_t = \bfX'_t
\end{equation}
Eq.~\ref{eq:sta} implies that showing (a) equates to showing that $p(\bfeps)=p(\bfeps'_t)$, since the noise is independent of $\bfX_{t-1}$. For Gaussian noise, these probabilities are determined by the inner product of the noise, hence it is sufficient to prove
$$
 \bfeps^\top\bfeps = {\bfeps'}^\top\bfeps'
$$
We have that
\begin{align*}
{\bfeps'}^\top \bfeps' = \bfeps^\top \bfeps - 4 \bfeps^\top \bfR_t + 4\bfR_t^\top \bfR_t
\end{align*}
and thus, we need to show 
$$
\bfR_t^\top(\bfeps - \bfR_t) = 0~.
$$
By definition, 
$$
\bfR_t = \bfX_t - \bfX_{t-1}\hA_{t-1}
$$
and so 
$$
\bfR_t^\top(\bfeps - \bfR_t) = \bfR_t^\top(\bfX_t - \bfX_{t-1}A_{t-1} - (\bfX_t - \bfX_{t-1}\hA_{t-1})) = -\bfR_t^\top( \bfX_{t-1}(A_{t-1} - \hA_{t-1})) = 0~
$$
since $\bfR_t^\top \bfX_{t-1}=0$ is a property of the OLS estimator. This proves statement (a).

\textbf{Statement (b)}: Now, we see that 
\begin{align*}
\hA'_{t-1} &= (\bfX_{t-1}^\top \bfX_{t-1})^{-1}\bfX_{t-1}^\top \bfX'_{t} \\ 
&= (\bfX_{t-1}^\top \bfX_{t-1})^{-1}\bfX_{t-1}^\top (\bfX_{t} - 2\bfR_t) \\
&= \hA_{t-1} - 2(\bfX_{t-1}^\top \bfX_{t-1})^{-1}\bfX_{t-1}^\top \bfR_t = \hA_{t-1}~.
\end{align*}
since, again, $\bfX_{t-1}^\top \bfR_t= 0$. This implies that the distribution of $\hA_{t-1}$ is the same if conditioned on $\bfX_t$ or $\bfX'_t$, which proves statement (b). 

\textbf{Statement (c)}: We can factorize $p(\hbeta, \hA_{T-1},\dots,\hA_{t}\mid \bfX_t)$ as
$$
     p(\hbeta \mid   \hA_{T-1},\dots,\hA_{t}, \bfX_t) p(\hA_{T-1} \mid  \hA_{T-2},\dots,\hA_{t},  \bfX_t)\dots p(\hA_{t+1} \mid \hA_{t},  \bfX_t)p(\hA_{t} \mid  \bfX_t)
$$
Let $C_k = (\hA_{k}, \hA_{k-1},\dots,\hA_t)$ for $k=t,\dots,T-1$. Then, we can summarize the problem as the following: we need to show that each factor in the above equation is the same for both $\bfX_t$ and $\bfX'_t$, i.e.,
\begin{align*}
    &p(\hbeta \mid  C_{T-1}, \bfX_t) = p(\hbeta \mid  C_{T-1}, \bfX'_t) \\
    &p(\hA_k \mid  C_{k-1}, \bfX_t) = p(\hA_k \mid  C_{k-1}, \bfX'_t),\quad  k=t+1, \dots, T-1 \\
    &p(\hA_{t} \mid  \bfX_t) = p(\hA_{t} \mid  \bfX'_t)
\end{align*}

The first two equations could be seen as the distribution of OLS estimators with a (conditional) random design, while the third is the distribution of $\hA_{t}$ with a fixed design matrix $\bfX_t$.
Assuming mean-zero and uncorrelated Gaussian noise $\epsilon_{t} \sim \cN(0, \sigma_t^2 I)$ and $\epsilon_{Y} \sim \cN(0, \sigma_Y^2 I)$, the distributions of the OLS estimators are known, see Chapter 14 in~\cite{rice2006mathematical}, and we can show,
\begin{align*}
\hbeta \mid  C_{T-1}, \bfX_t \sim &\mathcal{N} \left( \beta, \sigma_Y^2 \E\left[\left(\bfX_{T}^\top \bfX_{T} \right)^{-1} \mid C_{T-1}, \bfX_t \right] \right)
\\
\hA_{k}^{(\text{row }i)}  \mid  C_{k-1}, \bfX_t \sim &\mathcal{N} \left( A_k^{(\text{row }i)} , \sigma_{k+1}^2 \E\left[\left(\bfX_k^\top \bfX_k \right)^{-1} \mid C_{k-1}, \bfX_t \right] \right), \quad k=t+1,\dots,T-1
\\
\hA_{t}^{(\text{row }i)} \mid  \bfX_t \sim &\mathcal{N} \left( A_t^{(\text{row }i)} , \sigma_{t+1}^2\left(\bfX_t^\top \bfX_t \right)^{-1} \right)
\end{align*}
where $i=1,\dots,d$ corresponds to the OLS estimators which are $d\times d$ matrices.
Now, it is sufficient to show that 
$$
\E\left[\left(\bfX_k^\top \bfX_k \right)^{-1} \mid  C_{k-1}, \bfX_t \right] = \E\left[\left(\bfX_k^\top \bfX_k \right)^{-1} \mid  C_{k-1}, \bfX'_t \right], \quad k=t+1,\dots, T
$$
and the special case where $\bfX_t^\top \bfX_t = {\bfX'}_t^\top \bfX'_t$. For the latter, we have
\begin{align*}
 (\bfX_{t-1}\hA_{t-1} \pm \bfR_t)^\top& (\bfX_{t-1}\hA_{t-1} \pm \bfR_t) = \\
 &= (\bfX_{t-1}\hA_{t-1})^\top(\bfX_{t-1}\hA_{t-1}) \pm 2 (\bfX_{t-1}\hA_{t-1})^\top \bfR_t + \bfR_t^\top \bfR_t \\
 &= (\bfX_{t-1}\hA_{t-1})^\top(\bfX_{t-1}\hA_{t-1}) + \bfR_t^\top \bfR_t
\end{align*}
where we have used that the cross term $(\bfX_{t-1}\hA_{t-1})^\top \bfR_t = \hA_{t-1}^\top \bfX_{t-1}^\top \bfR_t = 0$ because $\bfX_{t-1}^\top \bfR_t=0$. As the cross term is the only thing which differs in $\bfX_t^\top \bfX_t$ and ${\bfX'}_t^\top \bfX'_t$, the above derivation implies that they must be equal. 

For $\E\left[\left(\bfX_k^\top \bfX_k \right)^{-1} \mid  C_{k-1}, \bfX_t \right]$ with $k=t+1, \dots, T$, we use the same expression as before but observe the following recursive relationship between the inner product of $\bfX_k$ and $\bfX_{k-1}$:
\begin{align*}
\bfX_k^\top \bfX_k &= (\bfX_{k-1}\hA_{k-1})^\top \bfX_{k-1}\hA_{k-1} + \bfR_k^\top \bfR_k = \hA_{k-1}^\top \underbrace{\bfX_{k-1}^\top \bfX_{k-1}}_{\text{Inner product}}\hA_{k-1} + \bfR_k^\top \bfR_k
\end{align*}

Hence, we get that
$$
    \bfX_k^\top \bfX_k = \prod_{i=t}^{k-1} \hA_i^\top (\bfX_t^\top \bfX_t) \prod_{i=t}^{k-1} \hA_i + \sum_{j=t+1}^{k-1} \bfR_j^\top \bfR_j \prod_{i=j}^{k-1} \hA_i +  \bfR_k^\top \bfR_k
$$
We see that $\bfX_k^\top \bfX_k$ is directly dependent on $\bfX_t^\top \bfX_t$, noting that the residuals and OLS estimators are fixed given that we condition upon them. Since we already have shown that $\bfX_t^\top \bfX_t={\bfX'}_t^\top \bfX'_t$, this means that $\E\left[\left(\bfX_k^\top \bfX_k \right)^{-1} \mid  C_{k-1}, \bfX_t \right]=\E\left[\left(\bfX_k^\top \bfX_k \right)^{-1} \mid  C_{k-1}, \bfX'_t \right]$ for $k=t+1,\dots,T$, which completes the proof for statement (c).

Since we have proven all three statements that were presented in the beginning of this proof, we have shown that  $p(\bfX_t=\bfX_{t-1}\hA_{t-1} + \bfR_t \mid \bfX_{t-1}, K) = p(\bfX'_t=\bfX_{t-1}\hA_{t-1} - \bfR_t \mid \bfX_{t-1}, K)$.

Finally, to show that $\E[\bfY\mid \bfX_{T}, K] = \bfX_{T}\hbeta$, we can use the same arguments as before to show $p(\bfY=\bfX_T\hbeta + \bfR_Y \mid \bfX_T, K)=p(\bfY'=\bfX_T\hbeta - \bfR_Y \mid \bfX_T, K)$, although only statements (a) and (b) are necessary for this case.

\end{proof}

\begin{thmrem}
For the anisotropic case, the analysis becomes slightly different.
The noise in the data $\bfX_t$ is $\bfeps_t = [\epsilon_{t,1}, \dots \epsilon_{t,n}]^\top\in \mathbb{R}^{n \times d}$. The rows corresponds the the noise in a particular sample, while the columns are for the different features. Furthermore, the covariance of the $i$th feature is $\cov(\bfeps_t^{(\text{column }i)}) = \sigma_{t,i}^{2}I_n$ for $i=1,\dots,d$ where $I_n$ is the $n$-dimensional identity matrix. With anisotropic noise, we have $\sigma_{t,i}\neq \sigma_{t,i'}$ for $i,i'=1,\dots,d$. Then, the above lemma will be feasible using a similar analysis as we can show that,
$$
\hA_k^{(\text{row }i)} \sim \cN \left(A_k^{(\text{row }i)} , \sigma_{k+1,i}^2\; \E\left[\left(\bfX_k^\top \bfX_k \right)^{-1} \mid C_{k-1}, \bfX_t \right] \right)~.
$$
Then, the analysis follows as in Lemma~\ref{lem:ols_general}.
\end{thmrem}

Now, we prove that $\hA\hbeta$ is a sufficient statistic for $\htheta$.

\begin{thmlem}
    Let $D = (\bfX_1, \bfX_2, \dots, \bfX_{T}, \bfY)$ be a random dataset from a Gaussian-linear Markov dynamical system, as defined in Assumption~\ref{asmp:gauss}.
    Then, let $\htheta_\LuPTS = \hat{A}\hat{\beta}$ be the output of Algorithm~\ref{alg:lupts_lin} without stationarity, and $\htheta_\OLS \coloneqq (\bfX_1^\top \bfX_1)^{-1} \bfX_1^\top \bfY$. It holds that,
    $$
    \E_D[\htheta \mid \hA, \hbeta] = \hA \hbeta ~.
    $$
    \label{lem:suff_general}
\end{thmlem}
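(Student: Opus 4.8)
The plan is to establish the identity $\E_D[\htheta_\OLS \mid \hat{K}] = \hA_1\cdots\hA_{T-1}\hbeta$, where $\hat{K} = (\hA_1,\dots,\hA_{T-1},\hbeta)$ is the full collection of parameters learned by Algorithm~\ref{alg:lupts_lin}; the stated identity, conditioning on $(\hA,\hbeta)$, then follows at once by the tower property, since $\hA = \hA_1\cdots\hA_{T-1}$ and $\hbeta$ are measurable with respect to $\hat{K}$ while the target $\hA\hbeta$ is measurable with respect to $(\hA,\hbeta)$. Writing $H_1 = (\bfX_1^\top\bfX_1)^{-1}\bfX_1^\top$, so that $\htheta_\OLS = H_1\bfY$, the structural fact I will exploit is that $H_1$ depends on the data only through $\bfX_1$, whereas $\hA_1\cdots\hA_{T-1}\hbeta$ depends only on $\hat{K}$. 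I will therefore evaluate $\E[H_1\bfY \mid \hat{K}]$ by peeling off one conditional expectation at a time --- integrating out $\bfY$, then $\bfX_T$, then $\bfX_{T-1}$, and so on down to $\bfX_2$ --- with the factor $H_1$ carried along untouched throughout.

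For the first step I would condition additionally on $(\bfX_1,\dots,\bfX_T)$, use Lemma~\ref{lem:ols_general} (its second equation, in the extended form discussed below) to get $\E[\bfY \mid \bfX_1,\dots,\bfX_T,\hat{K}] = \bfX_T\hbeta$, and then, since $H_1$ is measurable with respect to the conditioning and $\hbeta$ is part of $\hat{K}$, conclude via the tower property that $\E[\htheta_\OLS \mid \hat{K}] = \E[H_1\bfX_T \mid \hat{K}]\,\hbeta$. Inductively, suppose the expression has been reduced to $\E[\,H_1\bfX_k\,(\hA_k\cdots\hA_{T-1}\hbeta) \mid \hat{K}\,]$ for some $k \in \{2,\dots,T\}$. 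Conditioning additionally on $(\bfX_1,\dots,\bfX_{k-1})$, pulling out $H_1$ and the $\hat{K}$-measurable product $\hA_k\cdots\hA_{T-1}\hbeta$, and applying Lemma~\ref{lem:ols_general} (its first equation) to replace $\E[\bfX_k \mid \bfX_1,\dots,\bfX_{k-1},\hat{K}]$ by $\bfX_{k-1}\hA_{k-1}$, the expression becomes $\E[\,H_1\bfX_{k-1}\,(\hA_{k-1}\hA_k\cdots\hA_{T-1}\hbeta) \mid \hat{K}\,]$ --- the same expression with $k$ decreased by one. When $k$ reaches $1$ we have $H_1\bfX_1 = (\bfX_1^\top\bfX_1)^{-1}\bfX_1^\top\bfX_1 = I$, leaving $\E[\hA_1\cdots\hA_{T-1}\hbeta \mid \hat{K}] = \hA_1\cdots\hA_{T-1}\hbeta = \htheta_\LuPTS$, as desired.

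The step that genuinely needs care --- and the main obstacle --- is that Lemma~\ref{lem:ols_general} conditions only on the \emph{single} preceding state together with $\hat{K}$, whereas the peeling above uses $\E[\bfX_k \mid \bfX_1,\dots,\bfX_{k-1},\hat{K}] = \bfX_{k-1}\hA_{k-1}$ (and the analogue for $\bfY$), which additionally conditions on the earlier states and on the ``earlier'' learned matrices. I would settle this with an elementary conditional-independence lemma: if $W \indep U \mid Z$, then $W \indep U \mid (Z,\,g(Z,W))$ for every measurable $g$, with the conditional law of $U$ unchanged. Taking $Z = \bfX_{k-1}$ as the ``present'', $W = (\bfX_k,\dots,\bfX_T,\bfY)$ as the ``future'', and $U = (\bfX_1,\dots,\bfX_{k-2})$ as the ``past'' --- these satisfy $W \indep U \mid Z$ by Assumption~\ref{asmp:markov} --- and observing that $(\hA_{k-1},\dots,\hA_{T-1},\hbeta)$ is a function of $(Z,W)$ while $(\hA_1,\dots,\hA_{k-2})$ is a function of $(U,Z)$, this lemma shows that both conditioning sets $\{\bfX_1,\dots,\bfX_{k-1}\}\cup\hat{K}$ and $\{\bfX_{k-1}\}\cup\hat{K}$ leave the conditional distribution of $\bfX_k$ equal to its conditional distribution given just $\bfX_{k-1}$ and $(\hA_{k-1},\dots,\hA_{T-1},\hbeta)$; hence the two conditional expectations coincide and equal $\bfX_{k-1}\hA_{k-1}$ by Lemma~\ref{lem:ols_general}. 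The case $k = T$ (involving $\bfY$ and the second equation of Lemma~\ref{lem:ols_general}) is handled identically. Once this reduction is in place, each peeling step is legitimate and the induction closes.
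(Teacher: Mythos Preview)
Your proposal is correct and follows essentially the same route as the paper: write $\htheta_\OLS = H_1\bfY$ and recursively apply Lemma~\ref{lem:ols_general} to peel off $\bfY, \bfX_T, \dots, \bfX_2$ one at a time, finishing with $H_1\bfX_1 = I$. The paper carries this out via explicit integrals and a Markov factorization of $p(\bfx_1,\dots,\bfx_T,\bfy \mid \hA,\hbeta)$, while you phrase it with the tower property; the only substantive difference is that you are more explicit than the paper in justifying why the extra conditioning (on earlier states and on all of $\hat{K}$) does not disturb the conditional expectations supplied by Lemma~\ref{lem:ols_general}.
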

\begin{proof}
Let smaller letters $(\bfx_1, \dots, \bfx_T, \bfy)$ indicate a value of the random dataset $D$ and $B=(\bfx_1^\top \bfx_1)^{-1}\bfx_1^\top$. Then, we have,
\begin{align*}
    &\E[\htheta\mid  \hA, \hbeta] = \int  p(\bfx_1,\dots,\bfx_T,\bfy\mid \hA, \hbeta) \htheta  d\bfX_1\dots d\bfX_Td\bfY \\
    & = \int  p(\bfy\mid \bfx_T,\hA, \hbeta) \prod_{t=2}^T p(\bfx_t\mid \bfx_{t-1},\hA, \hbeta) p(\bfx_1\mid \hA, \hbeta) \htheta d\bfX_1\dots d\bfX_T d\bfY \quad\text{(Markov property)} \\
    &= \int p(\bfy\mid \bfx_T,\hA, \hbeta) \prod_{t=2}^T p(\bfx_t\mid \bfx_{t-1},\hA, \hbeta) p(\bfx_1\mid \hA, \hbeta) \underbrace{B\bfy}_{=\htheta} d\bfX_1\dots d\bfX_T d\bfY \quad \text{(OLS definition)} \\
    &= \int \prod_{t=2}^T p(\bfx_t\mid \bfx_{t-1},\hA, \hbeta) p(\bfx_1\mid \hA, \hbeta) B  \underbrace{\left[\int \bfy p(\bfy\mid \bfx_T,\hA, \hbeta) d\bfY\right]}_{=\E[\bfY\mid \bfx_T,\hA, \hbeta]= \bfx_T\hat{\beta}} d\bfX_1\dots d\bfX_T \quad \text{(Lemma~\ref{lem:ols_general})} \\
    &= \int \prod_{t=2}^{T} p(\bfx_t\mid \bfx_{t-1},\hA, \hbeta) p(\bfx_1\mid \hA, \hbeta) B\bfx_T\hat{\beta}  d\bfX_1\dots d\bfX_T \\ 
    &=  \int \prod_{t=2}^{T-1} p(\bfx_t\mid \bfx_{t-1},\hA, \hbeta) p(\bfx_1\mid \hA, \hbeta) B \underbrace{\left[\int \bfx_T p(\bfx_T\mid \bfx_{T-1},\hA, \hbeta) d\bfX_T\right]}_{=\E[\bfX_T\mid \bfx_{T-1},\hA, \hbeta]=\bfx_{T-1}\hA_{T-1}} \hat{\beta}  d\bfX_1\dots d\bfX_{T-1} \quad \text{(Lemma~\ref{lem:ols_general})} \\
    &= \int \prod_{t=2}^{T-1} p(\bfx_t\mid \bfx_{t-1},\hA, \hbeta) p(\bfx_1\mid \hA, \hbeta) B\bfx_{T-1} \hA_{T-1}\hat{\beta}  d\bfX_1\dots d\bfX_{T-1} \\
    &=  \int \prod_{t=2}^{T-2} p(\bfx_t\mid \bfx_{t-1},\hA, \hbeta) p(\bfx_1\mid \hA, \hbeta) B\underbrace{\left[\int \bfx_{T-1}p(\bfx_{T-1}\mid \bfx_{T-2},K)d\bfX_{T-1}\right]}_{=\E[\bfX_{T-1}\mid \bfx_{T-2},\hA, \hbeta]=\bfx_{T-2}\hA_{T-2}} \hA_{T-1}\hat{\beta}  d\bfX_1\dots d\bfX_{T-2} \quad \text{(Lemma~\ref{lem:ols_general})} \\ 
    &= \dots = \quad \text{(recursively)}\\
    &= \int p(\bfx_1\mid \hA, \hbeta) \underbrace{B\bfx_1}_{=I} \hA_{1}\dots\hA_{T-1}\hbeta d\bfX_1 \\
    &= \hA_{1}\dots\hA_{T-1}\hbeta \int p(\bfx_1\mid \hA, \hbeta) d\bfX_1 = \hA_{1}\dots\hA_{T-1}\hbeta
\end{align*}
\end{proof}

For the final result, recall the definition of parameter mean squared error for an estimate $\htheta$ of $\theta$, where the expectation is taken over the dataset $D$ used to fit $\htheta$,
$$
\mse(\htheta) \coloneqq \E_D[\|\htheta - \theta\|_2^2]
$$
and the expected risk for a prediction function $h_D$ of baseline variables $X_1$ dependent on the random dataset $D$,
$$
\olR(h_D) \coloneqq \E_D[R(h_D)] = \E_D[\E_{X_1, Y}[(f_D(X_1) - Y)^2]]~.
$$
In the linear case, we let $\olR(\htheta)$ denote $\olR(\htheta^\top (\cdot) )$.

\begin{manualtheorem}{1}%

Let $D = (\bfX_1, \bfX_2, ..., \bfX_T, \bfY)$ be a random dataset with $\htheta_\OLS \coloneqq (\bfX_1^\top \bfX_1)^{-1} \bfX_1^\top \bfY$, and let  $\htheta_\LuPTS = \hat{A}\hat{\beta}$ be the output of Algorithm~\ref{alg:lupts_lin} without stationarity. 
Under the Gaussian-linear system defined in Assumption~\ref{asmp:gauss} with isotropic noise as in Lemma~\ref{lem:ols_general}, $\htheta_\LuPTS$ is unbiased, and 
\begin{equation}
\mse{}(\htheta_\LuPTS) = \mse{}(\htheta_{\mathrm{OLS}}) - \E_D[\trace(\cov(\htheta_\OLS \mid \htheta_\LuPTS))]~,
\end{equation}
where the expectation is taken over random datasets $D$, since both estimators are functions of them.  
Further, it holds for the expected risk that over new, unseen samples $(X_1, Y)$, 
\begin{equation}
\olR(\htheta_\LuPTS) = \olR(\htheta_\OLS) - \E_{D, X_1}[\V_{\htheta_\OLS}(\langle\htheta_\OLS,  X_1 \rangle \mid \htheta_\LuPTS)]~.
\end{equation}%
\end{manualtheorem}

\begin{proof}
Unbiasedness of $\htheta_\LuPTS$ follows from Lemma~\ref{lem:suff_general} or the standard proof for unbiasedness of $\htheta_\OLS$. The remaining result follows from Lemma~\ref{lem:suff_general} and standard Rao-Blackwell arguments,
\begin{align*}
\mse(\hA\hbeta) & = \E_D[\|\hA\hbeta - \theta\|^2] \\
& = \E_D[\|\E [\htheta \mid \hA\hbeta] - \theta\|^2] \quad \text{(Lemma~\ref{lem:suff_general})}\\
& = \E_D [\|\E[\htheta - \theta \mid \hA\hbeta]\|^2] \\
& = \E_D \left[\sum_{j=1}^d (\E[\htheta_j - \theta_j \mid \hA\hbeta])^2 \right] \\
& = \E_D \left[\sum_{j=1}^d \left( \E[(\htheta_j - \theta_j)^2 \mid \hA\hbeta] - \V\left[\htheta_j \mid \hA\hbeta \right] \right) \right] \\
& = \E_D[\E[\|\htheta - \theta\|^2 \mid \hA\hbeta]] - \E_D \left[\sum_{j=1}^d\V\left[\htheta_j \mid \hA\hbeta \right]\right] \\
& = \mse(\htheta)- \E_D\left[ \trace \left(\cov\left[\htheta \mid \hA\hbeta \right] \right) \right]~. \\
\end{align*}

Recall that $X_1$ represents a new test point, independent of the dataset $D$. For the second result, we note that for any estimator $\htheta$,
$$
\E_D[R(\htheta)] = \E_D[\E_{X_1,Y}[(\htheta^\top X_1 - Y)^2]] = \E_{X_1}[\E_D[\E_{Y\mid X_1}(\htheta^\top X_1 - Y)^2 \mid X_1]]~.
$$
Then, if $\htheta$ is unbiased for the Gaussian linear model, $Y = \theta^\top X_1 + \epsilon $ with $\epsilon \sim \mathcal{N}(0, \sigma^2)$, 
$$
\E_D[\E_{Y\mid X_1}[(\htheta^\top x_1 - Y)^2 \mid X_1=x_1]] = \underbrace{\E_D[( \htheta^\top x_1 -  \E_D[\htheta]^\top x_1 )^2]}_{=\; \mbox{variance}} + \underbrace{(\E_D[\htheta]^\top x_1 - \theta^\top x_1)^2}_{=\; \mbox{bias}^2\; = 0} + \sigma^2~.
$$
Since $\E_D[\htheta] = \theta$, the variance term can then be rewritten as, 
$$
\E_D[( \htheta^\top x_1 -  \E_D[\htheta]^\top x_1 )^2] = \E_D[\langle \htheta - \theta, x_1 \rangle^2]~.
$$
Then, since $\hA\hbeta$ is an unbiased estimator of $\theta$,
\begin{align*}
\E_{D}[R(\hA\hbeta)] & = \E_{X_1}[  \E_D[\langle \hA\hbeta - \theta,  X_1\rangle^2] ] + \sigma^2 \\
& = \E_{X_1}[ \E_D[\langle \E_{\htheta}[\htheta \mid \hA\hbeta] - \theta,  X_1\rangle^2] ] + \sigma^2 \quad \text{(Lemma~\ref{lem:suff_general})}\\
& = \E_{X_1}[ \E_D[\E_{\htheta}[\langle\htheta - \theta,  X_1\rangle \mid \hA\hbeta]^2] ] + \sigma^2 \\
& = \E_{X_1}[ \E_D[\E_{\htheta}[\langle\htheta - \theta,  X_1\rangle^2 \mid \hA\hbeta] - \V_{\htheta}(\langle\htheta - \theta,  X_1\rangle \mid \hA\hbeta)] ] + \sigma^2 \\
& = \E_D[R(\htheta)] - \E_{D, X_1}[\V_{\htheta}(\langle\htheta - \theta,  X_1\rangle \mid \hA\hbeta)] ~.
\end{align*}
In the last step, we make use of the fact that $\htheta$ is unbiased and
$$
\E_D[R(\htheta)] = \E_{X_1}[ \E_D[\langle\htheta - \theta,  X_1\rangle^2 ] + \sigma^2~.
$$
\end{proof}

\section{PROOF OF THEOREM 2}

We extended the distillation-based method as described by \citet{hayashi2019long} with LuPTS as teacher model, which we called Distill-Seq. In the linear setting with squared loss, the distillation loss function is defined as,
\begin{equation}
 \htheta_{Dist} = \argmin_\theta \lambda ||\bfY-\bfX_1\theta||_2^2 + (1-\lambda)||\bfY_{soft}-\bfX_1\theta||_2^2 
 \label{eq:distill_loss_appendix}
\end{equation}
where $\lambda \in [0,1]$ and $\bfY_{soft}$ is the soft target provided by the teacher. In the case where a student model minimizes the above loss function with LuPTS as teacher model, we can prove the following theorem.

\begin{manualtheorem}{2}
Let $\htheta_\LuPTS$ be the output of Algorithm~\ref{alg:lupts_lin} and $\htheta_\OLS=(\bfX_1^\top \bfX_1)^{-1}\bfX_1^\top \bfY$. Let $\htheta_{Dist}$ be the solution to \eqref{eq:distill_loss_appendix} with $\hat{\bfY}_{soft}=\bfX_1\htheta_\LuPTS$ and $\lambda\in[0,1]$. Then, it holds that
\begin{equation}
     \htheta_{Dist} = \lambda \htheta_\OLS + (1-\lambda)\htheta_\LuPTS~.
     \label{eq:convex_combination_appendix}
\end{equation}
Additionally, under Assumption~\ref{asmp:gauss}, it holds that
\begin{equation}
    \mse (\htheta_\LuPTS) \leq \mse(\htheta_{Dist}) \leq \mse (\htheta_\OLS)~.
    \label{eq:sandwich_mse_appendix}
\end{equation}
\end{manualtheorem}

\begin{proof}
We will first show the first part of the theorem, namely that equation~\eqref{eq:convex_combination_appendix} holds. Then, as a consequence, we will proceed with proving that equation~\eqref{eq:sandwich_mse_appendix} holds. 

Since the optimization problem in \eqref{eq:distill_loss_appendix} is convex, we can compute the derivative with respect to $\theta$ and find the value for which the derivative is zero,
\begin{align*}
    \frac{d}{d\theta} &= \left( \lambda ||\bfY-\bfX_1\theta||_2^2 + (1-\lambda)||\bfX_1\htheta_\LuPTS-\bfX_1\theta||_2^2  \right) \\
    &= \left( 2\lambda \bfX_1^\top (\bfY-\bfX_1\theta) + 2(1-\lambda) \bfX_1^\top (\bfX_1\htheta_\LuPTS-\bfX_1\theta) \right) \\
    &= 2\bfX_1^\top \left( \underbrace{\lambda \bfY + (1-\lambda)\bfX_1\htheta_\LuPTS}_{\tilde{\bfY}}  - \bfX_1 \theta \right) = 0~.
\end{align*}
The solution is given by the OLS estimate 
$$
    \htheta_{Dist} = (\bfX_1^\top \bfX_1)^{-1} \bfX_1^\top \tilde{\bfY}
$$
where we can expand $\tilde{\bfY}$ to get the following,
\begin{align*}
    \htheta_{Dist} &= (\bfX_1^\top \bfX_1)^{-1} \bfX_1^\top \left( \lambda \bfY + (1-\lambda)\bfX_1\htheta_\LuPTS \right) \\
    &= \lambda (\bfX_1^\top \bfX_1)^{-1} \bfX_1^\top \bfY + (1-\lambda)(\bfX_1^\top \bfX_1)^{-1} \bfX_1^\top \bfX_1\htheta_\LuPTS \\
    &= \lambda \htheta_\OLS + (1-\lambda)\htheta_\LuPTS~.
\end{align*}
This proves the first part of the theorem. 
Next, we prove equation~\eqref{eq:sandwich_mse_appendix}, which will be done element-wise, i.e. we prove the statement for $\htheta_{Dist}^{(j)}$ for some $j=1,\dots, d$. 

First, due to equation~\eqref{eq:distill_loss_appendix}, we note that since $\htheta_\OLS$ and $\htheta_\LuPTS$ are unbiased estimators, $\htheta_{Dist}$ is also unbiased. Hence, we can write

$$
    \mse(\htheta_{Dist}^{(j)}) = \V(\htheta_{Dist}^{(j)}) + \underbrace{\text{Bias}(\htheta_{Dist}^{(j)})^2}_{=0} = \V (\htheta_{Dist}^{(j)}) 
$$
Then, we use equation~\eqref{eq:convex_combination_appendix} again to rewrite the variance of $\htheta_{Dist}^{(j)}$,
\begin{align*}
    \V (\htheta_{Dist}^{(j)}) &= \V (\lambda \htheta_\OLS^{(j)} + (1-\lambda)\htheta_\LuPTS^{(j)}) \\
    &= \lambda^2 \V (\htheta_\OLS^{(j)} ) + (1-\lambda)^2 \V (\htheta_\LuPTS^{(j)}) + 2\lambda(1-\lambda)\cov(\htheta_\OLS^{(j)}, \htheta_\LuPTS^{(j)})~.
\end{align*}

We shall focus on the covariance term, and using the law of total covariance we can show the following,
\begin{align*}
    \cov(\htheta_\OLS^{(j)}, \htheta_\LuPTS^{(j)}) =& \E_{\htheta_\LuPTS^{(j)}} \left[ \cov(\htheta_\OLS^{(j)}, \htheta_\LuPTS^{(j)} \mid \htheta_\LuPTS^{(j)}) \right]   \\
    & + \cov_{\htheta_\LuPTS^{(j)}} \left( \E[ \htheta_\OLS^{(j)} \mid \htheta_\LuPTS^{(j)} ],  \E[ \htheta_\LuPTS^{(j)} \mid \htheta_\LuPTS^{(j)} ] \right) \\
    = & \E_{\htheta_\LuPTS^{(j)}} \left[ \E\left[ (\htheta_\OLS^{(j)} - \E[ \htheta_\OLS^{(j)} \mid \htheta_\LuPTS^{(j)} ]) \underbrace{( \htheta_\LuPTS^{(j)} - \E[ \htheta_\LuPTS^{(j)} \mid \htheta_\LuPTS^{(j)} ])}_{=0} \right] \mid \htheta_\LuPTS^{(j)}) \right] 
    \\
    & + \cov_{\htheta_\LuPTS^{(j)}} \left( \htheta_\LuPTS^{(j)} ,  \htheta_\LuPTS^{(j)} \right) \quad \text{ by Lemma 2 and definition of covariance}
    \\
    =& \V(\htheta_\LuPTS^{(j)})~.
\end{align*}

Hence, we end up with 

\begin{align*}
    \V (\htheta_{Dist}^{(j)}) &= \lambda^2 \V (\htheta_\OLS^{(j)} ) + (1-\lambda)^2 \V (\htheta_\LuPTS^{(j)}) + 2\lambda(1-\lambda)\cov(\htheta_\OLS^{(j)}, \htheta_\LuPTS^{(j)}) \\
    &= \lambda^2 \V (\htheta_\OLS^{(j)} ) + (1-\lambda)^2 \V (\htheta_\LuPTS^{(j)}) + 2\lambda(1-\lambda)\V(\htheta_\LuPTS^{(j)}) \\
    &= \lambda^2 \V (\htheta_\OLS^{(j)} ) + (1-\lambda^2) \V (\htheta_\LuPTS^{(j)})~.
\end{align*}

Looking at the last line of the previous equation, we know from Theorem~\ref{thm:rao} that $\V (\htheta_\LuPTS^{(j)} ) \leq \V (\htheta_\OLS^{(j)} )$. Hence, the lower bound of $ \V (\htheta_{Dist}^{(j)})$ is obtained by setting $\lambda=0$ and, similarly, the upper bound is obtained when $\lambda=1$. This is possible since we can choose $\lambda\in [0,1]$ freely. This leads to the following results, 
$$
\V (\htheta_\LuPTS ^{(j)}) \leq \V (\htheta_{Dist}^{(j)}) \leq \V (\htheta_\OLS^{(j)})
$$
which, due to the unbiasedness of the estimators, can be written as
$$
\mse (\htheta_\LuPTS^{(j)}) \leq \mse(\htheta_{Dist}^{(j)}) \leq \mse(\htheta_\OLS^{(j)})~.
$$
Lastly, since $\mse (\htheta_\LuPTS) = \sum_{j=1}^d \mse (\htheta_\LuPTS^{(j)})$ and that the inequality holds element-wise, we have that,
$$
    \mse (\htheta_\LuPTS) \leq \mse(\htheta_{Dist}) \leq \mse (\htheta_\OLS)~.
$$

\end{proof}

\section{LuPTS WITH NON-LINEAR ESTIMATORS}
Under Assumption~\ref{asmp:markov} (Markovianity), it is natural to consider the following generalized (non-linear) procedure: a) For each time-step $t$, fit a transition function $f_t$ predicting $X_{t+1}$ from $X_t$, b) Fit $g$ to predict $Y$ from $X_T$, c) Return $h = g \circ f_{T-1} \circ \cdots \circ f_1$. This approach outlined in Algorithm~\ref{alg:lupts}. The idea may be compared to model-based value estimates in reinforcement learning~\citep{sutton2018reinforcement}, in which predictions of future rewards are based on simulating roll-outs under a learned policy and model of state dynamics. 


\SetKwInput{kwParam}{Parameters}
\begin{algorithm}[t!]
    \kwParam{Function classes $\cF$, $\cG$, loss function $L$}
    \KwData{$D = \{(x_{1,1}, ..., x_{1,T}, y_1), ..., (x_{m,1}, ..., x_{m,T}, y_m)\} \sim p^m(X_1, ..., X_T, Y)$}
    \vspace{.5em}
    \For{$t = 1, ..., {T-1}$}{
        $\hat{f}_t = \argmin_{f_t \in \cF} \frac{1}{m} \sum_{i=1}^m L(f_t(x_{i, t}), x_{i,t+1})$ \\
    }
    $\hat{g} = \argmin_{g \in \cG} \frac{1}{m} \sum_{i=1}^m L(g(x_{i, T}), y_i)$ \\
    \vspace{.5em}
    \Return $h = \hat{g} \circ \hat{f}_{T-1} \circ ... \circ \hat{f}_1 $
 \caption{Learning using privileged time series (LuPTS)}
 \label{alg:lupts}
\end{algorithm}

In the general case, without assumptions on the data-generating process or the hypothesis classes $\cF$ and $\cG$, we may bound the expected risk of the LuPTS estimator in terms of the risk accumulated in simulating the system dynamics through $f$, and that of the outcome model $g$.

\begin{thmthm}[Risk expansion]
Let $\hh = \hg \circ \hf$ be the output of Algorithm~\ref{alg:lupts} with $\hat{f}=\hat{f}_{T-1} \circ \hat{f}_{T-1} \circ \dots \circ \hat{f}_{1}$ the estimated system dynamics and $\hat{g}$ the prediction model of $Y$ from $X_T$. Then, 
\begin{equation}
    R(\hf \circ \hg) \leq R_{X_T}(\hf) + R_{Y}(\hg) + 2\sqrt{R_{X_T}(\hf)R_{Y}(\hg)}
\end{equation}
where $R(\hf \circ \hg) = \E[(Y - \hg(\hf(X_1)))^2]$ is the expected risk of predicting $Y$ from $X_1$,
$$
R_{X_T}(f) = \E\left[\left(\hat{g}(\hat{f}(X_1)) - \hat{g}(X_T) \right)^2\right]
\;\;\mbox{ and }\;\;
R_{Y}(\hg) = \E\left[\left(Y - \hat{g}(X_T)\right)^2\right]~.
$$
Here, $R_{X_T}(f)$ is the mean squared error in predictions of $Y$ that stems from errors in the learned dynamical system while $R_{Y}(\hg)$ is due to the error in the outcome model $\hat{g}$.
\end{thmthm}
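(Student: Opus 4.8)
The plan is to recognize this inequality as nothing more than the triangle inequality in $L^2(p)$, applied to a decomposition of the prediction error of $\hat g \circ \hat f$ that routes through the \emph{true} intermediate state $X_T$. First I would fix the learned functions $\hat f = \hat f_{T-1}\circ\cdots\circ\hat f_1$ and $\hat g$ (equivalently, condition on the training sample $D$), so that the three quantities $R(\hat g\circ\hat f)$, $R_{X_T}(\hat f)$ and $R_Y(\hat g)$ are each expectations over the test distribution $p(X_1,\dots,X_T,Y)$ only, with $\hat f,\hat g$ treated as deterministic maps.

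Then I would write the pointwise identity
\[
Y - \hat g(\hat f(X_1)) \;=\; \underbrace{\big(Y - \hat g(X_T)\big)}_{=:\,A} \;+\; \underbrace{\big(\hat g(X_T) - \hat g(\hat f(X_1))\big)}_{=:\,B},
\]
so that $R(\hat g\circ\hat f) = \E[(A+B)^2] = \E[A^2] + \E[B^2] + 2\,\E[AB]$, where $\E[A^2] = R_Y(\hat g)$ and $\E[B^2] = R_{X_T}(\hat f)$ directly by the definitions in the statement. The only remaining step is to bound the cross term by Cauchy--Schwarz, $\E[AB] \le \sqrt{\E[A^2]\,\E[B^2]} = \sqrt{R_Y(\hat g)\,R_{X_T}(\hat f)}$, which gives
\[
R(\hat g\circ\hat f) \;\le\; R_{X_T}(\hat f) + R_Y(\hat g) + 2\sqrt{R_{X_T}(\hat f)\,R_Y(\hat g)} \;=\; \big(\sqrt{R_{X_T}(\hat f)} + \sqrt{R_Y(\hat g)}\big)^2,
\]
i.e.\ the claimed bound. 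Equivalently, one can phrase the entire argument as a single invocation of Minkowski's inequality $\|A+B\|_{L^2}\le\|A\|_{L^2}+\|B\|_{L^2}$ followed by squaring.

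There is no real obstacle here: the statement is essentially a one-line $L^2$ triangle inequality, so the only thing to be careful about is bookkeeping. In particular I would stress that $R_{X_T}(\hat f)$ is measured in \emph{outcome space} --- it is $\E[(\hat g(\hat f(X_1)) - \hat g(X_T))^2]$, not the raw state discrepancy $\E[\|\hat f(X_1) - X_T\|_2^2]$ --- since it is precisely this choice that makes $A$ and $B$ sum to the prediction residual $Y-\hat g(\hat f(X_1))$. I would also note that no assumptions on $\cF$, $\cG$, Markovianity, linearity, or Gaussianity are needed for this particular inequality; those assumptions only enter if one additionally wants to control $R_{X_T}(\hat f)$ itself by propagating the per-step transition errors through the composition $\hat f_{T-1}\circ\cdots\circ\hat f_1$ and through $\hat g$, which is beyond the scope of this statement.
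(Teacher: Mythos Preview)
Your proposal is correct and matches the paper's own proof essentially line for line: add and subtract $\hat g(X_T)$, expand the square, and bound the cross term by Cauchy--Schwarz. Your additional framing via Minkowski's inequality and the remark that no structural assumptions on $\cF,\cG$ are needed are accurate and helpful, but the underlying argument is the same.
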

\begin{proof}
As seen in~\citep{feinberg2018modelbased}.
\begin{equation}
    \begin{aligned}
        \E\left[ \left(\hat{g}(\hat{f}(x_1)) - y \right)^2\right]
        =& \E\left[ \left(\hat{g}(\hat{f}(x_1)) - y + \hat{g}(x_T) - \hat{g}(x_T) \right)^2\right] \\
        =& \E\left[\left((\hat{g}(\hat{f}(x_1)) - \hat{g}(x_T)) - (y - \hat{g}(x_T))\right)^2\right] \\
        =& \E\left[\left(\hat{g}(\hat{f}(x_1)) - \hat{g}(x_T) \right)^2\right] + \E\left[\left(y - \hat{g}(x_T)\right)^2\right] \\
        & - 2 \E\left[\left(\hat{g}(\hat{f}(x_1)) - \hat{g}(x_T) \right)\left(y - \hat{g}(x_T)\right) \right] \\
        &\leq \E\left[\left(\hat{g}(\hat{f}(x_1)) - \hat{g}(x_T) \right)^2\right] + \E\left[\left(y - \hat{g}(x_T)\right)^2\right] \\
        & + 2 \sqrt{\E\left[\left(\hat{g}(\hat{f}(x_1)) - \hat{g}(x_T) \right)^2\right] \E\left[\left(y - \hat{g}(x_T)\right)^2 \right]}
    \end{aligned}
\end{equation}
The first equalities are algebra, and the inequality step comes from the Cauchy-Schwarz inequality for random variables. 
\end{proof}
\begin{thmrem}
A similar result appears in~\citet{feinberg2018modelbased} for the case of model-based value expansion for model-free reinforcement learning. Although the bound cannot be compared directly to the risk of the baseline method, it gives an indication about how the LuPTS algorithm behaves. $R_{X_T}(f)$ can be expected to increase as $T$ becomes larger since $X_T$ gets "further away" from $X_1$, making $X_T$ more difficult to predict. Meanwhile, $R_{Y}(\hg)$ is unaffected by this. 
\end{thmrem}

\section{EXPERIMENT DETAILS}
\label{app:experiment_details}

\subsection{Computational Resources}

All procedures for pre-processing real-world datasets, generating synthetic datasets, training and evaluating models are implemented in Python with the help of standard scientific modules such as NumPy and scikit-learn. The experiments are run on mid-tier laptops generally utilizing one CPU core. Running times for each individual experiment under this setup rarely exceed a couple of minutes. The full set of experiments can be reproduced in less than 48 hours. 

\subsection{Synthetic Experiments}
\label{app:synthetic_details}

We give a detailed description of how the synthetic data we use in the experiments is generated. As a reminder, the Gaussian-linear dynamical system of interest is 
\begin{equation*}
	\label{eq:synthetic_sys}
	\begin{aligned}
      X_t  &= A_{t-1}^{\top}X_{t-1} + \epsilon_t, \quad \text{ for } t=2,\dots,T \\
      Y  &= \beta^{\top}X_T + \epsilon_Y~.
    \end{aligned}
\end{equation*}

To verify and further investigate our theoretical results, we sample from a synthetic dynamical system where Markovianity and linearity with additive isotropic Gaussian noise hold. The parameters $A_t\in \mathbb{R}^{d\times d}$ and $\beta\in \mathbb{R}^{d\times 1}$ were generated in the following way: For each $t=1,\dots,T-1$, all elements in $A_t$ are sampled independently from a Normal distribution $\cN(\mu=0, \sigma=0.2)$, except for the diagonal elements of $A_t$ which were set to 1. The linear parameter for the outcome model $\beta$ was sampled from the same distribution, i.e. $\beta_j\sim \cN(\mu=0, \sigma=0.2)$ for $j=1,\dots,d$.

As mentioned, the eigenvalues of $A_{t}$ influence the system's behavior and stability, hence we enforce the spectral radius $\forall t: \rho(A_{t})=\kappa=1.5$ for all $t$ for the experiments in Section~\ref{sec:synthetic}. This is by factorizing $A_t$ into its spectral decomposition $U_t\Lambda_t U_t^{-1}$ and computing $\Lambda_{t}^{(new)}= \frac{\kappa}{\rho(A_t)} \Lambda_t$. Then, an update $A_{t}=U_t\Lambda_t^{(new)} U_t^{-1}$ is performed where $\kappa$ becomes the new spectral radius of $A_t$.

For all experiments, we use the following default values unless otherwise stated: $\kappa=1.5$, $n=1000$, $T=10$, $d=25$, and $\V(\epsilon_t)=\V(\epsilon_Y)=1$ for $t=1,\dots,T-1$. Finally, the input distribution is $p(X_1)=\cN(\mu=0, \sigma^2=5)$.

\subsubsection*{Additional Experiments: Testing Stationary Systems}
For the experiments in Section~\ref{sec:synthetic}, we solely consider synthetic systems with time-dependent transition $A_{t}$. We also include an experiment for systems where the transitions are stationary, that is $A_{t} = A_{t'}, t, t' =1,\dots,T-1$. The generation process is identical with the exception that only one transition matrix is sampled, $A$. In the case with more than one privileged time point, this enables us to evaluate the potential benefits that using Stat-LuPTS (instead of the non-stationary variant) have in a setting where the assumptions holds true.

When the stationary assumptions is true (Figure~\ref{fig:stat_true}), LuPTS does better than baseline, as before. More importantly, Stat-LuPTS is closer to the true parameter estimate than both of them. Meanwhile, as expected, when breaking the stationary assumption (Figure~\ref{fig:stat_false}), Stat-LuPTS performs significantly worse while LuPTS and baseline remain about the same. These experiments indicate that the stationary variant of LuPTS is preferable when the stationarity assumption is true. 

\begin{figure}[t]
    \centering
    \begin{subfigure}[t]{0.35\textwidth}
        \centering
        \includegraphics[width=\textwidth]{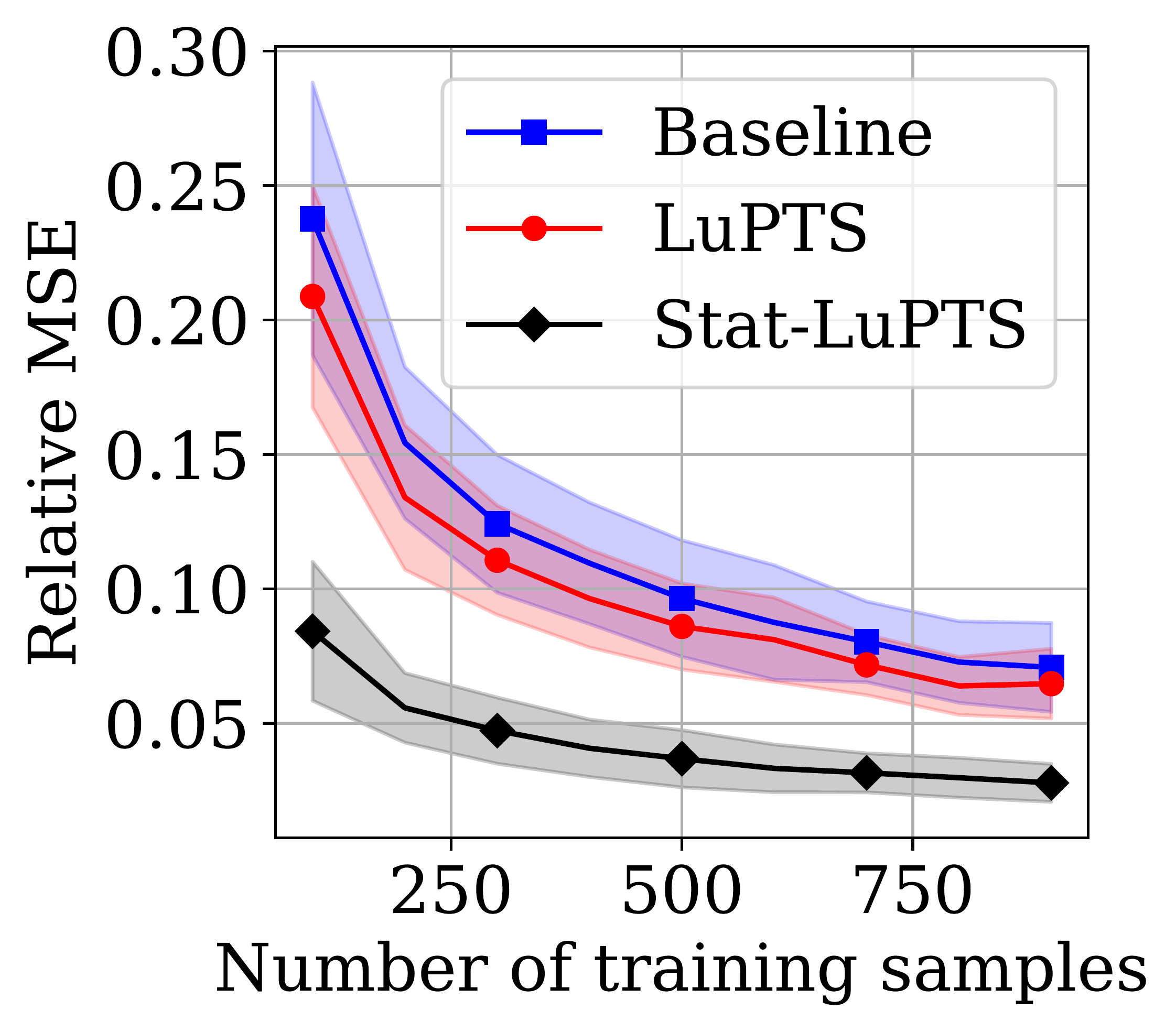}
        \caption{Stationarity is true.}
        \label{fig:stat_true}
    \end{subfigure}%
    ~
    \begin{subfigure}[t]{0.31\textwidth}
        \centering
        \includegraphics[width=\textwidth]{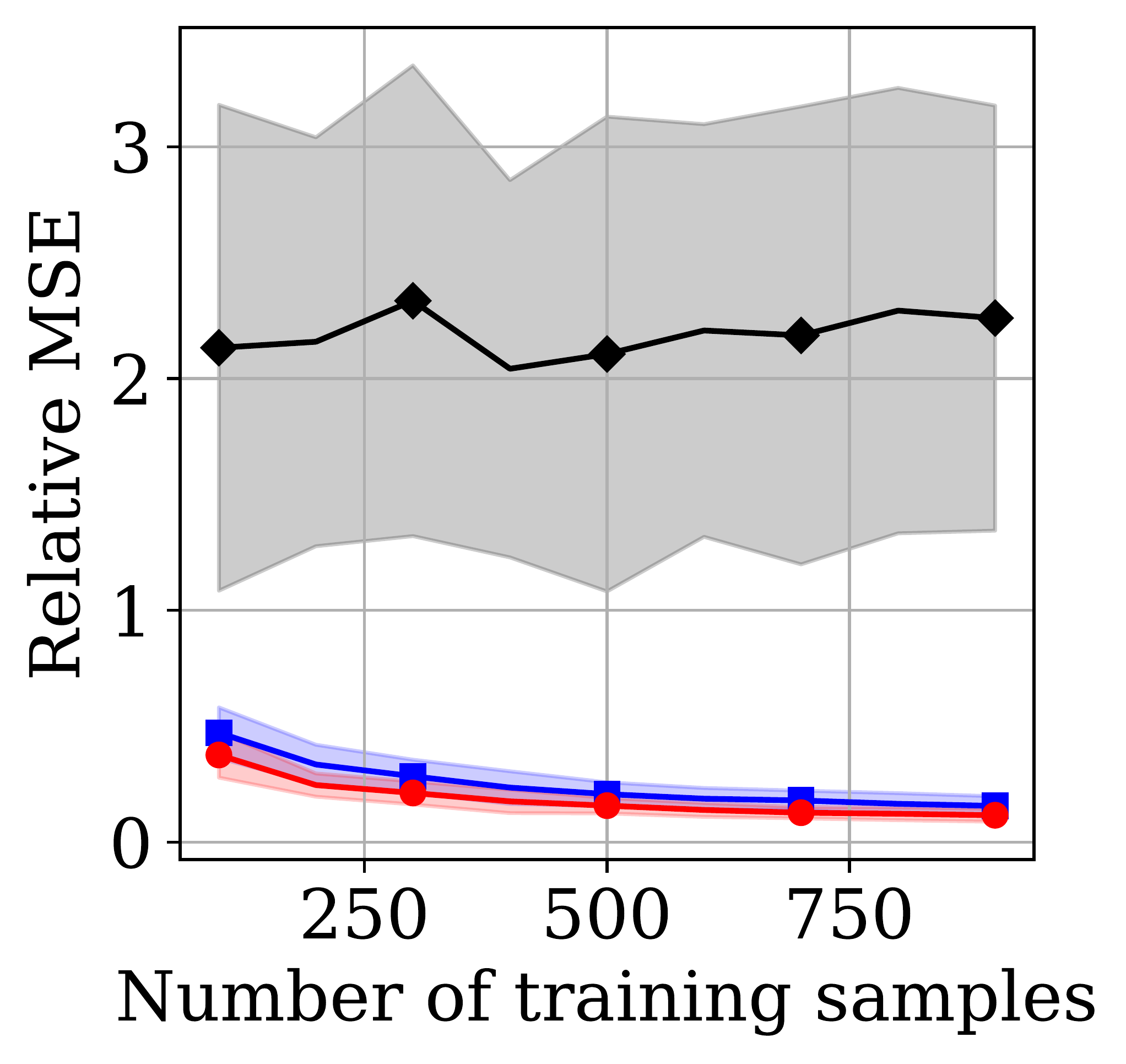}
        \caption{Stationarity is violated.}
        \label{fig:stat_false}
    \end{subfigure}%
    \caption{Parameter recovery with or without stationarity when varying the number of training samples $n$. $R^2$ used as metric; shaded region corresponds to one standard deviation over 200 iterations.}
    \label{fig:parameter_recovery_stationarity}
\end{figure}

\subsection{Forecasting Air Quality}
\paragraph*{Implementation Of Distillation Methods} We implement the distillation models and loss function, as described in Section \ref{sec:distill}, in PyTorch v1.7. The loss function is optimized using Adam \citep{kingma2014adam}, and the models are trained for 200 epochs. Error bars on all our plots are generated by training and evaluating on different train/test splits over 100 iterations.

\paragraph*{Pre-processing} Due to the prevalence of missing values for the PM$_{2.5}$ concentration levels in the dataset, the first pre-processing step is to extract all non-overlapping sequences of length $T$ that have no missing values for the PM$_{2.5}$ concentration. In addition, we enforce a rule that there must be at least a gap of six hours between adjacent sequences to decrease correlations between them. Finally, dummy encoding was used for the categorical features in the dataset.

\begin{table}[ht]
    \centering
    \caption{Features in the PM$_{2.5}$ dataset.}
    \label{tab:features_pm25}
    \begin{tabular}{ll}
        \toprule
        Feature & Type  \\
        \midrule
        Temperature & Numerical  \\
        Humidity & Numerial \\
        Dew Point & Numerical \\
        Pressure & Numerical \\
        Cumulated wind speed & Numerical \\
        PM2.5 concentration & Numerical \\
        Season & Categorical \\
        Combined wind direction & Categorical \\
        \bottomrule
    \end{tabular}
\end{table}

\paragraph*{Evaluation}
During training, the dataset was split into a training and test set portion consisting of 80\% and 20\% of the data respectively. The training procedure on the dataset for the forecasting task is the following: We vary the number of training samples, and for each sample size, data points are randomly sampled from the training set without replacement. Then, before training the algorithms on this set, we apply zero-mean unit-variance standardization and mean imputation where applicable. Each algorithm is then evaluated after training on a held-out test set, which is the same for every run, and the corresponding $R^2$ score is noted. This process is iterated 200 times per sample size.

\subsubsection*{Additional Experiments}
In this section, we show additional experiments performed on the air quality forecasting task.

\paragraph{More Cities}
We present the same results as shown in Section~\ref{sec:cities} for all of the Chinese cities in the dataset; Shenyang, Beijing, Chengdu, Shanghai and Guangzhou. These are shown in Figure~\ref{fig:pm25_shenyang_app}, \ref{fig:pm25_beijing_app}, \ref{fig:pm25_chengdu_app}, \ref{fig:pm25_shanghai_app} and \ref{fig:pm25_guangzhou_app}, respectively.

\paragraph{Comparison To Non-linear Baselines}

In addition to the distillation-based baselines in the main paper, we compare LuPTS to non-linear baselines in the form of random forest (RF) and k-nearest neighbors regression (KNN). These can be found in Table~\ref{tab:fc_nonlinear_n200_6hour} and Table~\ref{tab:fc_nonlinear_n200_12hour} where we have a fixed sample size $n=200$ and a prediction horizon of either 6 or 12 hours. First, we describe the implementation details for RF and KNN regression. We use the RandomForestRegressor and KNeighborsRegressor implementations of the Python module scikit-learn~\cite{scikit-learn}. The model parameters are tuned using randomized search with 2-fold cross validation. For the RandomForestRegressor we tune the number of trees, max depth of the trees, whether bootstrap sampling is used, the minimum number of samples required to split a node and the minimum number of samples required to be at a leaf node. For the KNeighborsRegressor we tune the number of neighbors used, weight function used in prediction, the size of the leaves and the power parameter for the Minkowski metric ($p=1$ or $p=2$). The specific ranges for each parameter can be found in the attached code to this paper. 

For the 6 hour predictions (see Table~\ref{tab:fc_nonlinear_n200_6hour}), we see that all linear methods (Baseline, LuPTS variants and distillation-based variants) perform better than both RF and KNN for all cities, although the gap between LuPTS and RF is relatively small for Beijing and Guangzhou in particular. For the 12 hour predictions (see Table~\ref{tab:fc_nonlinear_n200_12hour}), the results look similar except for Guangzhou where RF performs the best among all the methods. 
A possible explanation for why the non-linear methods in almost all cases perform worse could be due to the low-sample regime which is not as suitable for their larger flexibility. In particular, a benefit of the linear methods is that they either do not need model parameter tuning at all or to a smaller degree in comparison to the non-linear methods. 

\begin{figure}[t!]
    \centering
    \begin{subfigure}[t]{0.22\textwidth}
        \centering
        \includegraphics[width=\textwidth]{fig_png/chinaPM25/experiment_shenyang_varyPTS_T5-1.png}
        \caption{6 hour forecast}
        \label{fig:shenyangT6_app}
    \end{subfigure}%
    ~
    \begin{subfigure}[t]{0.22\textwidth}
        \centering
        \includegraphics[width=\textwidth]{fig_png/chinaPM25/experiment_shenyang_varyPTS_T23-1.png}
        \caption{24 hour forecast}
        \label{fig:shenyangT24_app}
    \end{subfigure}%
    ~
    \begin{subfigure}[t]{0.22\textwidth}
        \centering
        \includegraphics[width=\textwidth]{fig_png/chinaPM25/experiment_shenyang_ts1_T5-1.png}
        \caption{6 hour forecast}
        \label{fig:shenyangT6_distill_app}
    \end{subfigure}%
    ~
    \begin{subfigure}[t]{0.22\textwidth}
        \centering
        \includegraphics[width=\textwidth]{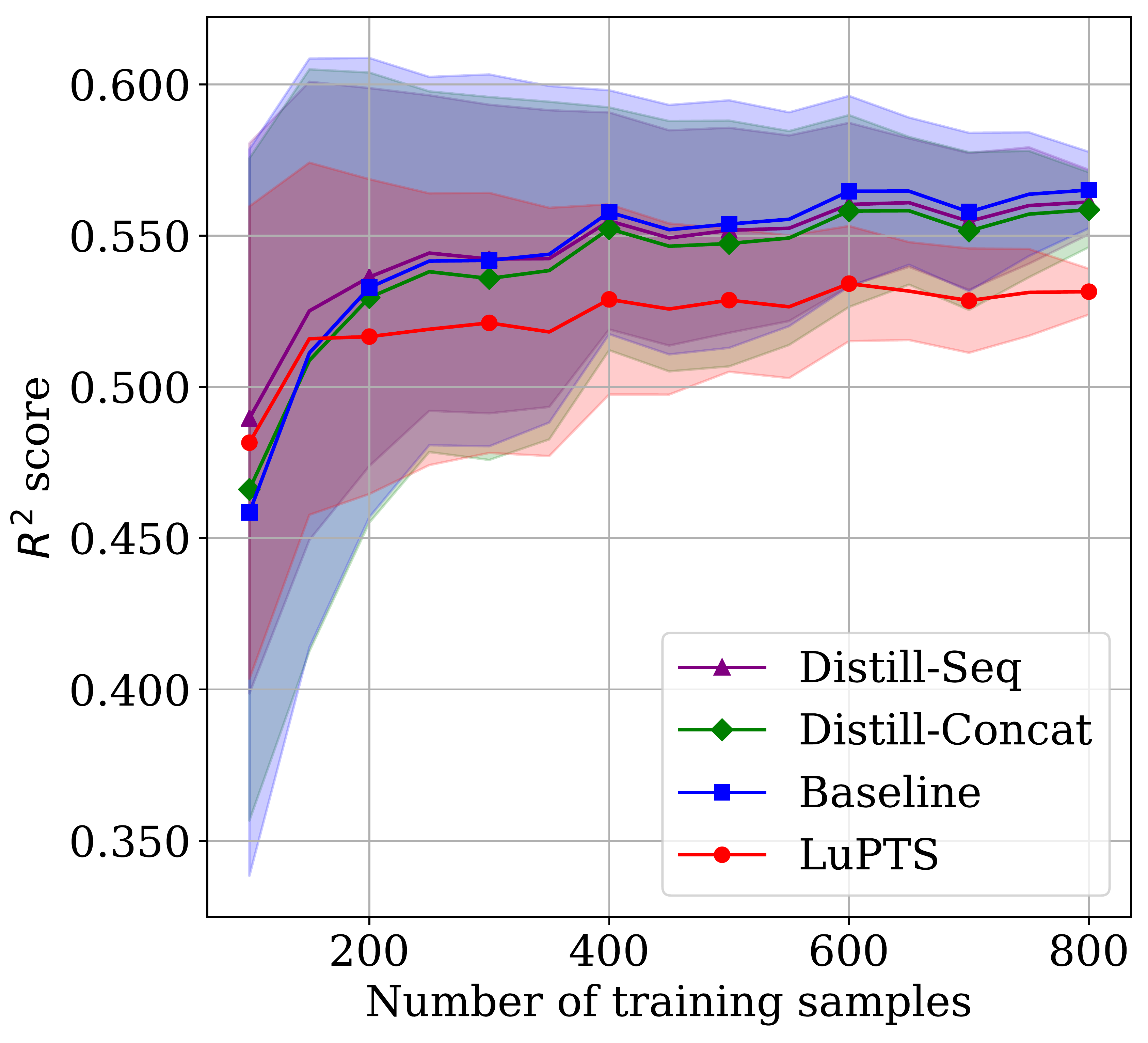}
        \caption{12 hour forecast}
        \label{fig:shenyangT12_distill_app}
    \end{subfigure}
    \caption{\textbf{Shenyang: }\ref{fig:shenyangT6_app}, \ref{fig:shenyangT24_app}) Changing the amount of privileged information for the LUPTS for different time horizons, where the \textit{X} in LuPTS\_\textit{X}PTS indicates the number of privileged time points. \ref{fig:shenyangT6_distill_app}, \ref{fig:shenyangT12_distill_app}) Comparing LuPTS to the distillation-based approaches, which use the same privileged information.
    Metric used is $R^2$ (Higher is better); shaded region indicates one standard deviation across 75 iterations.}
    \label{fig:pm25_shenyang_app}
\end{figure}

\begin{figure}[t!]
    \centering
    \begin{subfigure}[t]{0.22\textwidth}
        \centering
        \includegraphics[width=\textwidth]{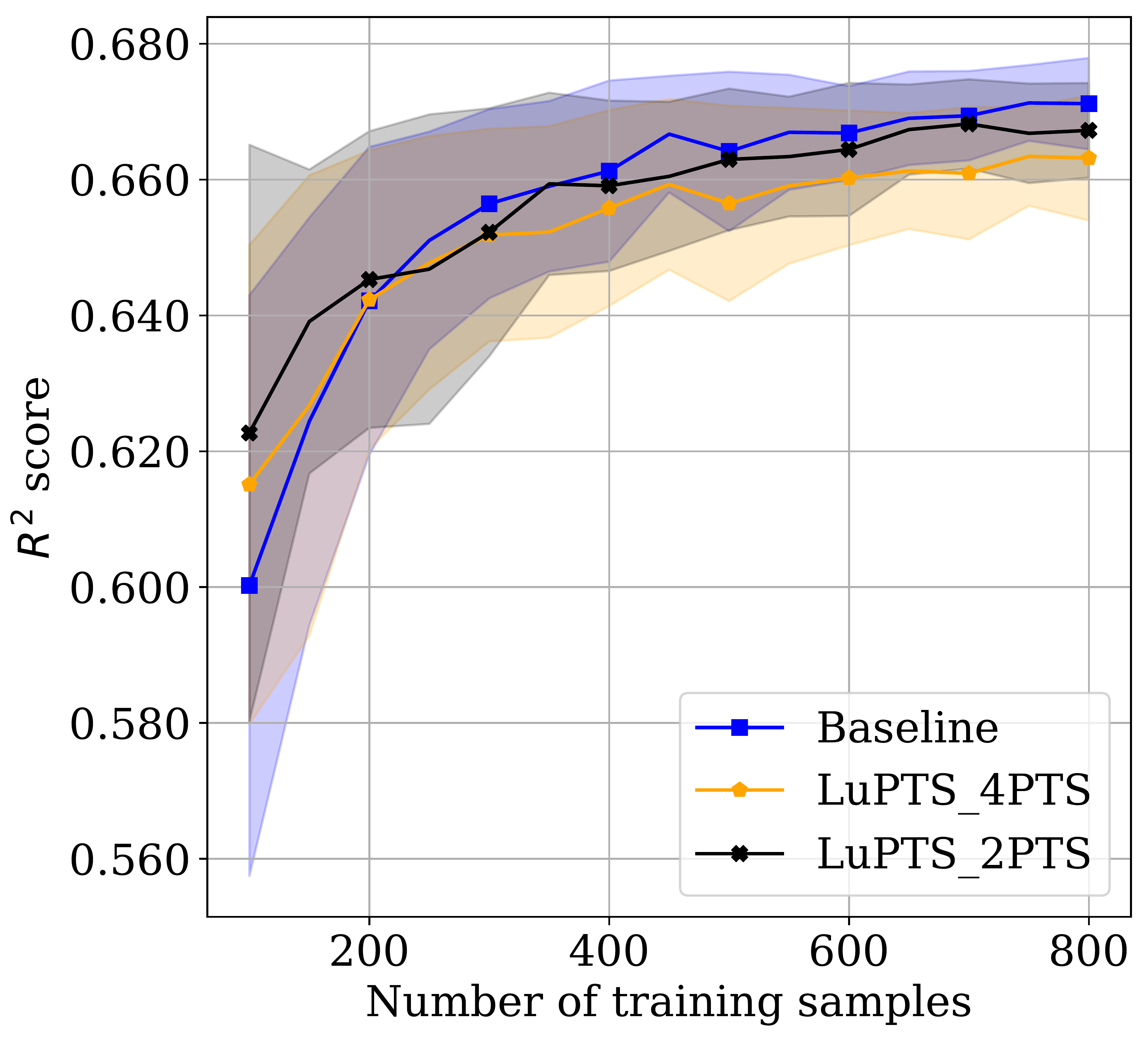}
        \caption{6 hour forecast}
        \label{fig:beijingT6_app}
    \end{subfigure}%
    ~
    \begin{subfigure}[t]{0.22\textwidth}
        \centering
        \includegraphics[width=\textwidth]{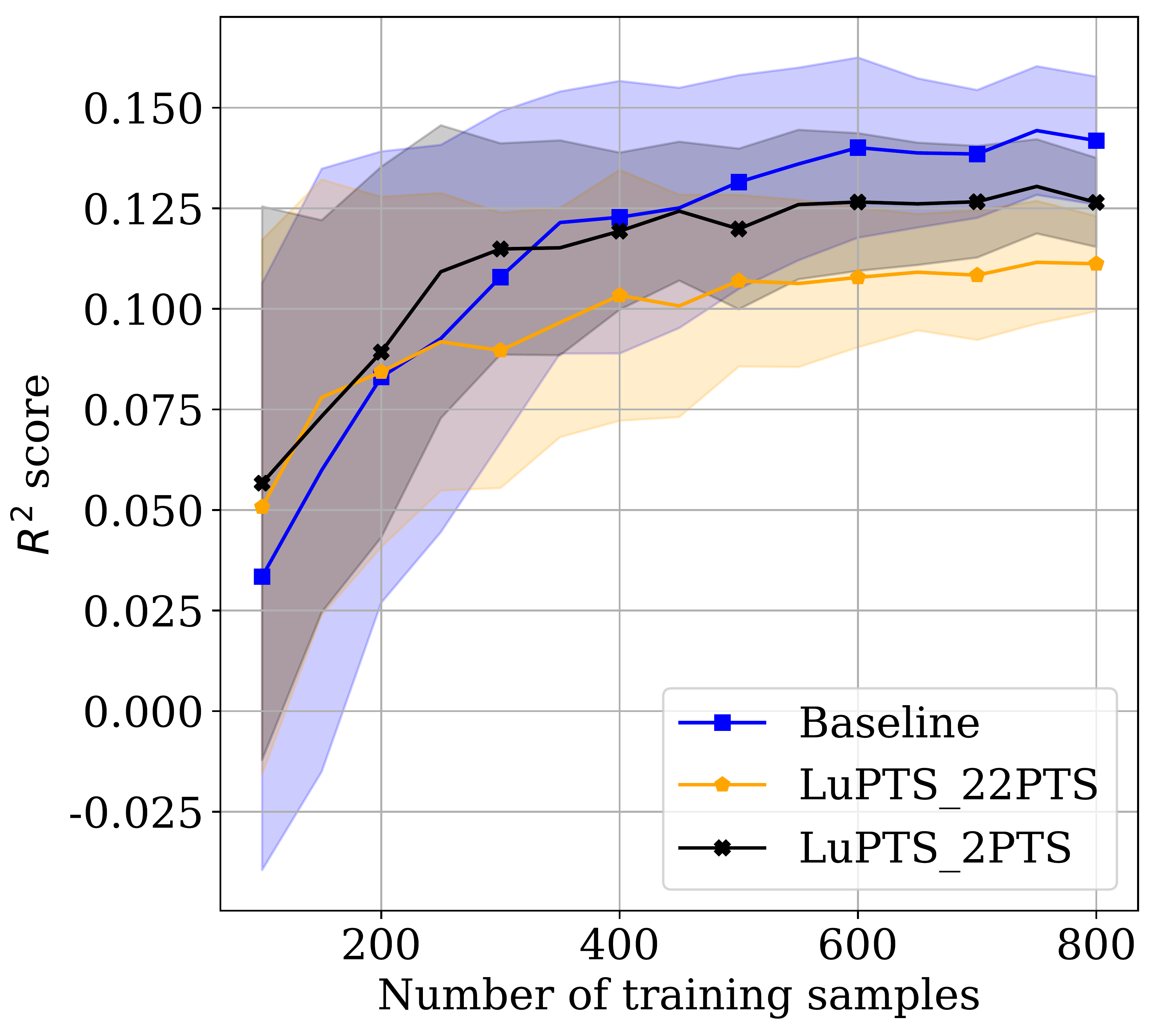}
        \caption{24 hour forecast}
        \label{fig:beijingT24_app}
    \end{subfigure}%
    ~
    \begin{subfigure}[t]{0.22\textwidth}
        \centering
        \includegraphics[width=\textwidth]{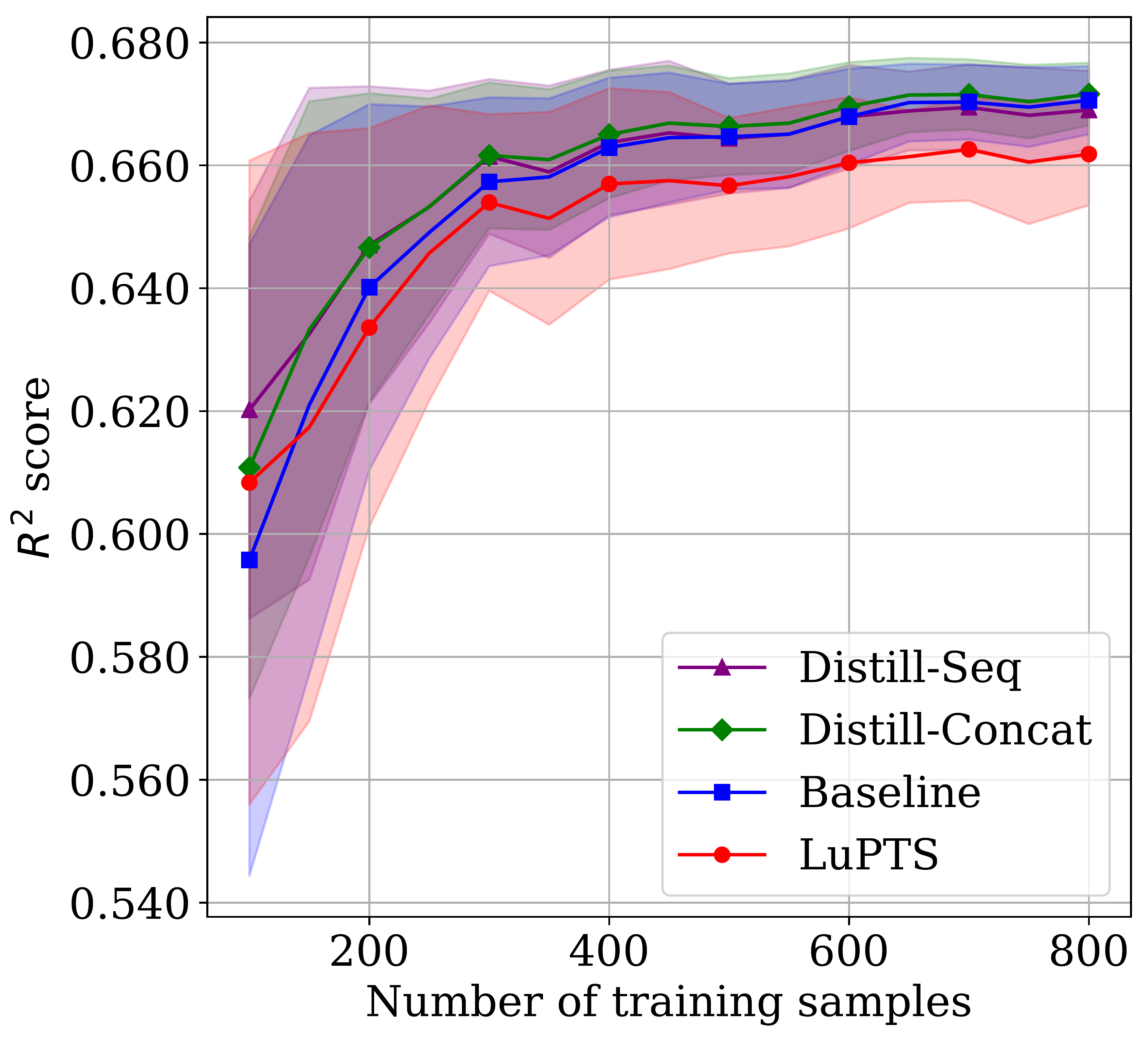}
        \caption{6 hour forecast}
        \label{fig:beijingT6_distill_app}
    \end{subfigure}%
    ~
    \begin{subfigure}[t]{0.22\textwidth}
        \centering
        \includegraphics[width=\textwidth]{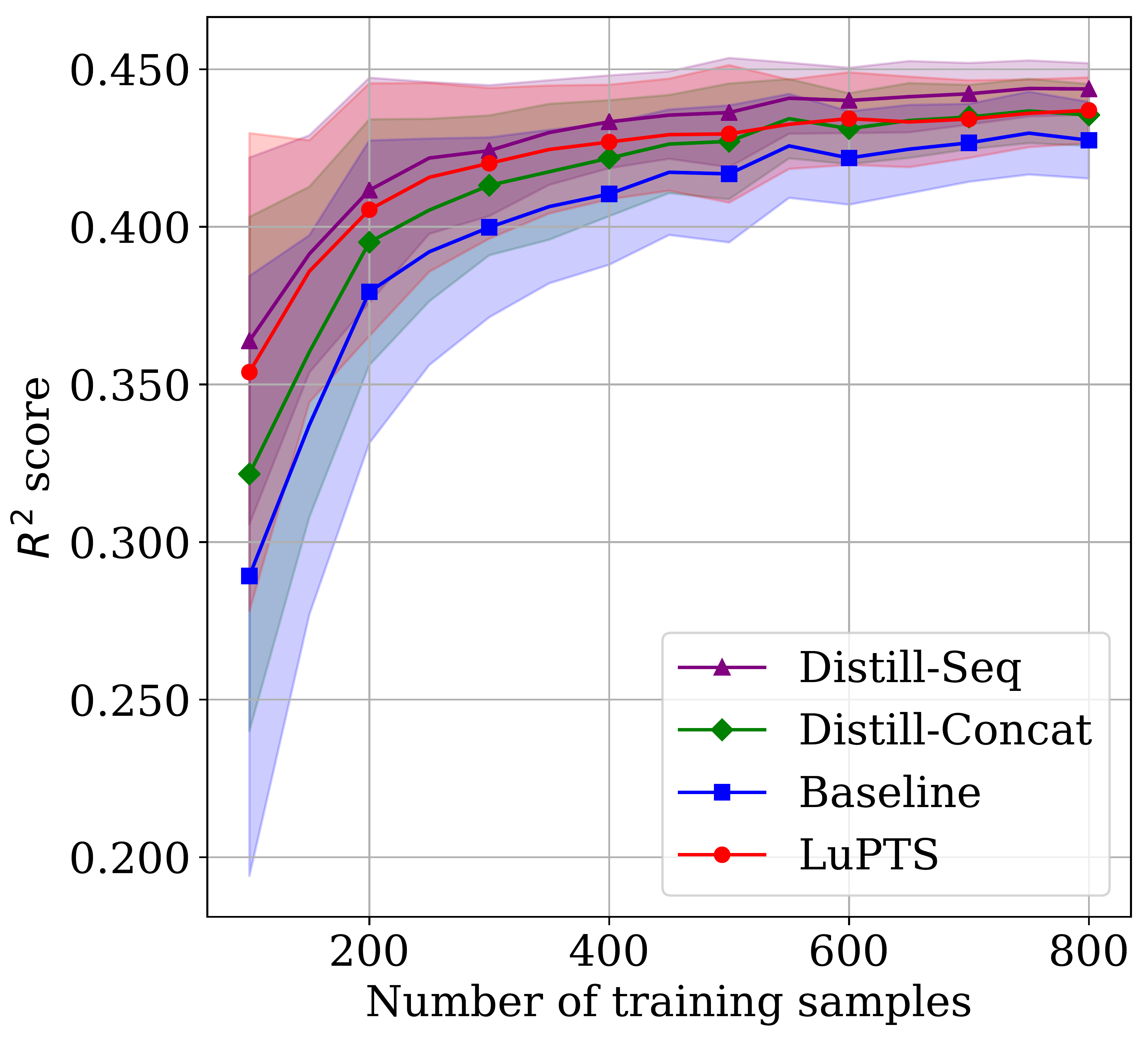}
        \caption{12 hour forecast}
        \label{fig:beijingT12_distill_app}
    \end{subfigure}
    \caption{\textbf{Beijing: }\ref{fig:beijingT6_app}, \ref{fig:beijingT24_app}) Changing the amount of privileged information for the LUPTS for different time horizons, where the \textit{X} in LuPTS\_\textit{X}PTS indicates the number of privileged time points. \ref{fig:beijingT6_distill_app}, \ref{fig:beijingT12_distill_app}) Comparing LuPTS to the distillation-based approaches, which use the same privileged information.
    Metric used is $R^2$ (Higher is better); shaded region indicates one standard deviation across 75 iterations.}
    \label{fig:pm25_beijing_app}
\end{figure}

\begin{figure}[t!]
    \centering
    \begin{subfigure}[t]{0.22\textwidth}
        \centering
        \includegraphics[width=\textwidth]{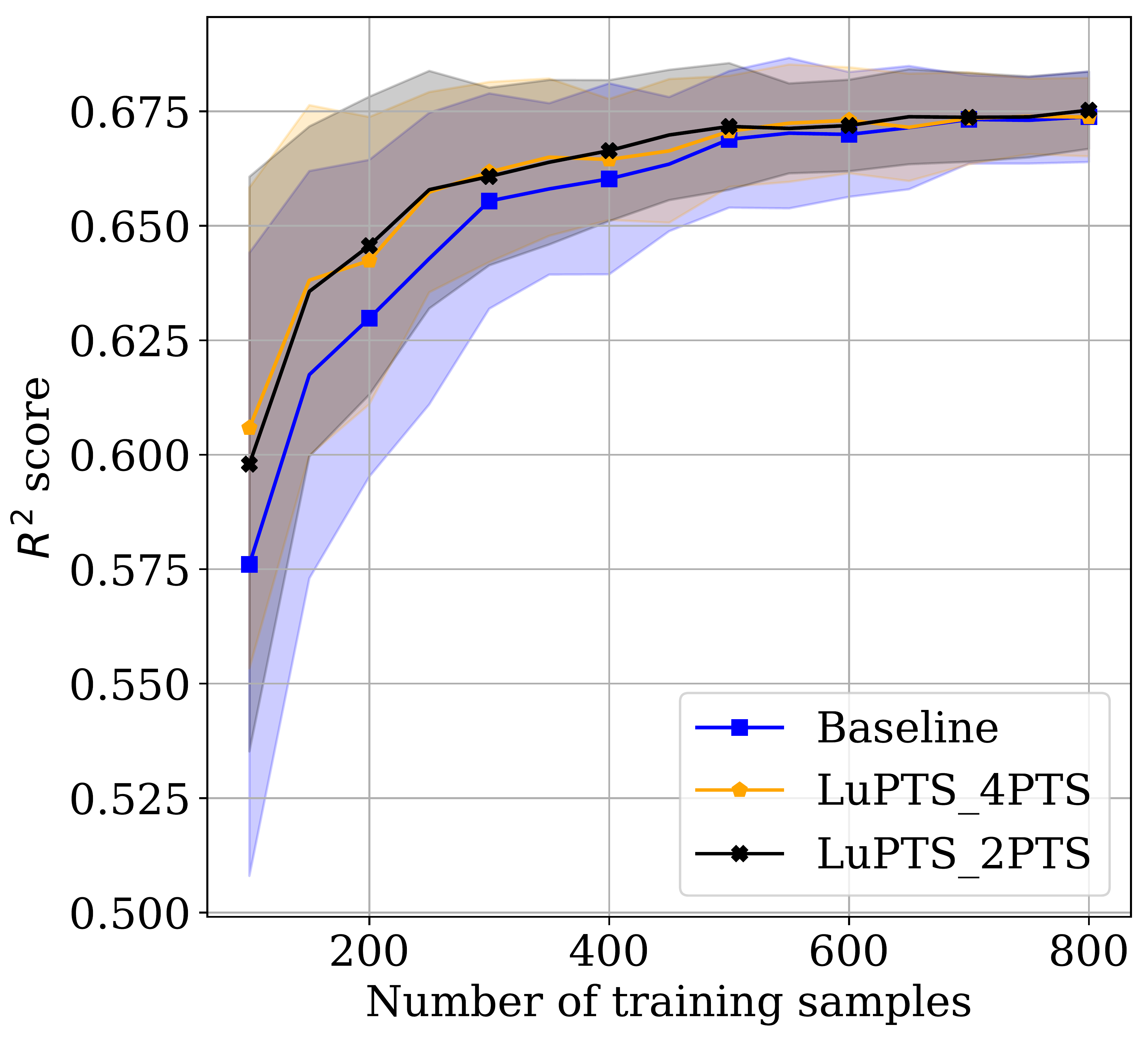}
        \caption{6 hour forecast}
        \label{fig:chengduT6_app}
    \end{subfigure}%
    ~
    \begin{subfigure}[t]{0.22\textwidth}
        \centering
        \includegraphics[width=\textwidth]{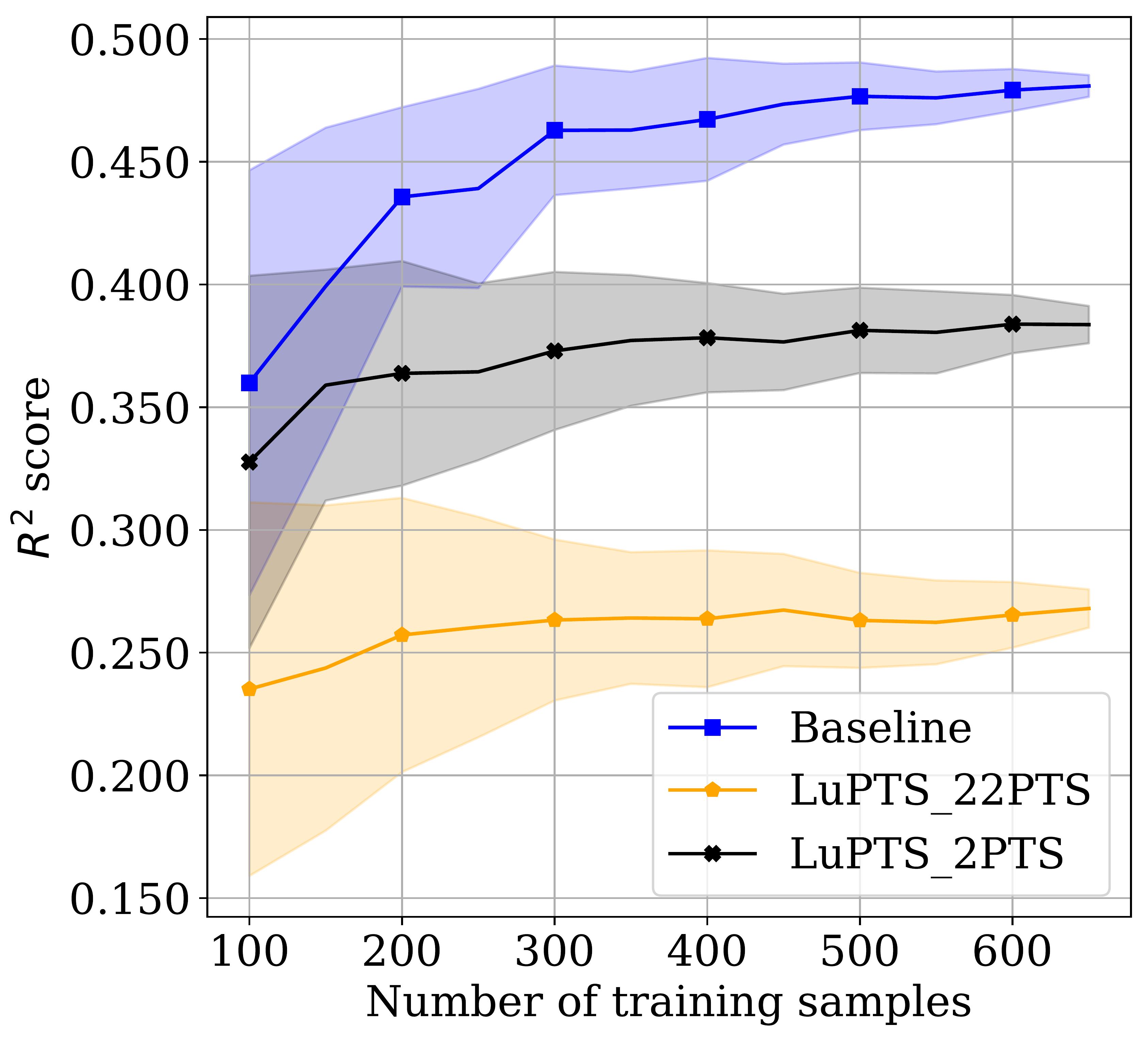}
        \caption{24 hour forecast}
        \label{fig:chengduT24_app}
    \end{subfigure}%
    ~
    \begin{subfigure}[t]{0.22\textwidth}
        \centering
        \includegraphics[width=\textwidth]{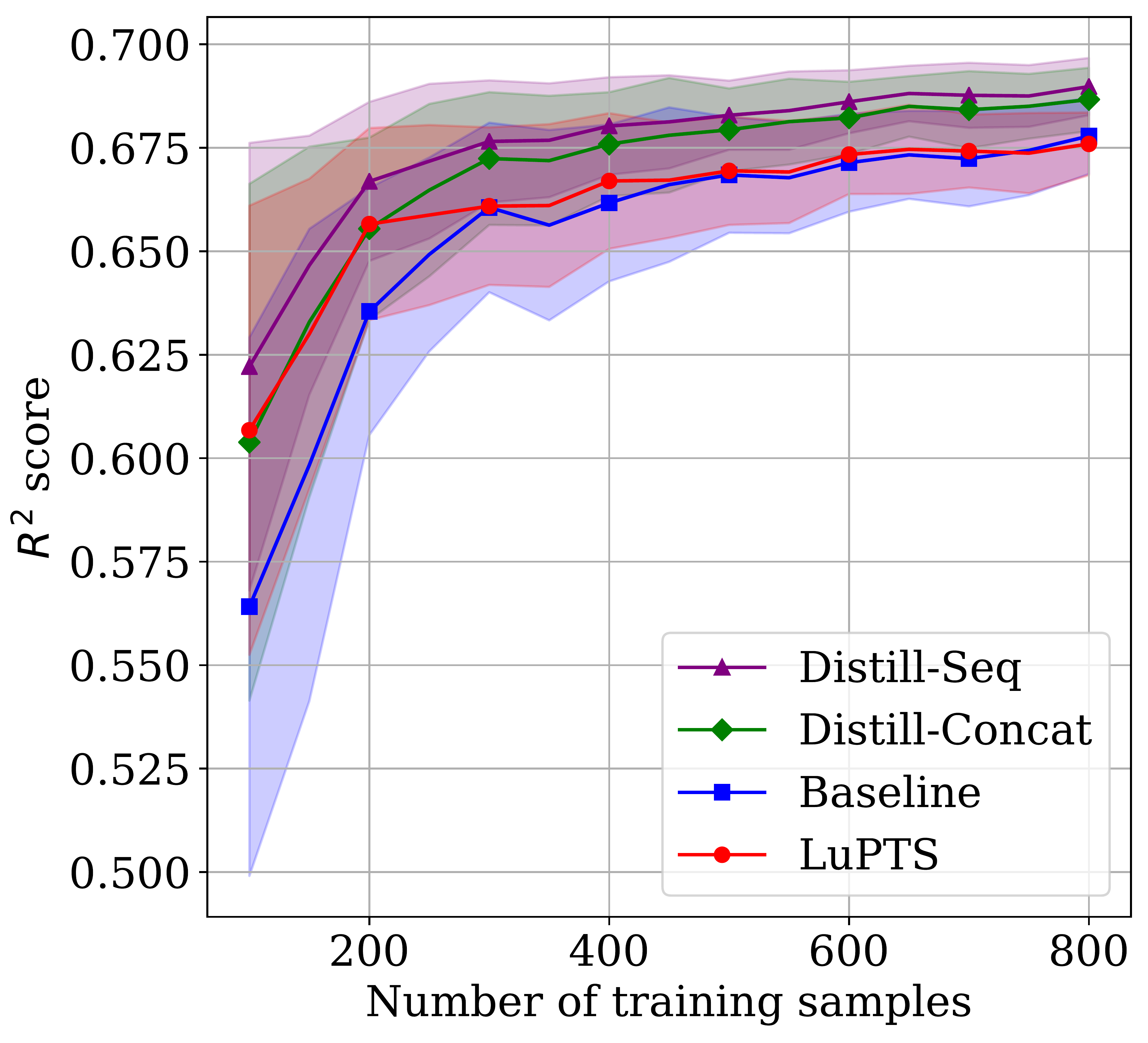}
        \caption{6 hour forecast}
        \label{fig:chengduT6_distill_app}
    \end{subfigure}%
    ~
    \begin{subfigure}[t]{0.22\textwidth}
        \centering
        \includegraphics[width=\textwidth]{fig_png/chinaPM25/experiment_chengdu_ts1_T11-1.png}
        \caption{12 hour forecast}
        \label{fig:chengduT12_distill_app}
    \end{subfigure}
    \caption{\textbf{Chengdu: }\ref{fig:chengduT6_app}, \ref{fig:chengduT24_app}) Changing the amount of privileged information for the LUPTS for different time horizons, where the \textit{X} in LuPTS\_\textit{X}PTS indicates the number of privileged time points. \ref{fig:chengduT6_distill_app}, \ref{fig:chengduT12_distill_app}) Comparing LuPTS to the distillation-based approaches, which use the same privileged information.
    Metric used is $R^2$ (Higher is better); shaded region indicates one standard deviation across 75 iterations.}
    \label{fig:pm25_chengdu_app}
\end{figure}

\begin{figure}[t!]
    \centering
    \begin{subfigure}[t]{0.22\textwidth}
        \centering
        \includegraphics[width=\textwidth]{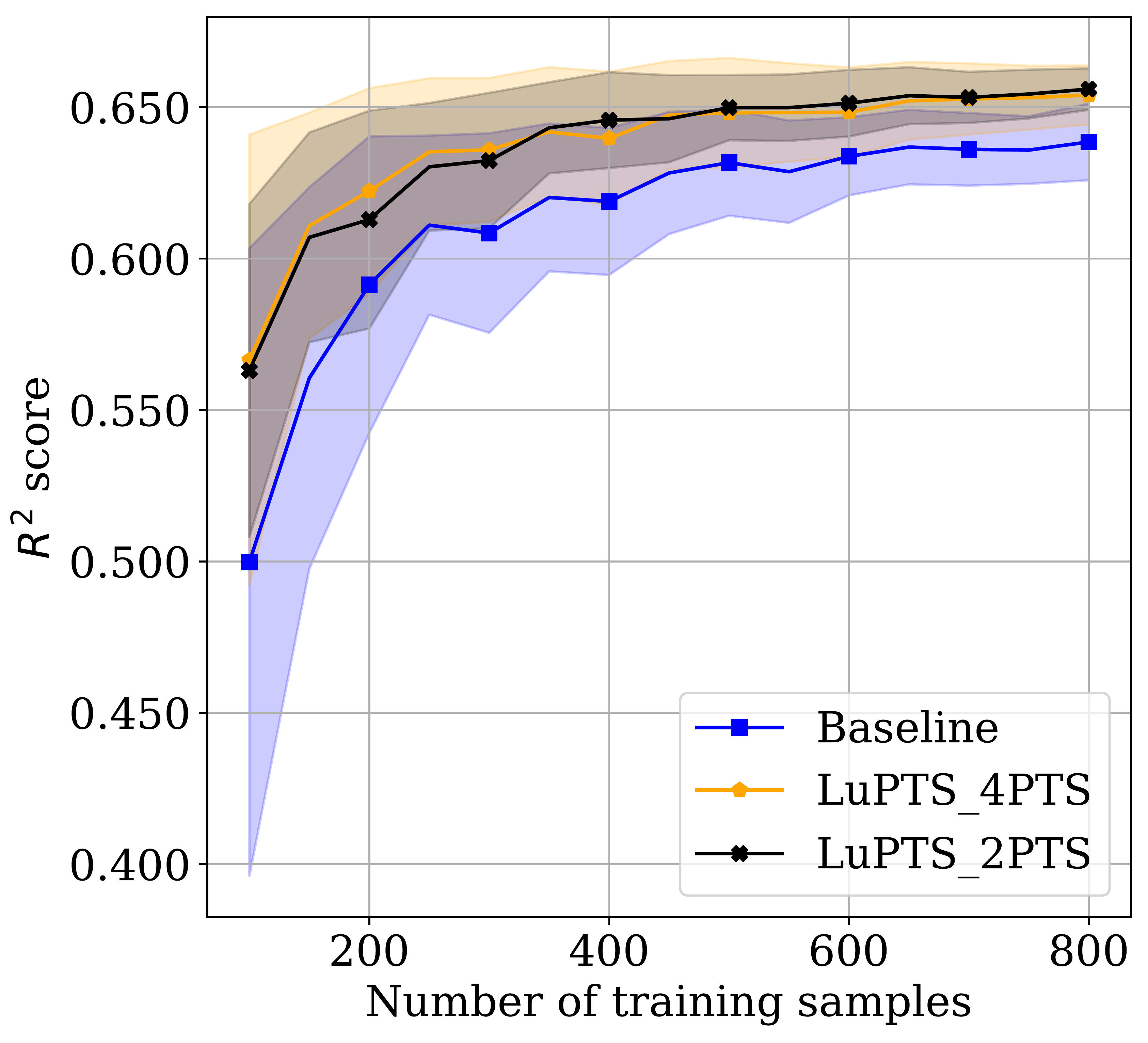}
        \caption{6 hour forecast}
        \label{fig:shanghaiT6_app}
    \end{subfigure}%
    ~
    \begin{subfigure}[t]{0.22\textwidth}
        \centering
        \includegraphics[width=\textwidth]{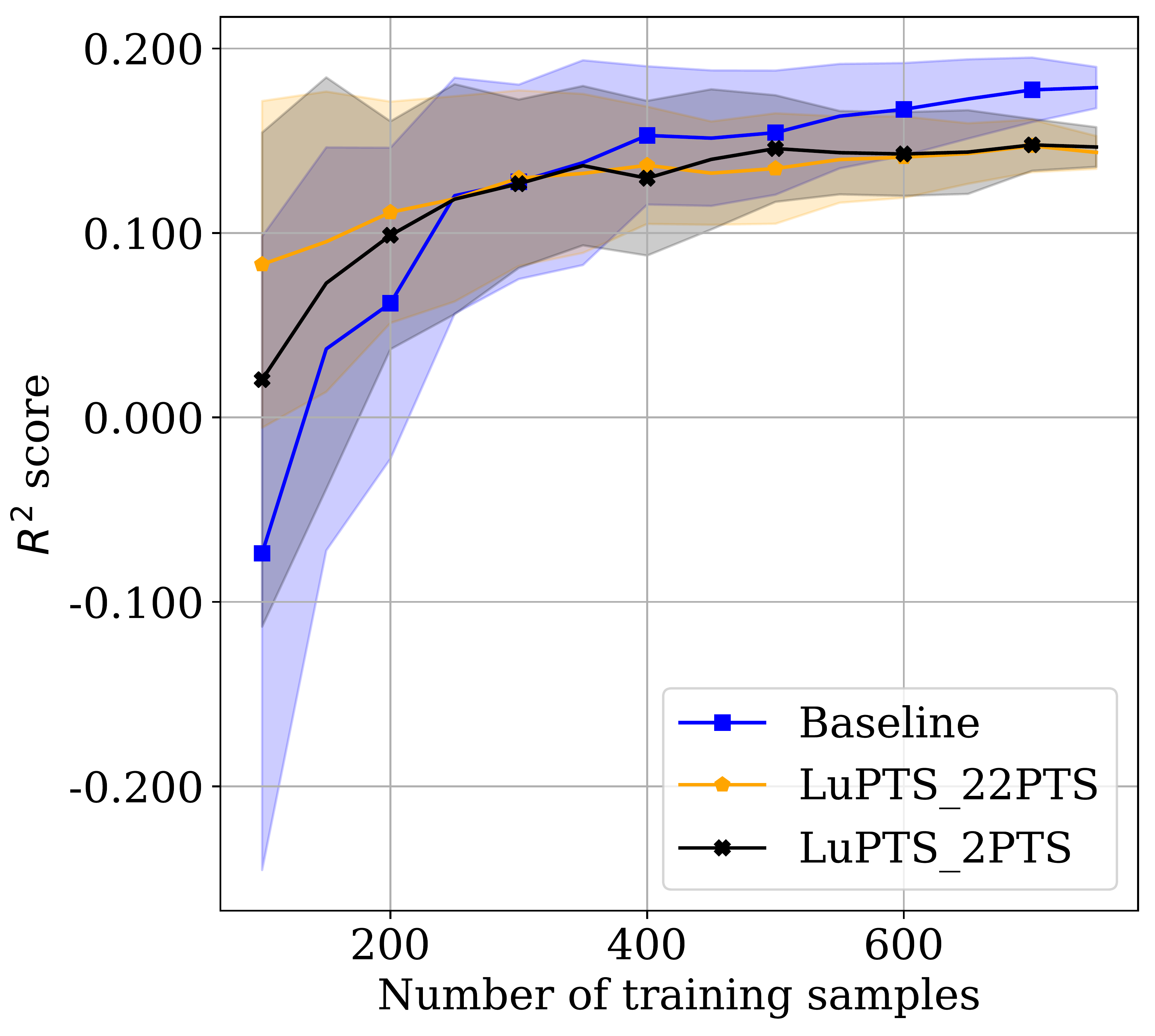}
        \caption{24 hour forecast}
        \label{fig:shanghaiT24_app}
    \end{subfigure}%
    ~
    \begin{subfigure}[t]{0.22\textwidth}
        \centering
        \includegraphics[width=\textwidth]{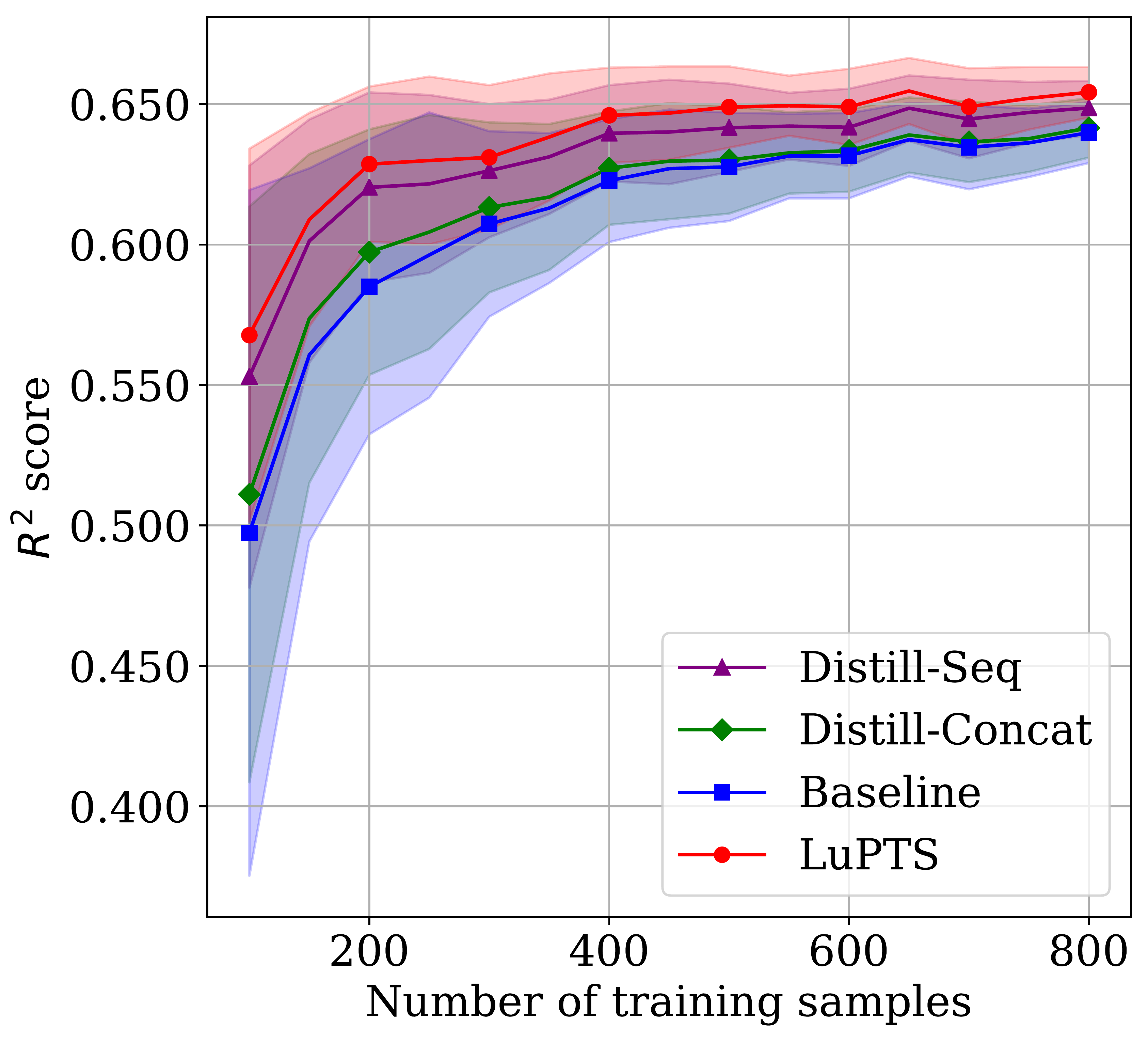}
        \caption{6 hour forecast}
        \label{fig:shanghaiT6_distill_app}
    \end{subfigure}%
    ~
    \begin{subfigure}[t]{0.22\textwidth}
        \centering
        \includegraphics[width=\textwidth]{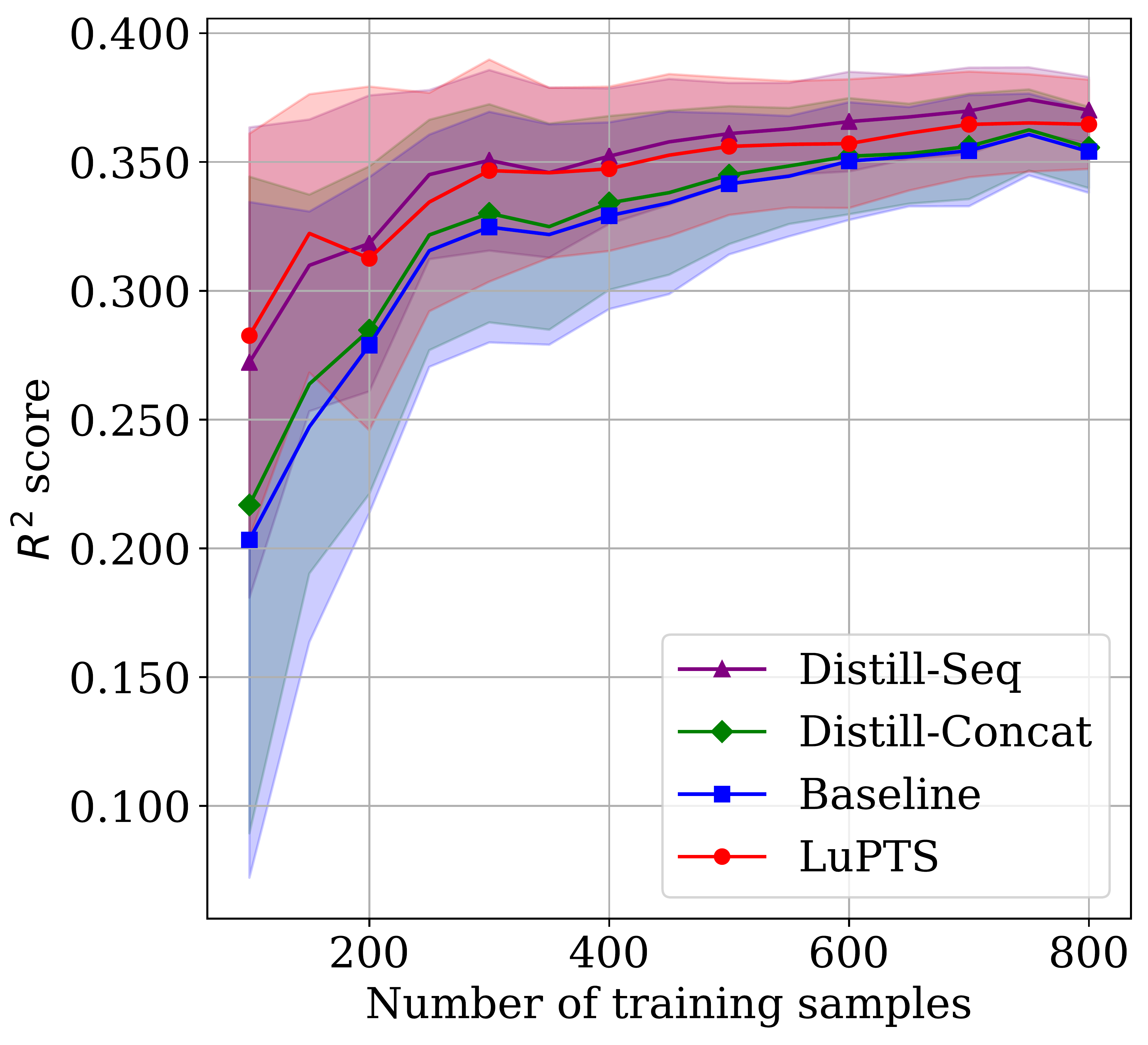}
        \caption{12 hour forecast}
        \label{fig:shanghaiT12_distill_app}
    \end{subfigure}
    \caption{\textbf{Shanghai: }\ref{fig:shanghaiT6_app}, \ref{fig:shanghaiT24_app}) Changing the amount of privileged information for the LUPTS for different time horizons, where the \textit{X} in LuPTS\_\textit{X}PTS indicates the number of privileged time points. \ref{fig:shanghaiT6_distill_app}, \ref{fig:shanghaiT12_distill_app}) Comparing LuPTS to the distillation-based approaches, which use the same privileged information.
    Metric used is $R^2$ (Higher is better); shaded region indicates one standard deviation across 75 iterations.}
    \label{fig:pm25_shanghai_app}
\end{figure}

\begin{figure}[t!]
    \centering
    \begin{subfigure}[t]{0.22\textwidth}
        \centering
        \includegraphics[width=\textwidth]{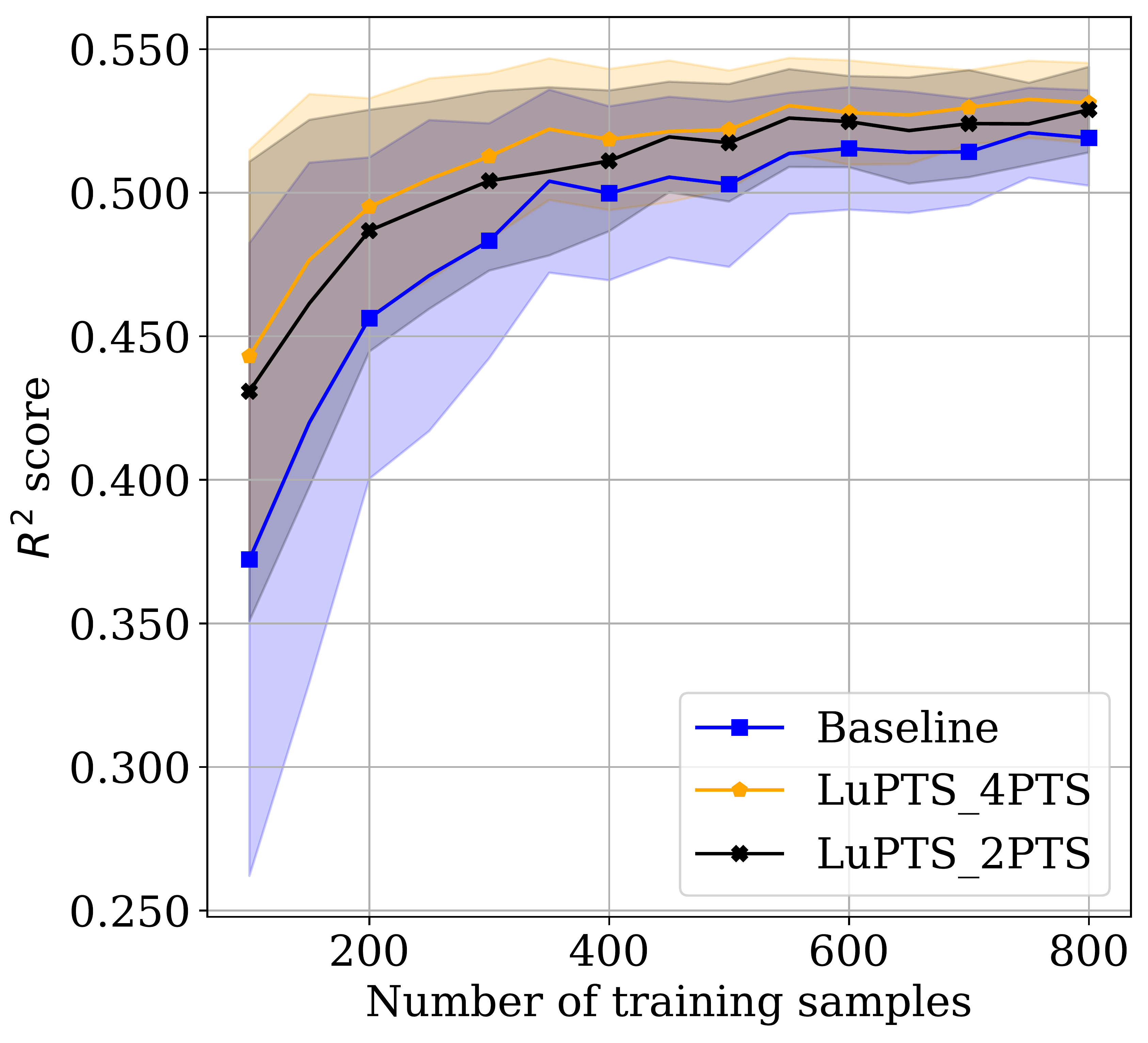}
        \caption{6 hour forecast}
        \label{fig:guangzhouT6_app}
    \end{subfigure}%
    ~
    \begin{subfigure}[t]{0.22\textwidth}
        \centering
        \includegraphics[width=\textwidth]{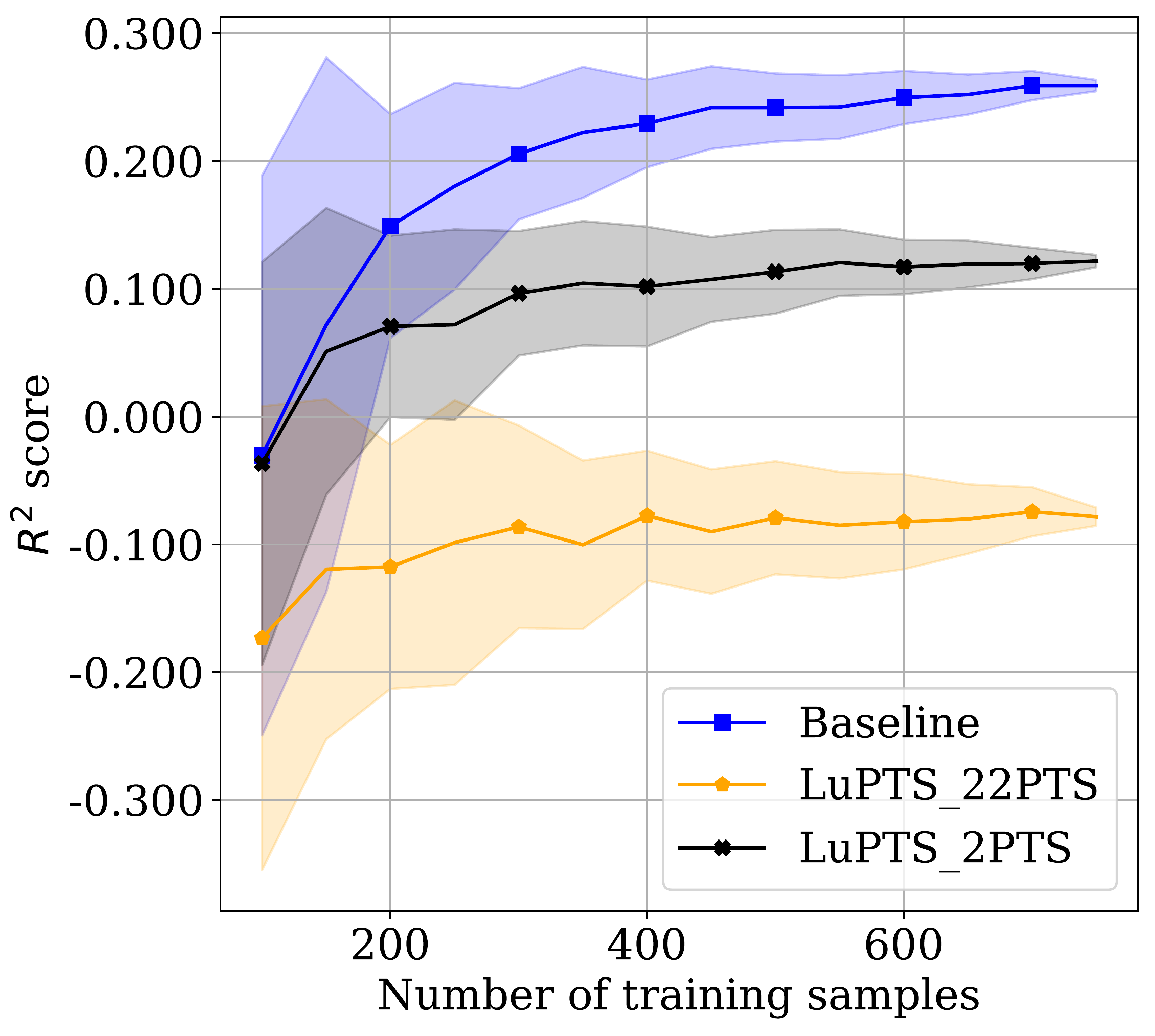}
        \caption{24 hour forecast}
        \label{fig:guangzhouT24_app}
    \end{subfigure}%
    ~
    \begin{subfigure}[t]{0.22\textwidth}
        \centering
        \includegraphics[width=\textwidth]{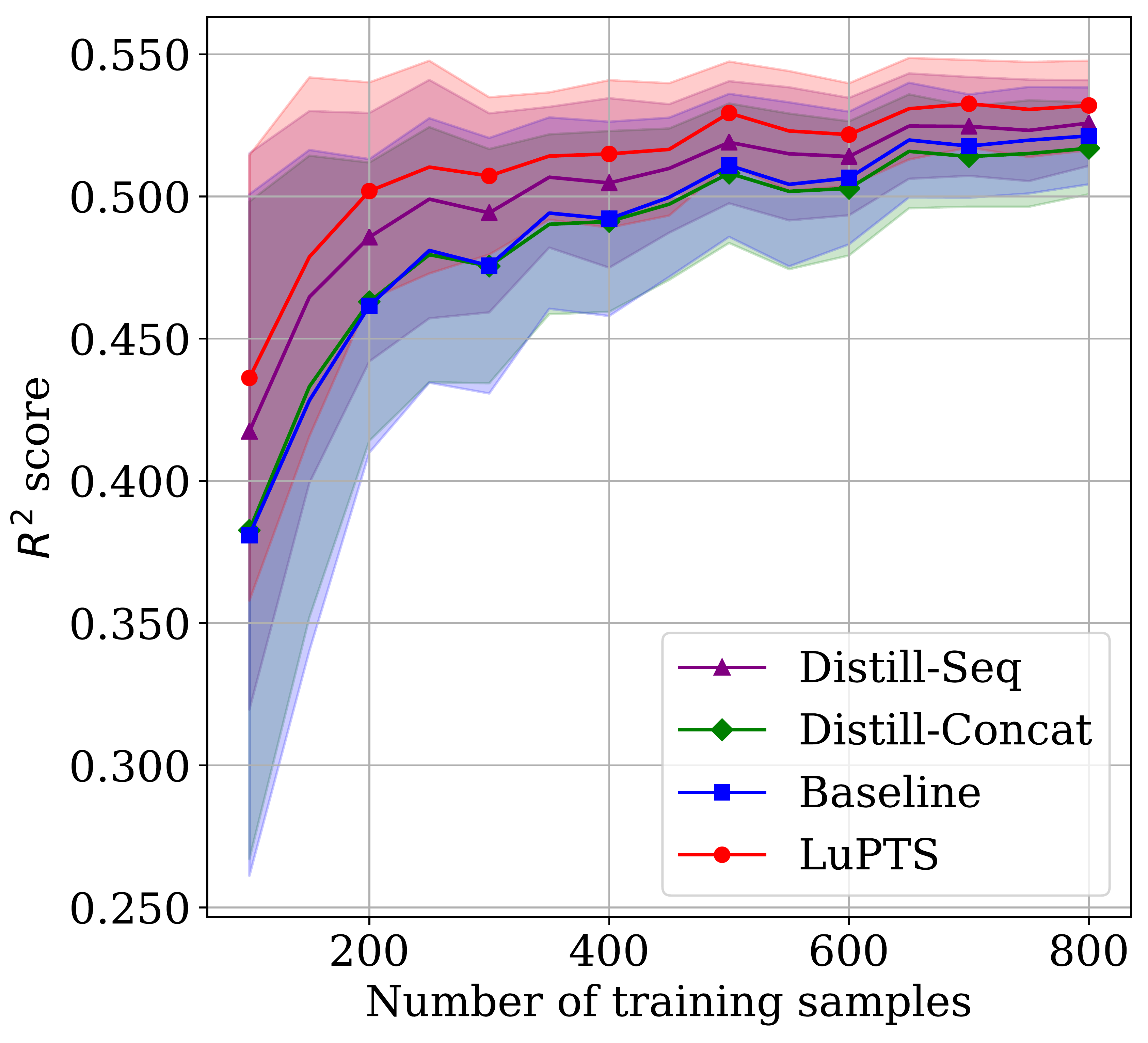}
        \caption{6 hour forecast}
        \label{fig:guangzhouT6_distill_app}
    \end{subfigure}%
    ~
    \begin{subfigure}[t]{0.22\textwidth}
        \centering
        \includegraphics[width=\textwidth]{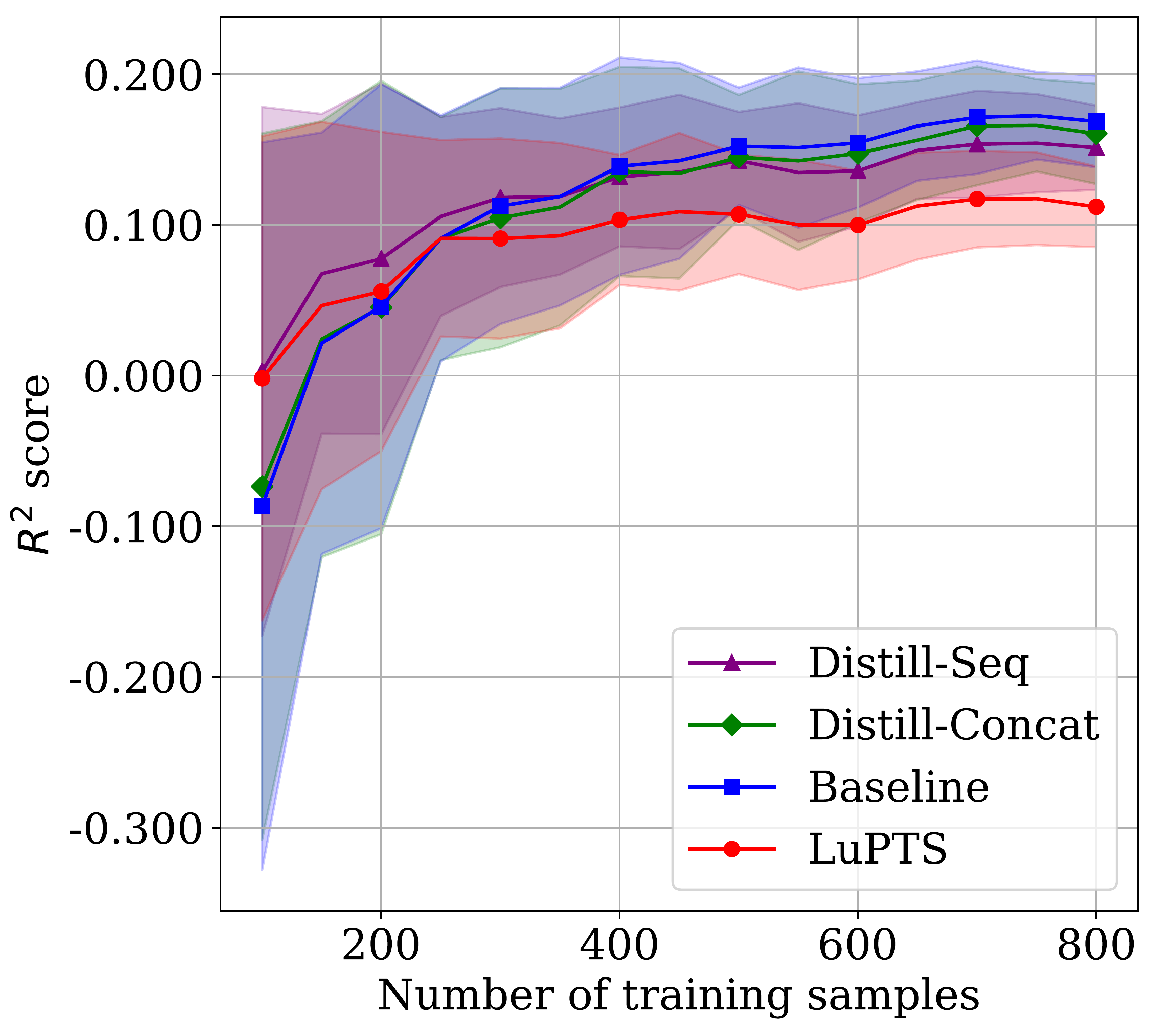}
        \caption{12 hour forecast}
        \label{fig:guangzhouT12_distill_app}
    \end{subfigure}
    \caption{\textbf{Guangzhou: }\ref{fig:guangzhouT6_app}, \ref{fig:guangzhouT24_app}) Changing the amount of privileged information for the LUPTS for different time horizons, where the \textit{X} in LuPTS\_\textit{X}PTS indicates the number of privileged time points. \ref{fig:guangzhouT6_distill_app}, \ref{fig:guangzhouT12_distill_app}) Comparing LuPTS to the distillation-based approaches, which use the same privileged information.
    Metric used is $R^2$ (Higher is better); shaded region indicates one standard deviation across 75 iterations.}
    \label{fig:pm25_guangzhou_app}
\end{figure}

\begin{table}[t!]
    \centering
    \caption{Comparison of regression methods on the air quality forecasting task with a fixed sample size $n=200$ and a prediction horizon of 6 hours. Metric used is $R^2$ (Higher is better); mean value for each method with standard deviation across 200 iterations. The methods with highest $R^2$ and lowest variance are marked in bold for each city.}
    \label{tab:fc_nonlinear_n200_6hour}
    \begin{tabular}{llllll}
    \toprule
        Method &     Beijing &    Shanghai &    Shenyang &     Chengdu &   Guangzhou \\
    \midrule
      Baseline  & 0.64 $\pm$ 0.02 & 0.58 $\pm$ 0.06 & 0.66 $\pm$ 0.04 & 0.63 $\pm$ 0.04 & 0.45 $\pm$ 0.06 \\
         LuPTS  & 0.64 $\pm$ 0.03 & \textbf{0.62 $\pm$ 0.03} & \textbf{0.70 $\pm$ 0.03} & 0.65 $\pm$ 0.03 & 0.49 $\pm$ 0.04 \\
    Stat-LuPTS  & 0.64 $\pm$ 0.02 & \textbf{0.62 $\pm$ 0.04} & 0.69 $\pm$ 0.03 & 0.65 $\pm$ 0.03 & \textbf{0.50 $\pm$ 0.04} \\
    Distill-Seq & \textbf{0.65 $\pm$ 0.02} & \textbf{0.62 $\pm$ 0.03} & 0.68 $\pm$ 0.03 & \textbf{0.67 $\pm$ 0.02} & 0.49 $\pm$ 0.05 \\
Distill-Concat  & \textbf{0.65 $\pm$ 0.02} & 0.60 $\pm$ 0.04 & 0.66 $\pm$ 0.04 & 0.66 $\pm$ 0.03 & 0.46 $\pm$ 0.07 \\
             RF & 0.62 $\pm$ 0.03 & 0.58 $\pm$ 0.07 & 0.53 $\pm$ 0.06 & 0.61 $\pm$ 0.04 & 0.48 $\pm$ 0.05 \\
            KNN & 0.57 $\pm$ 0.04 & 0.51 $\pm$ 0.23 & 0.49 $\pm$ 0.05 & 0.51 $\pm$ 0.04 & 0.26 $\pm$ 0.09 \\
    \bottomrule
    \end{tabular}
\end{table}
\begin{table}[t!]
    \centering
    \caption{Comparison of regression methods on the air quality forecasting task with a fixed sample size $n=200$ and a prediction horizon of 12 hours. Metric used is $R^2$ (Higher is better); mean value for each method with standard deviation across 200 iterations. The methods with highest $R^2$ and lowest variance are marked in bold for each city.}
    \label{tab:fc_nonlinear_n200_12hour}
    \begin{tabular}{llllll}
    \toprule
        Method &     Beijing &    Shanghai &    Shenyang &     Chengdu &    Guangzhou \\
    \midrule
      Baseline & 0.37 $\pm$ 0.05 & 0.29 $\pm$ 0.06 & \textbf{0.53 $\pm$ 0.07} & 0.35 $\pm$ 0.08 &  0.07 $\pm$ 0.11 \\
         LuPTS & 0.40 $\pm$ 0.04 & \textbf{0.33 $\pm$ 0.04} & 0.51 $\pm$ 0.07 & 0.42 $\pm$ 0.04 &  0.05 $\pm$ 0.14 \\
    Stat-LuPTS & 0.40 $\pm$ 0.04 & \textbf{0.33 $\pm$ 0.04} & 0.51 $\pm$ 0.07 & 0.42 $\pm$ 0.04 &  0.05 $\pm$ 0.14 \\
    Distill-Seq& \textbf{0.41 $\pm$ 0.03} & 0.31 $\pm$ 0.04 & 0.52 $\pm$ 0.08 & \textbf{0.43 $\pm$ 0.04} & 0.07 $\pm$ 0.11 \\
Distill-Concat & \textbf{0.41 $\pm$ 0.03} & 0.31 $\pm$ 0.04 & 0.52 $\pm$ 0.08 & \textbf{0.43 $\pm$ 0.04} & 0.07 $\pm$ 0.11 \\
            RF & 0.36 $\pm$ 0.05 & 0.23 $\pm$ 0.08 & 0.40 $\pm$ 0.07 & 0.38 $\pm$ 0.06 &  \textbf{0.14 $\pm$ 0.10} \\
           KNN & 0.30 $\pm$ 0.05 & 0.24 $\pm$ 0.07 & 0.35 $\pm$ 0.06 & 0.31 $\pm$ 0.06 & -0.05 $\pm$ 0.12 \\
    \bottomrule
    \end{tabular}
\end{table}

\newpage
\subsection{Alzheimer’s Progression Modeling}

In this section, we present the entire feature set used for the Alzheimer's disease progression modeling tasks (see Table~\ref{tab:features_adni}). 
All experimental results are also found in tabular form with values rounded to two decimals (see Tables~\ref{tab:MMSE_noselection_table} and~\ref{tab:AD_noselection_table}). Lastly, we give a detailed description of the data pre-processing that was performed for the dataset (ADNIMERGE).

\paragraph*{Pre-processing}
There are a significant number of missing values in the observations from the ADNI dataset. The missingness varies with the time of measurement, as does which subjects are present at certain follow-ups. Hence, a subset of the subjects in the study needs to be selected in order to carry out the experiments. This means that subjects without an observation of the target outcome at the follow-up at 48 months are excluded. Furthermore, it is required that the subjects with an observation of the target at this time point also are present at the intermediate follow-ups used as privileged information, which are 12 months, 24 months and 36 months after baseline. Categorical features, here considered to consist of biological sex (PTGENDER) and APOE4 gene expression, are one-hot encoded.
Additionally, if any feature has more than 70\% of the observations missing for the selected subjects at any of the time points in consideration, they are excluded. The features excluded as a result of this constraint are FDG, ABETA, TAU and PTAU. Finally, mean imputation is used for missing values and the data is zero-mean unit-variance standardized.

\begin{table}[t!]
    \centering
    \caption{Features used for the ADNI experiments}
    \label{tab:features_adni}
    \begin{tabular}{ccc}
        \toprule
        & \textbf{Feature tags} & \\ 
        \midrule
        AGE & PTGENDER & PTEDUCAT\\
        APOE4 & FDG & AV45\\ 
        ABETA & TAU & PTAU\\ 
        CDRSB & ADAS11 & ADAS13\\ 
        ADASQ4 & MMSE & RAVLT\_immediate\\
        RAVLT\_learning & RAVLT\_forgetting & RAVLT\_perc\_forgetting \\
        LDELTOTAL & TRABSCOR & FAQ\\
        MOCA & EcogPtMem & EcogPtLang\\
        EcogPtVisspat & EcogPtPlan & EcogPtOrgan\\
        EcogPtDivatt & EcogPtTotal & EcogSPMem\\
        EcogSPLang & EcogSPVisspat & EcogSPPlan\\
        EcogSPOrgan & EcogSPDivatt & EcogSPTotal\\
        Ventricles & Hippocampus & WholeBrain\\
        Entorhinal & Fusiform &  MidTemp\\
        ICV & & \\
        \bottomrule
    \end{tabular}
\end{table}

\begin{table}[t!]
    \centering
    \caption{MMSE prediction experiment results, average $R^2$ score with one standard deviation in parenthesis from 100 iterations. \textbf{Left}: One privileged time point used. \textbf{Right}: three privileged time points used.}
    \label{tab:MMSE_noselection_table}
    \begin{tabular}{lll}
    \toprule 
    Samples & Baseline     & LuPTS       \\ \midrule
    80      & -0.02 (0.34) & 0.14 (0.24) \\
    100     & 0.12 (0.29)  & 0.27 (0.17) \\
    120     & 0.27 (0.14)  & 0.36 (0.1)  \\
    140     & 0.36 (0.08)  & 0.42 (0.06) \\
    160     & 0.39 (0.08)  & 0.44 (0.05) \\
    180     & 0.42 (0.06)  & 0.46 (0.05) \\
    200     & 0.44 (0.05)  & 0.48 (0.04) \\
    220     & 0.45 (0.05)  & 0.49 (0.04) \\ \bottomrule
    \end{tabular}
    \quad
    \begin{tabular}{llll}
    \toprule
    Samples & Baseline     & LuPTS       & Stat-LuPTS  \\ \midrule
    80      & -0.02 (0.34) & 0.25 (0.16) & 0.39 (0.09) \\
    100     & 0.12 (0.29)  & 0.34 (0.11) & 0.43 (0.06) \\
    120     & 0.27 (0.14)  & 0.41 (0.06) & 0.46 (0.05) \\
    140     & 0.36 (0.08)  & 0.44 (0.05) & 0.47 (0.04) \\
    160     & 0.39 (0.08)  & 0.46 (0.04) & 0.47 (0.04) \\
    180     & 0.42 (0.06)  & 0.48 (0.05) & 0.49 (0.04) \\
    200     & 0.44 (0.05)  & 0.48 (0.04) & 0.49 (0.03) \\
    220     & 0.45 (0.05)  & 0.49 (0.03) & 0.5 (0.03)  \\ \bottomrule
    \end{tabular}

\end{table}

\begin{table}[t!]
    \centering
    \caption{AD prediction experiment results, average AUC with one standard deviation in parenthesis from 100 iterations \textbf{Left}: one privileged time point used. \textbf{Right}: three privileged time points used.}
    \label{tab:AD_noselection_table}
    \begin{tabular}{lll}
    \toprule
    Samples & Baseline    & LuPTS       \\ \midrule
    80      & 0.86 (0.04) & 0.85 (0.04) \\
    100     & 0.86 (0.06) & 0.87 (0.04) \\
    120     & 0.88 (0.03) & 0.9 (0.03)  \\
    140     & 0.89 (0.03) & 0.9 (0.02)  \\
    160     & 0.9 (0.02)  & 0.91 (0.02) \\
    180     & 0.9 (0.02)  & 0.92 (0.02) \\
    200     & 0.91 (0.02) & 0.92 (0.02) \\
    220     & 0.91 (0.02) & 0.92 (0.02) \\ \bottomrule
    \end{tabular}
    \quad
    \begin{tabular}{llll}
    \toprule
    Samples & Baseline    & LuPTS       & Stat-LuPTS  \\ \midrule
    80      & 0.86 (0.04) & 0.87 (0.04) & 0.9 (0.03)  \\
    100     & 0.86 (0.06) & 0.89 (0.03) & 0.91 (0.02) \\
    120     & 0.88 (0.03) & 0.91 (0.02) & 0.92 (0.02) \\
    140     & 0.89 (0.03) & 0.91 (0.02) & 0.92 (0.02) \\
    160     & 0.9 (0.02)  & 0.92 (0.02) & 0.93 (0.02) \\
    180     & 0.9 (0.02)  & 0.92 (0.02) & 0.93 (0.02) \\
    200     & 0.91 (0.02) & 0.93 (0.02) & 0.93 (0.02) \\
    220     & 0.91 (0.02) & 0.93 (0.02) & 0.93 (0.02) \\ \bottomrule
    \end{tabular}
\end{table}

\subsection{Multiple Myeloma Progression Modeling}
We elaborate on the specific features used for the multiple myeloma prediction tasks, as well as the preprocessing done on those features. The data is available via the Multiple Myeloma Research Foundation (MMRF) Researcher Gateway: \url{https://research.themmrf.org/}. 

\paragraph{Features} Patient biomarkers are real-valued numbers whose values evolve over time. They include: 
absolute neutrophil count (x$10^{9}$/l), albumin (g/l), blood urea nitrogen (mmol/l), calcium (mmol/l), serum creatinine (umol/l), glucose (mmol/l), hemoglobin (mmol/l), serum kappa (mg/dl),
serum m protein (g/dl),  platelet count x$10^9$/l, total protein (g/dl), white blood count  x$10^9$/l,  serum IgA (g/l), serum IgG (g/l), serum IgM (g/l), serum lambda (mg/dl).

We also have access to a set of static features, which we assume are available at each time step. These include demographics, risk metrics, and genomic data. With respect to the genomic features, RNA-sequencing data of CD38+ bone marrow cells are available for 769 patients. Samples from patients were collected at initiation into the study. For these patients, we use the Scanpy package in Python to identify the top 200 most variable genes, limiting the downstream analyses to these genes \citep{wolf2018scanpy}. Subsequently, we use principal component analysis (PCA) to further reduce the dimensionality of the RNA-seq data. The projection of each patient's RNA-seq data onto the first 40 principal components serves as the final genetic features in the model. 

Other static features include gender, age, and revised ISS stage, a common risk stratification score used in myeloma \citep{palumbo2015revised}. Finally, binary variables detailing the patient's myeloma subtype, including whether or not they have heavy chain myeloma and presence/absence of various monoclonal proteins, are part of this set of features as well.

\paragraph{Pre-processing}
We utilize the same preprocessing strategy used by \citet{hussain2021neural}. For the longitudinal patient biomarkers, we first clip the values to five times the median values to account for outliers or errors in the data. The biomarkers are then normalized by subtracting the maximum value of the biomarker's healthy range from the unnormalized value. Subsequently, the value is multiplied by a biomarker-dependent scaling factor that ensures that it lies within the range, $[-8,8]$. Aside from PCA done on the genomic data, we do zero mean, unit variance standardization on all the static features. 

The data has significant missingness, with around $\sim 66\%$ of the values missing. For static features aside from the genetic features, we use mean imputation. For the genetic features, a patient's missing values are imputed with the average genetic PCA features of their five nearest neighbors, which are determined using the Minkowski distance calculated on the ISS stage, age, and other demographic features. The missing values in the longitudinal biomarkers are forward-fill imputed from the previous 2-month time point. 

\paragraph{Evaluation}
We do repeated (50 repeats) 2-fold cross validation with different training and test splits across multiple training set sizes. For the early/late progression task, we exclude patients who are not eligible for autologous stem cell transplant (ASCT), resulting in 314 late progressors, 103 early progressors, and 84 right-censored patients. The privileged information for this task consists of labs taken at four equally spaced time points across the patient's first line. For the treatment response task, we restrict to patients who were given a second line of therapy, resulting in 149 PD patients, 181 non-PD patients, and 48 right-censored patients. We use two privileged time points in this case, which correspond to the end of the first line and the end of the second line, respectively. Censored patients are not included in computing AUCs. All labels have been checked with an oncologist for reliability.

\begin{figure}[t]
    \centering
    \caption{\textbf{Heatmap of Feature Weights for Early/Late Progression Task}: We display weights of the outcome logistic regression model for LuPTS and baseline OLS estimators. The x-axis shows each feature. Blue and white bounding boxes are around most important features for the LuPTS and OLS estimators, respectively.}
    \includegraphics[width=\textwidth]{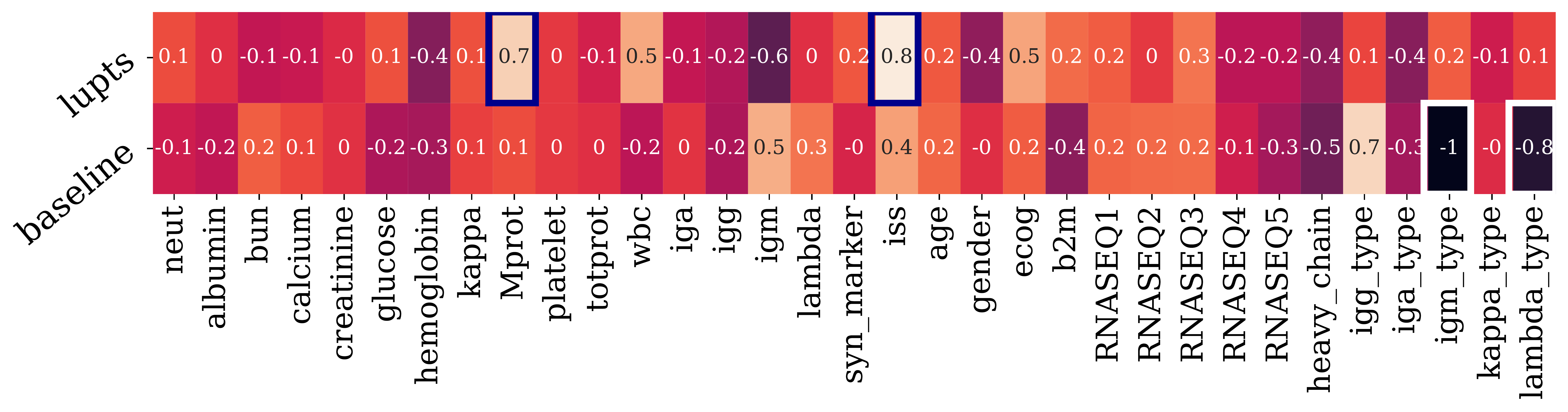}
    \label{fig:mm_qual}
\end{figure}

\end{document}